\documentclass{article}

    \PassOptionsToPackage{numbers, sort&compress}{natbib}
\usepackage[preprint]{neurips_2026}

\usepackage[utf8]{inputenc} 
\usepackage[T1]{fontenc}    
\usepackage{hyperref}       
\usepackage{url}            
\usepackage{booktabs}       
\usepackage{amsfonts}       
\usepackage{nicefrac}       
\usepackage{microtype}      
\usepackage{xcolor}         

\usepackage{wrapfig}
\usepackage{xcolor}         
\usepackage{caption}
\usepackage{tabularx}
\usepackage{enumitem}
\usepackage{float}

\usepackage{multirow}
\usepackage{kotex}
\usepackage{amsmath}
\usepackage{amssymb}
\usepackage{mathtools}
\usepackage{amsthm}
\usepackage{graphicx}

\usepackage{booktabs}
\usepackage{multirow}
\usepackage{graphicx}
\usepackage{bm}

\newcommand{\best}[1]{\boldmath\textbf{#1}\unboldmath}

\usepackage{algorithm}
\usepackage{algpseudocode}
\usepackage{float}

\theoremstyle{plain}
\newtheorem{theorem}{Theorem}[section]
\newtheorem{proposition}[theorem]{Proposition}
\newtheorem{lemma}[theorem]{Lemma}
\newtheorem{corollary}[theorem]{Corollary}
\theoremstyle{definition}

\theoremstyle{remark}
\newtheorem{remark}[theorem]{Remark}

\title{Rethinking Forward Processes for Score-Based Nonlinear Data Assimilation in High Dimensions}

\author{%
\setlength{\tabcolsep}{2.5em}
\begin{tabular}{cc}
\textbf{Eunbi Yoon} & \textbf{Won Chang} \\
{\normalfont KAIST} & {\normalfont Seoul National University} \\
\texttt{eunbiyoon6286@kaist.ac.kr} & \texttt{wonchang@snu.ac.kr}
\\[1.0em]
\textbf{Donghan Kim\thanks{Corresponding authors.}} 
& \textbf{Dae Wook Kim\textsuperscript{$\dagger$}} \\
{\normalfont KAIST} & {\normalfont KAIST} \\
\texttt{kimdonghan@kaist.ac.kr} & \texttt{daewook@kaist.ac.kr}
\end{tabular}
}
\begin{document}

\maketitle

\begin{abstract}
Data assimilation is the process of estimating the state of a dynamical system over time by combining model predictions with measurements.
This task becomes challenging when the system is nonlinear and high-dimensional. 
To address this, score-based Bayesian filters have recently emerged. 
However, these methods still show unsatisfactory performance in certain cases, particularly under spatially sparse measurements. 
Such degradation stems from heuristic approximations of the likelihood score, whose errors can accumulate over time.
This limitation arises because the methods simply adopt a classical forward process for generative modeling that transforms a data distribution toward a Gaussian distribution, which is independent of the measurement equation.
Here, we propose a forward process tailored for filtering that transforms the system state toward the measurement space, enabling a theoretically sound formulation of the likelihood score.
Based on this, we develop the Measurement-Aware Score-based Filter (MASF).
We evaluate MASF on Kolmogorov flow, a high-dimensional fluid benchmark with up to $\mathcal{O}(10^5)$ dimensions, under diverse measurement operators, including nonlinear cases with a dimensional mismatch between the state and the measurements.
MASF shows improved performance over existing score-based filters and ensemble-type Kalman filters.
Notably, MASF achieves up to a $28.2\times$ wall-clock speedup compared with the baselines when using amortized pretraining.
Our implementation is available at \texttt{https://github.com/tcnllab-oss/masf}.
\end{abstract}

\section{Introduction}
\label{sec:introduction}

Data assimilation estimates the state of a dynamical system over time by combining model predictions with measurements~\cite{evensen2009enkf,reich2015probabilistic}.
It arises in a broad range of domains where time-evolving dynamics must be inferred from incomplete measurements, including geophysical forecasting and biological processes~\cite{Chipilski2020PECANProfilers, Aksoy2009EnKF_Radar_PartI, Bao2021AlopeciaAssimilation}.
The data assimilation problem is typically solved using Bayesian filtering, which alternates between a time-update step that propagates the current state under the state equation and a measurement-update step that corrects the prediction using measurements from the measurement equation~\cite{sarkka2013bayesian,asch2016data}.
However, carrying out these updates exactly is rarely feasible in high-dimensional nonlinear systems because they involve integrals that typically lack closed-form expressions~\cite{sarkka2013bayesian,doucet2001smc}.

To circumvent this intractability, two major classes of approximate Bayesian filters have been proposed.
The first class consists of ensemble-based Gaussian filters, including the ensemble Kalman filter (EnKF) and the local ensemble transform Kalman filter (LETKF)~\cite{kalman1960new,Evensen2009EnKF_StateParam,Whitaker2002EnKFNoPertObs,hunt2007letkf}.
These methods approximate the posterior under Gaussian assumptions by propagating ensemble-based moment estimates.
Although computationally efficient, their accuracy can degrade when the posterior is strongly non-Gaussian or when the state and measurement equations are highly nonlinear~\cite{asch2016data,Houtekamer1998EnK}. 
The second class consists of particle filters. 
They approximate the posterior with weighted samples and can represent non-Gaussian distributions more flexibly, but suffer from weight degeneracy in high-dimensional settings~\cite{Gordon1993PFNovel,Andrieu2010PMCMC,arulampalam2002tutorial,snyder2008obstacles}.

To address these limitations, recent studies~\cite{rozet2023sda,bao2024sf,bao2024ensf,ding2025ssls} have incorporated score-based generative modeling into Bayesian filtering. 
This is motivated by the ability of score-based models to sample from complex high-dimensional distributions by learning score functions, i.e., gradients of log-densities~\cite{song2019generative,song2021score,dhariwal2021diffusion,hyvarinen2005score,vincent2011connection}. 
In score-based filtering, a score model learns the prior score on perturbed states generated by a forward process, and this prior score is combined with a likelihood score during the measurement-update step to generate posterior samples.

Although score-based filtering has shown meaningful success, there is still room for improvement. 
For example, the Score-based Filter (SF) uses a forward process to transform the state distribution into a Gaussian distribution~\cite{bao2024sf}.
This makes the likelihood score at perturbed states intractable. 
As a result, a heuristic single point-mass approximation of the likelihood score with a damping function is adopted, where the choice of the damping function largely determines the performance.
To mitigate this issue, Score-based Sequential Langevin Sampling (SSLS) has been proposed, which targets the posterior distribution via Langevin Monte Carlo~\cite{ding2025ssls}.
However, this Langevin-based sampling relies on annealing over
noise levels, which can substantially increase the number of sampling steps and make inference computationally demanding~\cite{song2021score}.

Furthermore, most importantly, existing score-based methods learn the prior score from a forward process that does not reflect the relationship between the system state and the measurements specified by the measurement equation. 
As a result, measurement information enters the measurement-update step only through a heuristic likelihood score.
This can lead to degraded performance, particularly under spatially sparse measurements, where the likelihood score may be zero on unmeasured coordinates or highly localized near measured regions.

Here, we propose the Measurement-Aware Score-based Filter (MASF), which incorporates the measurement equation directly into the forward process and formulates the likelihood score at perturbed states along the reverse-time sampling trajectory.
This likelihood-score formulation is exact for linear measurement operators and provides a principled approximation for nonlinear operators.
Furthermore, MASF can achieve lower computational cost than ensemble-based Kalman filters through amortized pretraining and a lightweight model.
This addresses the computational burden of retraining the score model at each assimilation step, which has long been a major obstacle to the scalability of score-based filtering. 
Our main contributions are as follows.

\begin{figure}[t]
    \centering
    \includegraphics[width=1\linewidth]{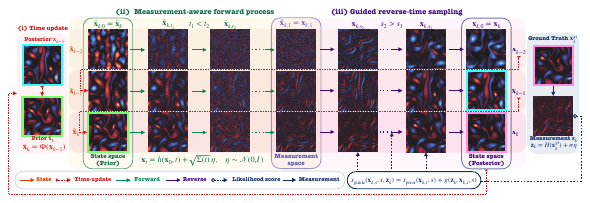}
    \caption{\textbf{Overview of MASF.}
    The measurement-aware forward process interpolates states toward the measurement space, while guided reverse-time sampling performs the measurement update.}
    \label{fig:masf_forward_process}
    \vspace{-0.5cm}
\end{figure}

\textbf{Main Contributions}
\begin{itemize}[leftmargin=*, itemsep=3pt, topsep=3pt, parsep=1pt]
    \item We introduce a measurement-aware forward process that interpolates between the state and measurement spaces according to the measurement equation. 
    This allows the learned prior score to reflect the state-measurement relationship; see Fig.~\ref{fig:masf_forward_process}.
    \item We derive an exact likelihood score for linear measurements and a theoretically grounded and practical approximation for nonlinear measurements.
    \item This derivation is achieved by projecting the non-Markovian stochastic differential equation (SDE) corresponding to the forward process onto a Markovian SDE with the same marginal distributions. 
    Notably, for linear measurement operators, we show that this Markovian projection can be realized as a linear SDE.
    \item We demonstrate on Kolmogorov flow that MASF improves accuracy and wall-clock efficiency and scales to $\mathcal{O}(10^5)$ dimensions under nonlinear and dimensionally mismatched measurements.
\end{itemize}

\section{Background}
In this section, we review the key concepts underlying score-based filters.
We first review Bayesian filtering for continuous-time dynamics with discrete-time measurements, emphasizing its recursive decomposition into time-update and measurement-update steps.
We then review score-based generative models formulated through classical SDEs, which provide a principled framework for learning and sampling from complex distributions.

\subsection{Bayesian Filtering}
\label{subsec:Bayesian Filtering}
Let $\tau$ denote the physical time variable associated with the underlying dynamical system.
Let $f:\mathbb{R}^d\times\mathbb{R}\to\mathbb{R}^d$ and
$g:\mathbb{R}^d\times\mathbb{R}\to\mathbb{R}^{d\times d}$ denote the drift and diffusion terms, respectively.
We consider a continuous-time state process $X_{\tau}\in\mathbb{R}^d$ governed by the SDE
\begin{align}
\label{eq:state_sde}
    dX_{\tau} = f(X_{\tau},\tau)\,d\tau + g(X_{\tau},\tau)\,dB_{\tau},
\end{align}
where $B_{\tau}$ denotes a $d$-dimensional Brownian motion.
Let $\{\tau_k\}_{k=0}^K$ denote discrete measurement times, and define $X_k := X_{\tau_k}$.
At each measurement time $\tau_k$, we observe a measurement $Z_k\in\mathbb{R}^m$ through the measurement equation
\begin{align}
\label{eq:measurement}
    Z_k = H(X_k) + \sigma\,\eta_k,
    \qquad
    \eta_k \sim \mathcal{N}(0,I_m),
\end{align}
where $H:\mathbb{R}^d \to \mathbb{R}^m$ denotes the measurement operator, $\sigma>0$ is the noise scale, and the identity matrix $I_m \in \mathbb{R}^{m \times m}$.
For notational convenience, we later embed the measurements into the state space, e.g., by zero-padding or an appropriate embedding map, and write
$Z_k\in\mathbb{R}^d$ and $H:\mathbb{R}^d\to\mathbb{R}^d$.

We now define the filtering objective.
Given measurements $\mathbf{z}_{1:k}=(\mathbf{z}_1,\dots,\mathbf{z}_k)$ up to time $\tau_k$, Bayesian filtering aims to estimate the posterior of the state $X_k$~\cite{sarkka2013bayesian}:
\begin{align*}
    p(\mathbf{x}_{k}\mid \mathbf{z}_{1:k})
    :=
    p\!\left(
        X_{k}=\mathbf{x}_{k}
        \,\middle|\,
        Z_1=\mathbf{z}_1,\dots,Z_k=\mathbf{z}_k
    \right).
\end{align*}
This posterior distribution is computed recursively by alternating time-update and measurement-update steps.

\paragraph{Time-update step.}
Given the posterior distribution $p(\mathbf{x}_{k-1}\mid \mathbf{z}_{1:k-1})$ at time $\tau_{k-1}$, the state equation \eqref{eq:state_sde} induces the transition density from $X_{k-1}$ to $X_k$:
\begin{align}
\label{eq:transition_density}
    p(\mathbf{x}_{k}\mid \mathbf{x}_{k-1})
    :=
    p\!\left(
        X_k=\mathbf{x}_{k}
        \,\middle|\,
        X_{k-1}=\mathbf{x}_{k-1}
    \right).
\end{align}
Using this transition density, the prior distribution $p(\mathbf{x}_{k}\mid \mathbf{z}_{1:k-1})$ at time $\tau_k$ is obtained by the Chapman--Kolmogorov equation~\cite{law2015da}:
\begin{align}
\label{eq:time_update}
    p(\mathbf{x}_{k}\mid \mathbf{z}_{1:k-1})
    =
    \int
    p(\mathbf{x}_{k}\mid \mathbf{x}_{k-1})
    \,
    p(\mathbf{x}_{k-1}\mid \mathbf{z}_{1:k-1})
    \,
    d\mathbf{x}_{k-1}.
\end{align}
To obtain samples from this prior distribution, the state SDE \eqref{eq:state_sde} is solved over the interval $[\tau_{k-1},\tau_k]$ using a numerical SDE solver, such as the Euler--Maruyama method~\cite{maruyama1955continuous}.

\paragraph{Measurement-update step.}
Given the prior distribution $p(\mathbf{x}_{k}\mid \mathbf{z}_{1:k-1})$, the new measurement $\mathbf{z}_k$ is incorporated through the likelihood
$p(\mathbf{z}_k\mid \mathbf{x}_k)$.
Under the measurement equation \eqref{eq:measurement}, this likelihood is given by
\[
p(\mathbf z_k\mid \mathbf x_k)
=
\mathcal N(\mathbf z_k;H(\mathbf x_k),\sigma^2 I_d).
\]
Combining the prior distribution with this likelihood, Bayes' rule~\cite{sarkka2013bayesian} gives the posterior distribution:
\begin{align}
\label{eq:posterior}
\underbrace{p(\mathbf{x}_{k} \mid \mathbf{z}_{1:k})}_{\text{Posterior}}
\;\propto\;
\underbrace{p(\mathbf{x}_{k} \mid \mathbf{z}_{1:k-1})}_{\text{Prior}}
\;\underbrace{p(\mathbf{z}_k \mid \mathbf{x}_{k})}_{\text{Likelihood}}.
\end{align}
Taking the gradient of the log-posterior $\log p(\mathbf{x}_{k} \mid \mathbf{z}_{1:k})$ with respect to $\mathbf{x}_k$ yields the additive score decomposition:
\begin{align}
\label{eq:posterior_score}
    \nabla_{\mathbf{x}_{k}}\log p(\mathbf{x}_{k} \mid \mathbf{z}_{1:k})
    =
    \nabla_{\mathbf{x}_{k}} \log p(\mathbf{x}_{k} \mid \mathbf{z}_{1:k-1})
    +
    \nabla_{\mathbf{x}_{k}} \log p(\mathbf{z}_k \mid \mathbf{x}_{k}).
\end{align}
We refer to the gradients of the log-posterior, log-prior, and log-likelihood as the posterior score, prior score, and likelihood score, respectively.

\subsection{Score-based generative models}
We review score-based generative models formulated through SDEs.
Let $t\in[0,1]$ denote the diffusion time.
We consider a linear SDE commonly used in score-based generative modeling~\cite{song2021score}:
\begin{align}
\label{eq:linear_sde}
    dX_t = F(t)X_t\,dt + G(t)\,dB_t,\qquad t\in[0,1],
\end{align}
where $F(t)\in\mathbb{R}^{d\times d}$ and $G(t)\in\mathbb{R}^{d\times d}$ denote the time-dependent drift and diffusion terms, respectively.
A widely used instance of \eqref{eq:linear_sde} is the variance-preserving (VP) SDE, defined by
\begin{align}
\label{eq:vp_sde}
    F(t) = -\tfrac{1}{2}\beta(t)I_d,
    \qquad
    G(t) = \sqrt{\beta(t)}\,I_d,
\end{align}
where $\beta: [0,1] \to \mathbb{R}_{\ge 0}$ is an increasing noise schedule~\cite{nichol2021improved}.
For this VP SDE, the solution $X_t$ of \eqref{eq:vp_sde} has the closed-form representation
\begin{align}
\label{eq:forward_marginal}
    X_t = a(t)X_0 + \gamma(t)\,\eta,
\end{align}
where $\eta \sim\mathcal{N}(0,I_d)$ and
\begin{align}
\label{eq:schedule_coeffs}
    \frac{d}{dt}\log a(t) = -\tfrac{1}{2}\beta(t),
    \qquad
    \gamma^2(t)=1-a^2(t).
\end{align}
The representation implies that the conditional score
$\nabla_{\mathbf{x}_t}\log p(\mathbf{x}_t\mid \mathbf{x}_0)$ is linear in $\mathbf{x}_t$.
The resulting linear form allows score models to be trained efficiently even in high-dimensional settings via the denoising score-matching objective~\cite{hyvarinen2005score,vincent2011connection}.

\section{Methods}
\label{sec:method}
MASF consists of three main components.
First, MASF propagates posterior samples from the previous physical time to obtain prior samples at the current measurement time (Fig.~\ref{fig:masf_forward_process}(i)).
Second, these prior samples are transformed toward the measurement space (Fig.~\ref{fig:masf_forward_process}(ii); Section~\ref{subsec:measurement_aware_forward_process}).
Third, the transformed samples are transported back to the state space through a reverse-time SDE guided by the estimated likelihood score (Fig.~\ref{fig:masf_forward_process}(iii); Section~\ref{subsec:likelihood_score_estimation}).
For reference, Table~\ref{tab:notation} summarizes the notation used throughout the paper, and Appendix~\ref{app:theory} provides proofs for all results in Section~\ref{sec:method}.

\subsection{Measurement-Aware Forward Process}
\label{subsec:measurement_aware_forward_process}

We construct a measurement-aware forward process $X_t$, where $t\in[0,1]$ denotes the diffusion time of this forward process, distinct from the physical time $\tau$ in the state SDE \eqref{eq:state_sde}.
Specifically, the forward process is controlled by a decreasing interpolation schedule $a:[0,1]\to[0,1]$ that satisfies $a(0)=1$ and $a(1)=0$.
Based on this schedule, the covariance schedule is defined as $\Sigma(t)=\sigma^2(1-a^2(t))I_d$, so that $\Sigma(0)=0$ and $\Sigma(1)=\sigma^2 I_d$.

Now, we define the interpolation map $h:\mathbb{R}^d\times[0,1]\to\mathbb{R}^d$ by
\begin{align}
\label{eq:interpolation_map}
    h(\mathbf{x},t)
    =
    a(t)\mathbf{x}
    +
    \bigl(1-a(t)\bigr)H(\mathbf{x}).
\end{align}
Using this map, we construct the measurement-aware forward process as
\begin{align}
\label{eq:forward_process}
    X_t
    =
    h(X_0,t)
    +
    \sqrt{\Sigma(t)}\,\eta,
    \qquad
    \eta\sim\mathcal{N}(0,I_d).
\end{align}
By construction, the process starts from the state and terminates in distribution at the measurement:
\begin{align}
    X_0 = X,
    \qquad
    X_1 = H(X)+\sigma\eta \stackrel{d}{=} Z,
\end{align}
where $\stackrel{d}{=}$ denotes equality in distribution.
Given the forward process in \eqref{eq:forward_process}, we next obtain an SDE whose marginal distributions match those of this process.

\begin{theorem}[Non-Markovian SDE representation]
\label{thm:non_markovian_sde}
Assume that $a(t)$ and $\Sigma(t)$ are differentiable, and that $\dot{\Sigma}(t)$ is positive semidefinite.
Let $\widetilde X_0\stackrel{d}{=}X_0$ be independent of the Brownian motion $B_t$.
Suppose that $\widetilde X_t$ satisfies
\begin{align}
\label{eq:non_markovian_sde}
    d\widetilde X_t
    =
    \partial_t h(\widetilde X_0,t)\,dt
    +
    \sqrt{\dot{\Sigma}(t)}\,dB_t,
    \qquad t\in[0,1].
\end{align}
Then $\mathcal{L}(\widetilde X_t)=\mathcal{L}(X_t)$ for all $t\in[0,1]$, where $\mathcal{L}(Y)$ denotes the law of a random variable $Y$.
\end{theorem}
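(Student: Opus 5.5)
The plan is to integrate the SDE \eqref{eq:non_markovian_sde} in closed form and compare the resulting random variable with the construction \eqref{eq:forward_process}. The drift $\partial_t h(\widetilde X_0,t)$ depends only on the initial value $\widetilde X_0$ and on $t$, not on the current state $\widetilde X_t$. The diffusion coefficient $\sqrt{\dot\Sigma(t)}$ is deterministic. So the equation is solved pathwise by direct integration, and no existence or uniqueness theory beyond the definition of the Itô integral is needed.

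\textbf{Step 1: integrate.} Integrating from $0$ to $t$ gives
\begin{align*}
\widetilde X_t=\widetilde X_0+\int_0^t \partial_s h(\widetilde X_0,s)\,ds+\int_0^t\sqrt{\dot\Sigma(s)}\,dB_s .
\end{align*}
Since $a$ is differentiable, $s\mapsto h(\mathbf{x},s)=a(s)\mathbf{x}+(1-a(s))H(\mathbf{x})$ is differentiable for each fixed $\mathbf{x}$. By the fundamental theorem of calculus, the drift integral equals $h(\widetilde X_0,t)-h(\widetilde X_0,0)$. Because $a(0)=1$, we have $h(\widetilde X_0,0)=\widetilde X_0$, and therefore $\widetilde X_t=h(\widetilde X_0,t)+M_t$, where $M_t:=\int_0^t\sqrt{\dot\Sigma(s)}\,dB_s$. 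For the fundamental theorem of calculus we need $\dot a$ to be integrable; I will assume this implicitly (e.g.\ $a\in C^1$), as the paper does.

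\textbf{Step 2: identify the noise term.} $M_t$ is a Wiener integral of a deterministic matrix-valued function, so it is a centered Gaussian. By the Itô isometry its covariance is
\begin{align*}
\int_0^t\sqrt{\dot\Sigma(s)}\sqrt{\dot\Sigma(s)}^{\top}ds=\int_0^t\dot\Sigma(s)\,ds=\Sigma(t)-\Sigma(0)=\Sigma(t).
\end{align*}
This uses that $\sqrt{\dot\Sigma}$ is the symmetric PSD square root, which exists because $\dot\Sigma\succeq0$, and that $\Sigma(0)=\sigma^2(1-a(0)^2)I_d=0$. Hence $M_t\stackrel{d}{=}\sqrt{\Sigma(t)}\,\eta$ with $\eta\sim\mathcal N(0,I_d)$.

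\textbf{Step 3: compare joint laws.} $M_t$ is measurable with respect to the Brownian motion, and $\widetilde X_0$ is independent of $B$, so $(\widetilde X_0,M_t)\stackrel{d}{=}(X_0,\sqrt{\Sigma(t)}\,\eta)$ with independent components. The map $(\mathbf{x},\mathbf{m})\mapsto h(\mathbf{x},t)+\mathbf{m}$ is measurable, assuming $H$ is measurable. Applying it to both pairs gives $\mathcal L(\widetilde X_t)=\mathcal L(X_t)$ for every $t\in[0,1]$.

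\textbf{Main obstacle.} The delicate point is Step 2: one must make sure the square root is chosen so that $\sqrt{\dot\Sigma}\sqrt{\dot\Sigma}^\top=\dot\Sigma$ and that $\dot\Sigma$ is integrable. In the paper's isotropic case this is simple, since $\dot\Sigma(t)=-2\sigma^2 a(t)\dot a(t)I_d$ is PSD exactly because $a\ge0$ is decreasing. Everything else is bookkeeping. Note that the SDE is non-Markovian in $\widetilde X_t$ because of its dependence on $\widetilde X_0$; this is why only marginal laws, not path laws, are claimed.
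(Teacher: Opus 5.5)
Your proposal is correct and takes the same route as the paper's proof. Both integrate the SDE given $\widetilde X_0$, identify $\int_0^t\sqrt{\dot\Sigma(u)}\,dB_u$ as a centered Gaussian with covariance $\Sigma(t)-\Sigma(0)=\Sigma(t)$, and match the result with $h(X_0,t)+\sqrt{\Sigma(t)}\,\eta$. Your version also spells out two points the paper leaves implicit: that $h(\mathbf{x},0)=\mathbf{x}$, and that independence of $\widetilde X_0$ from $B$ is what gives equality of the joint law of $(\widetilde X_0,M_t)$.
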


Theorem~\ref{thm:non_markovian_sde} shows that the marginal distributions of the forward process in \eqref{eq:forward_process} can be realized by an SDE.
However, the SDE in \eqref{eq:non_markovian_sde} is non-Markovian because its drift depends on the initial variable $\widetilde X_0$.
This dependence prevents us from applying Anderson's reverse-time theorem~\cite{Anderson1982ReversetimeDE}.
To address this, we apply the Markovian projection theorem~\cite{brunick2013mimicking} to the SDE in \eqref{eq:non_markovian_sde}.

\begin{theorem}[Markovian projection]
\label{thm:markovian_projection}
Let $\widetilde X_t$ be defined by \eqref{eq:non_markovian_sde}, and assume that
\begin{align}
\label{eq:markovian_projection_integrability}
    \mathbb{E}
    \int_0^1
    \left(
        \left\|\partial_t h(\widetilde X_0,t)\right\|
        +
        \left\|\dot{\Sigma}(t)\right\|
    \right)
    dt
    <\infty.
\end{align}
Then there exists a measurable function $b:\mathbb{R}^d\times[0,1]\to\mathbb{R}^d$ such that
\begin{align}
\label{eq:markovian_projection_drift}
    b(\mathbf{x},t)
    :=
    \mathbb{E}\!\left[
        \partial_t h(\widetilde X_0,t)
        \mid
        \widetilde X_t=\mathbf{x}
    \right].
\end{align}
Moreover, there exists a Markovian process $\bar X_t$ satisfying
\begin{align}
\label{eq:markovian_projection_sde}
    d\bar X_t
    =
    b(\bar X_t,t)\,dt
    +
    \sqrt{\dot{\Sigma}(t)}\,dB_t,
    \qquad
    \bar X_0\stackrel{d}{=}X_0,
\end{align}
with $\bar X_0$ independent of the Brownian motion $B_t$.
Then $
    \mathcal{L}(\bar X_t)
    =
    \mathcal{L}(\widetilde X_t)
    =
    \mathcal{L}(X_t),
    \;
    t\in[0,1]$.
\end{theorem}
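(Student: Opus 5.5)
The plan is to apply the mimicking theorem of Brunick and Shreve~\cite{brunick2013mimicking} (Gyöngy's theorem in its measurable-coefficient form) to the Itô process $\widetilde X_t$ from Theorem~\ref{thm:non_markovian_sde}, and then transfer the conclusion to $X_t$ through the same theorem.

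\textbf{Step 1: set up the Itô process.} Write $\widetilde X_t$ as an Itô process with adapted drift $\mu_t := \partial_t h(\widetilde X_0,t)$ and deterministic diffusion $\nu_t := \sqrt{\dot\Sigma(t)}$, on the filtration generated by $\widetilde X_0$ and $B$. This drift is adapted because $\widetilde X_0$ is $\mathcal F_0$-measurable and independent of $B$. The integrability condition \eqref{eq:markovian_projection_integrability} is exactly the hypothesis $\mathbb E\int_0^1(\|\mu_t\|+\|\nu_t\nu_t^\top\|)\,dt<\infty$ required by the mimicking theorem, since $\nu_t\nu_t^\top=\dot\Sigma(t)$.

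\textbf{Step 2: existence of $b$.} Since $\mu_t$ is integrable for a.e.\ $t$ by Fubini, the conditional expectation $\mathbb E[\mu_t\mid \widetilde X_t]$ exists. By the Doob--Dynkin lemma it equals $b(\widetilde X_t,t)$ for some Borel map $b(\cdot,t)$. Joint measurability in $(\mathbf x,t)$ requires a little care. I would first try taking $b$ as a Radon--Nikodym derivative of the measure $A\mapsto \mathbb E\int_0^1 \mu_t\,\mathbf 1_{\{(\widetilde X_t,t)\in A\}}\,dt$ with respect to the law of $(\widetilde X_t,t)$ under $\mathbb P\otimes dt$. This is precisely how Brunick--Shreve construct $b$, and it yields \eqref{eq:markovian_projection_drift} for a.e.\ $t$. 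The diffusion coefficient needs no projection, because $\mathbb E[\dot\Sigma(t)\mid\widetilde X_t]=\dot\Sigma(t)$ is deterministic.

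\textbf{Step 3: invoke the mimicking theorem.} It gives a weak solution $\bar X_t$ of \eqref{eq:markovian_projection_sde}, on a possibly different probability space, with $\bar X_0\stackrel{d}{=}\widetilde X_0\stackrel{d}{=}X_0$ independent of its Brownian motion, and with $\mathcal L(\bar X_t)=\mathcal L(\widetilde X_t)$ for every $t$. Combining this with Theorem~\ref{thm:non_markovian_sde} gives $\mathcal L(\widetilde X_t)=\mathcal L(X_t)$, which completes the chain of equalities.

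\textbf{Main obstacle.} The delicate point is the Markov property of $\bar X$. Brunick--Shreve only guarantee a weak solution with matching one-dimensional marginals, and the Markov property needs uniqueness of the SDE, or at least a Markovian selection. I would handle this in one of two ways. The first is to read ``Markovian'' as meaning that the coefficients depend only on $(\bar X_t,t)$, which is what the later use of Anderson's reverse-time theorem requires. The second is to note that when $\dot\Sigma(t)\succ0$ and $b$ is locally bounded, the SDE \eqref{eq:markovian_projection_sde} is well-posed (Veretennikov/Krylov for nondegenerate additive noise), so the unique weak solution is a strong Markov process. I would present the first reading as the statement and remark on the second as the condition under which the stronger sense holds.
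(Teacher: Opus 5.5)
Your proposal is correct and follows the same route as the paper. The paper's proof is a direct appeal to the Brunick--Shreve mimicking theorem (the drift is replaced by its conditional expectation given $\widetilde X_t$, and the deterministic diffusion $\sqrt{\dot\Sigma(t)}$ is kept unchanged), chained with Theorem~\ref{thm:non_markovian_sde}.

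You go beyond the paper in two places: you note that $b$ is determined only for a.e.\ $t$, and that Brunick--Shreve yields a weak solution whose Markov property requires uniqueness, which the paper simply asserts. For your well-posedness remark, note that $\dot\Sigma(t)=-2\sigma^2a(t)\dot a(t)I$ vanishes at $t=1$, so nondegeneracy holds only on $[0,1)$.
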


\subsection{Estimation of the Likelihood Score}
\label{subsec:likelihood_score_estimation}

Theorem~\ref{thm:markovian_projection} provides a Markovian SDE with the same marginal distributions as the measurement-aware forward process~\eqref{eq:forward_process}.
Its endpoint transition law will serve as the basis for estimating the likelihood score, but this requires an explicit form of the projected drift in \eqref{eq:markovian_projection_drift}.
We obtain such a representation through the following two conditional means:
\begin{align}
\label{eq:m1_m2_definition}
    m_1(\mathbf{x},t)
    &:=
    \mathbb{E}\!\left[\widetilde X_0 \mid \widetilde X_t=\mathbf{x}\right],
    \qquad
    m_2(\mathbf{x},t)
    :=
    \mathbb{E}\!\left[H(\widetilde X_0) \mid \widetilde X_t=\mathbf{x}\right].
\end{align}
Here, $m_1(\mathbf{x},t)$ estimates the initial state associated with the perturbed state $\mathbf{x}$, whereas $m_2(\mathbf{x},t)$ estimates the corresponding clean measurement.
These two conditional means determine the drift as
\begin{align}
\label{eq:projected_drift_m1_m2}
    b(\mathbf{x},t)
    =
    \dot a(t)
    \left(
        m_1(\mathbf{x},t)-m_2(\mathbf{x},t)
    \right).
\end{align}
Because this drift is generally nonlinear, the endpoint likelihood $p(\mathbf{z}\mid \mathbf{x}_t)$ cannot be evaluated in closed form.
We therefore approximate the endpoint transition law $p(\mathbf{z}\mid \mathbf{x}_t)$ of the projected SDE, rather than approximating the measurement operator $H$.
Specifically, we integrate the projected SDE~\eqref{eq:markovian_projection_sde} from $t$ to $1$:
\begin{align}
\label{eq:endpoint_transition_integral}
    \bar X_1
    =
    \bar X_t
    +
    \int_t^1
    \dot a(u)
    \left(
        m_1(\bar X_u,u)-m_2(\bar X_u,u)
    \right)\,du
    +
    \int_t^1
    \sqrt{\dot\Sigma(u)}\,dB_u .
\end{align}
We then approximate the drift integral by freezing the conditional means at $(\bar X_t,t)$, which gives
\begin{align}
\label{eq:endpoint_transition_approx}
    \bar X_1
    \approx
    \bar X_t
    -
    a(t)
    \left(
        m_1(\bar X_t,t)-m_2(\bar X_t,t)
    \right)
    +
    \sqrt{\Sigma_{t\to 1}}\,\eta,
    \qquad
    \eta\sim\mathcal{N}(0,I),
\end{align}
where $\Sigma_{s\to t}:=\Sigma(t)-\Sigma(s)$. 
Since the endpoint marginal satisfies $\bar X_1\stackrel{d}{=}Z$, this approximation defines a surrogate likelihood for $\mathbf z$ given $\mathbf x_t$.

\begin{proposition}[Endpoint Gaussian approximation for the likelihood score]
\label{prop:endpoint_gaussian_likelihood_score}
For $0\le t<1$, define
\begin{align}
\label{eq:mu_t_definition}
    \mu_t(\mathbf{x})
    :=
    \mathbf{x}
    -
    a(t)
    \left(
        m_1(\mathbf{x},t)-m_2(\mathbf{x},t)
    \right).
\end{align}
Assume that $\mu_t$ is differentiable with respect to $\mathbf{x}$.
Under the endpoint approximation in \eqref{eq:endpoint_transition_approx}, the conditional likelihood is approximated by
\begin{align}
\label{eq:gaussian_guidance_approx}
    p(\mathbf{z}\mid \mathbf{x}_t=\mathbf{x})
    \approx
    \mathcal{N}
    \left(
        \mathbf{z};
        \mu_t(\mathbf{x}),
        \Sigma_{t\to 1}
    \right).
\end{align}
Consequently, the corresponding likelihood-score approximation is
\begin{align}
\label{eq:nonlinear_guidance_score}
    g(\mathbf{z},\mathbf{x},t)
    &:=
    \nabla_{\mathbf{x}}
    \log
    \mathcal{N}
    \left(
        \mathbf{z};
        \mu_t(\mathbf{x}),
        \Sigma_{t\to 1}
    \right) \nonumber \\
    &=
    \left(\nabla_{\mathbf{x}}\mu_t(\mathbf{x})\right)^{\mathsf T}
    \Sigma_{t\to 1}^{-1}
    \left(
        \mathbf{z}-\mu_t(\mathbf{x})
    \right).
\end{align}
\end{proposition}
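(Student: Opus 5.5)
The proof proceeds in three steps: obtain the frozen-drift endpoint relation \eqref{eq:endpoint_transition_approx} from the exact integral form \eqref{eq:endpoint_transition_integral}, read off the Gaussian surrogate \eqref{eq:gaussian_guidance_approx}, and differentiate its logarithm.

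\textbf{Step 1: the drift integral.} Starting from \eqref{eq:endpoint_transition_integral}, the drift integrand is $\dot a(u)\,(m_1(\bar X_u,u)-m_2(\bar X_u,u))$, using the form \eqref{eq:projected_drift_m1_m2} of the projected drift from Theorem~\ref{thm:markovian_projection}. Freezing the conditional means at $(\bar X_t,t)$ leaves only the scalar integral
\[
\int_t^1 \dot a(u)\,du = a(1)-a(t) = -a(t),
\]
which holds because $a(1)=0$. The drift contribution is therefore $-a(t)\,(m_1(\bar X_t,t)-m_2(\bar X_t,t))$, which yields $\mu_t(\bar X_t)$ as defined in \eqref{eq:mu_t_definition}.

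\textbf{Step 2: the noise term and the Gaussian surrogate.} The stochastic integral $\int_t^1 \sqrt{\dot\Sigma(u)}\,dB_u$ has a deterministic integrand. It is therefore Gaussian with mean zero, and by the It\^o isometry its covariance is
\[
\int_t^1 \dot\Sigma(u)\,du = \Sigma(1)-\Sigma(t) = \Sigma_{t\to 1}.
\]
Brownian increments on $[t,1]$ are independent of $\mathcal F_t$, so this term is independent of $\bar X_t$. Conditionally on $\bar X_t=\mathbf x$, the approximate endpoint is then distributed as $\mathcal N(\mu_t(\mathbf x),\Sigma_{t\to1})$. Theorem~\ref{thm:markovian_projection} gives $\mathcal L(\bar X_t)=\mathcal L(X_t)$, and $\bar X_1 \stackrel{d}{=} Z$, so this conditional law is the surrogate for $p(\mathbf z\mid\mathbf x_t=\mathbf x)$, which is \eqref{eq:gaussian_guidance_approx}. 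For $t<1$ we have $\Sigma_{t\to1}=\sigma^2 a^2(t) I_d$ with $a(t)>0$, since $a$ is decreasing. The covariance is therefore positive definite and the density is well defined.

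\textbf{Step 3: the score.} The log-density is
\[
-\tfrac12(\mathbf z-\mu_t(\mathbf x))^{\mathsf T}\Sigma_{t\to1}^{-1}(\mathbf z-\mu_t(\mathbf x)) + \text{const},
\]
where the normalizing constant does not depend on $\mathbf x$ because the covariance does not. Applying the chain rule with the assumed differentiability of $\mu_t$ and the symmetry of $\Sigma_{t\to1}^{-1}$ gives $(\nabla_{\mathbf x}\mu_t)^{\mathsf T}\Sigma_{t\to1}^{-1}(\mathbf z-\mu_t(\mathbf x))$, which is \eqref{eq:nonlinear_guidance_score}.

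\textbf{Main subtlety.} The delicate point is Step 2, specifically the positivity of $a(t)$ on $[0,1)$. It is needed for $\Sigma_{t\to 1}$ to be invertible, and strict monotonicity of $a$, or at least $a(t)>0$ for $t<1$, has to be assumed explicitly. The rest is bookkeeping, since the statement is conditional on the frozen-drift approximation and no error bound is claimed.
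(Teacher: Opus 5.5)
Your proposal is correct and follows the paper's own argument (Lemma~\ref{lem:masf_endpoint_gaussian_approx} and Corollary~\ref{cor:masf_nonlinear_likelihood_score}). Both freeze $m_1,m_2$ so that the drift integrates to $(a(1)-a(t))(m_1-m_2)=-a(t)(m_1-m_2)$, take the accumulated diffusion covariance $\Sigma(1)-\Sigma(t)$, and differentiate the Gaussian log-density by the chain rule. Your extra details go slightly beyond what the paper writes without changing the route: the It\^o isometry, the independence of the noise increment from $\bar X_t$, and the observation that $\Sigma_{t\to1}=\sigma^2a^2(t)I_d$ is invertible only when $a(t)\neq0$. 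That last condition is not guaranteed by weak monotonicity alone, which contradicts your Step~2 phrasing ``since $a$ is decreasing,'' but you correct this yourself in your closing remark.
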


Proposition~\ref{prop:endpoint_gaussian_likelihood_score} applies to general nonlinear measurement operators.
For linear measurement operators, the projected SDE~\eqref{eq:markovian_projection_sde} admits a linear Markov realization with the same Fokker--Planck equation.

\begin{theorem}[Closed-form likelihood score for linear measurements]
\label{thm:linear_measurement_case}
Suppose that $H(\mathbf{x})=A\mathbf{x}$ for some matrix $A\in\mathbb{R}^{d\times d}$, and define
\begin{align}
\label{eq:At_definition}
    A(t)
    =
    a(t)I+\bigl(1-a(t)\bigr)A.
\end{align}
Then
\begin{align}
    h(\mathbf{x},t)=A(t)\mathbf{x},
    \qquad
    m_2(\mathbf{x},t)=A\,m_1(\mathbf{x},t).
\end{align}
Assume that $A(t)$ is invertible for $t\in[0,1)$ and that the corresponding covariance is positive semidefinite.
Then the projected SDE for $\bar X_t$ admits a linear Markov realization $\check X_t$ with transition kernel
\begin{align}
\label{eq:linear_transition_kernel}
    \check X_t\mid \check X_s
    \sim
    \mathcal{N}
    \left(
        M_{s\to t}\check X_s,\,
        \check \Sigma_{s\to t}
    \right),
    \qquad 0\le s<t\le 1,
\end{align}
where
\begin{align}
\label{eq:linear_transition_params}
    M_{s\to t}
    =
    A(t)A(s)^{-1},
    \qquad
    \check \Sigma_{s\to t}
    =
    \Sigma(t)
    -
    M_{s\to t}\Sigma(s) M_{s\to t}^{\mathsf T}.
\end{align}
Consequently, for $0\le t<1$, the likelihood score is
\begin{align}
\label{eq:linear_likelihood_score}
    \nabla_{\mathbf{x}_t}\log p(\mathbf{z}\mid \mathbf{x}_t)
    =
    M_{t\to 1}^{\mathsf T}
    \check \Sigma_{t\to 1}^{-1}
    \left(
        \mathbf{z}
        -
        M_{t\to 1}\mathbf{x}_t
    \right).
\end{align}
\end{theorem}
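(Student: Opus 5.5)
The proof has four steps: verify the two identities, build a linear Gaussian process with the right one-time marginals, check that it agrees with the projected SDE at the level of the Fokker--Planck equation, and then read off the score.

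\textbf{Step 1: the identities.} With $H(\mathbf{x})=A\mathbf{x}$, \eqref{eq:interpolation_map} gives $h(\mathbf{x},t)=a(t)\mathbf{x}+(1-a(t))A\mathbf{x}=A(t)\mathbf{x}$. Linearity of conditional expectation in \eqref{eq:m1_m2_definition} gives $m_2(\mathbf{x},t)=\mathbb{E}[A\widetilde X_0\mid \widetilde X_t=\mathbf{x}]=A\,m_1(\mathbf{x},t)$.

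\textbf{Step 2: the linear process.} By Theorem~\ref{thm:non_markovian_sde} and \eqref{eq:forward_process}, the marginals are those of $X_t=A(t)X_0+\sqrt{\Sigma(t)}\,\eta$. Define $\check X_t$ as a Gaussian Markov process with kernel \eqref{eq:linear_transition_kernel}, started from $\check X_0\stackrel{d}{=}X_0$. Note that $M_{0\to t}=A(t)$ and $\check\Sigma_{0\to t}=\Sigma(t)$, since $A(0)=I$ and $\Sigma(0)=0$. Hence $\mathcal{L}(\check X_t)=\mathcal{L}(X_t)$ for every $t$. Consistency (Chapman--Kolmogorov) follows from two facts:
\begin{itemize}
\item $M_{s\to t}M_{r\to s}=M_{r\to t}$;
\item $\check\Sigma_{s\to t}+M_{s\to t}\check\Sigma_{r\to s}M_{s\to t}^{\mathsf T}=\Sigma(t)-M_{r\to t}\Sigma(r)M_{r\to t}^{\mathsf T}=\check\Sigma_{r\to t}$.
\end{itemize}
The assumed positive semidefiniteness of $\check\Sigma_{s\to t}$ makes these valid Gaussian kernels. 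Because $A(t)$ is a polynomial in $A$, all the $A(t)$ commute and $M_{s\to t}$ commutes with the scalar $\Sigma(s)$. This simplifies the algebra.

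\textbf{Step 3: matching the projected SDE.} The kernel $\check X$ is realized by $d\check X_t=\dot A(t)A(t)^{-1}\check X_t\,dt+\sqrt{Q(t)}\,dB_t$, where
\[
Q=\dot\Sigma-\dot A A^{-1}\Sigma-\Sigma A^{-\mathsf T}\dot A^{\mathsf T}.
\]
The claim is that this process and $\bar X_t$ from Theorem~\ref{thm:markovian_projection} have the same Fokker--Planck equation. Since both have marginal $p_t$, it suffices to check that the probability fluxes agree modulo the diffusion difference. Concretely:
\begin{itemize}
\item By Step 1, $b(\mathbf{x},t)=\dot a(t)(I-A)m_1(\mathbf{x},t)=\dot A(t)\,m_1(\mathbf{x},t)$.
\item Tweedie's formula for $X_t=A(t)X_0+\sqrt{\Sigma(t)}\eta$ gives $A(t)m_1(\mathbf{x},t)=\mathbf{x}+\Sigma(t)\nabla\log p_t(\mathbf{x})$.
\item Therefore $b=\dot A A^{-1}\mathbf{x}+\dot A A^{-1}\Sigma\nabla\log p_t$, and the flux of $\bar X$ is $b\,p_t-\tfrac12\dot\Sigma\nabla p_t$.
\item The flux of $\check X$ is $\dot A A^{-1}\mathbf{x}\,p_t-\tfrac12 Q\nabla p_t$. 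Substituting $Q$, and using symmetry together with commutation, gives $-\tfrac12 Q=-\tfrac12\dot\Sigma+\dot AA^{-1}\Sigma$, so the two fluxes coincide.
\end{itemize}
This flux identity, and the commutation bookkeeping that turns the non-symmetric terms into a symmetric form, is the main obstacle. If $Q$ fails to be PSD, I would fall back on only the transition-kernel statement, which needs just the assumed PSD of $\check\Sigma$.

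\textbf{Step 4: the score.} Since $\check X$ shares the marginals and generator with $\bar X$, and its endpoint satisfies $\check X_1\stackrel{d}{=}Z$, we take $p(\mathbf{z}\mid\mathbf{x}_t)=\mathcal N(\mathbf{z};M_{t\to1}\mathbf{x}_t,\check\Sigma_{t\to1})$. Differentiating the Gaussian log-density in $\mathbf{x}_t$ gives \eqref{eq:linear_likelihood_score}. Invertibility of $\check\Sigma_{t\to1}$ for $t<1$ is assumed implicitly. If the formula is to be exact rather than surrogate, the identification of the endpoint law requires uniqueness of the Fokker--Planck solution, which I would cite rather than prove.
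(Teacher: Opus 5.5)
Your argument follows the paper's route: the same drift $F_t=\dot A(t)A(t)^{-1}$, the same diffusion $Q=G_tG_t^{\mathsf T}=\dot\Sigma-F_t\Sigma-\Sigma F_t^{\mathsf T}$, the same Tweedie-based score form $b=F_t\mathbf{x}+F_t\Sigma\nabla\log p_t$, a Fokker--Planck comparison, and differentiation of the Gaussian endpoint density. The one real difference is Step 2, and it is an improvement. You define $\check X$ through its kernel, check Chapman--Kolmogorov, and use $M_{0\to t}=A(t)$ and $\check\Sigma_{0\to t}=\Sigma(t)$; this gives $\mathcal{L}(\check X_t)=\mathcal{L}(X_t)$ directly, whereas the paper gets marginal equality from Fokker--Planck uniqueness. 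Your PSD fallback is never needed. Since $\check\Sigma_{s\to t}=\int_s^t M_{u\to t}Q(u)M_{u\to t}^{\mathsf T}\,du$ and $Q(s)=\partial_t\check\Sigma_{s\to t}|_{t=s}$, PSD of every $\check\Sigma_{s\to t}$ is equivalent to PSD of $Q$.

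Two statements need correcting. In Step 3, the identity $-\tfrac12 Q=-\tfrac12\dot\Sigma+F_t\Sigma$ holds only if $F_t=F_t^{\mathsf T}$. Commutation with $\Sigma(t)$ is automatic because $\Sigma(t)$ is a multiple of $I$, so it buys nothing. Symmetry of $F_t$ holds for symmetric $A$, such as the masks, but fails for general $A$. In general the two fluxes differ by $\tfrac12(F_t\Sigma-\Sigma F_t^{\mathsf T})\nabla p_t$. This difference is nonzero but divergence-free: the constant antisymmetric matrix $F_t-F_t^{\mathsf T}$ contracts to zero against the symmetric Hessian of $p_t$. So you should compare divergences rather than fluxes; the Fokker--Planck equations still coincide.

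In Step 4, $\check X$ and $\bar X$ do not share a generator. $\bar X$ has drift $F_t\mathbf{x}+F_t\Sigma\nabla\log p_t$ and diffusion $\dot\Sigma$, while $\check X$ has drift $F_t\mathbf{x}$ and diffusion $Q$. Their forward operators agree only when applied to the particular curve $p_t$. The two processes therefore share one-time marginals but not transition kernels. Fokker--Planck uniqueness cannot identify $\mathcal{L}(\bar X_1\mid \bar X_t)$ with $\mathcal{L}(\check X_1\mid\check X_t)$; it only re-derives the marginal equality you already have.

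Concretely, take $A=0$ and $X_0=0$. Then $m_1=m_2=0$, so $b\equiv 0$ and $\bar X_1\mid\bar X_t=\mathbf{x}_t\sim\mathcal{N}(\mathbf{x}_t,\sigma^2a(t)^2I)$. But $M_{t\to1}=0$ gives $\check X_1\mid\check X_t\sim\mathcal{N}(0,\sigma^2 I)$, so the two likelihood scores differ.

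The formula is exact as the endpoint likelihood of $\check X$, which is all the paper's proof establishes: it writes $Z\mid\check X_t\sim\mathcal{N}(M_{t\to1}\check X_t,\check\Sigma_{t\to1})$ and differentiates. The justification for using this kernel as the measurement likelihood is already in your Step 2. There, $M_{0\to1}=A$ and $\check\Sigma_{0\to1}=\sigma^2I$, so $(\check X_0,\check X_1)\stackrel{d}{=}(X,Z)$. Phrase Step 4 that way, and drop the generator claim and the appeal to uniqueness.
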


\subsection{Guided Reverse-Time Sampling}
We construct a sampler from the measurement space to the state space using the reverse-time SDE of~\eqref{eq:markovian_projection_sde}. 
The drift of the reverse-time SDE contains an additional score term; choosing this term as the posterior score yields samples from the posterior distribution. 
Based on the posterior-score decomposition~\eqref{eq:posterior_score}, the reverse-time drift can be expressed in terms of the prior and the likelihood score~\eqref{eq:nonlinear_guidance_score}.
Since the likelihood score is singular at $t=1$, we start from $1-\varepsilon$ for some $\varepsilon >0$ and discretize the posterior-score guided reverse SDE.

\begin{lemma}[Guided reverse-time sampler]
\label{lem:guided_sampler}
Let $s>t$ be two times along the reverse-time trajectory.
Given $\mathbf{x}_s$, let
\begin{align}
\label{eq:hhat_s_definition}
    \widehat h_s(\mathbf{x}_s)
    :=
    a(s)m_1(\mathbf{x}_s,s)
    +
    \bigl(1-a(s)\bigr)m_2(\mathbf{x}_s,s).
\end{align}
Using Tweedie's formula~\cite{efron2011tweedie}, the prior score at $s$ is estimated as
\begin{align}
\label{eq:prior_score_estimator_m}
    s_{\mathrm{prior}}(\mathbf{x}_s,s)
    &:=
    -\Sigma(s)^{-1}
    \bigl(
        \mathbf{x}_s-\widehat h_s(\mathbf{x}_s)
    \bigr).
\end{align}
Based on the posterior-score decomposition in \eqref{eq:posterior_score}, we approximate the posterior score by combining the prior and likelihood scores:
\begin{align}
\label{eq:guided_score}
    s_{\mathrm{guide}}(\mathbf{x}_s,s,\mathbf{z})
    &:=
    s_{\mathrm{prior}}(\mathbf{x}_s,s)
    +
    g(\mathbf{z},\mathbf{x}_s,s).
\end{align}
Then, for $\eta\sim\mathcal{N}(0,I)$, the reverse-time update is
\begin{align}
\label{eq:guided_sampler_step}
    \mathbf{x}_t
    =
    \mathbf{x}_s
    +
    \bigl(a(t)-a(s)\bigr)
    \bigl(
        m_1(\mathbf{x}_s,s)-m_2(\mathbf{x}_s,s)
    \bigr)
    -
    \Sigma_{s\to t}
    s_{\mathrm{guide}}(\mathbf{x}_s,s,\mathbf{z})
    +
    \sqrt{-\Sigma_{s\to t}}\,\eta.
\end{align}
\end{lemma}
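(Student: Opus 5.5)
The lemma bundles a Tweedie identity for the prior score and an Euler discretization of the guided reverse-time SDE; I will derive each and then assemble the update. The projected forward SDE of Theorem~\ref{thm:markovian_projection} has drift $b(\mathbf{x},t)=\dot a(t)\bigl(m_1(\mathbf{x},t)-m_2(\mathbf{x},t)\bigr)$ by \eqref{eq:projected_drift_m1_m2} and diffusion $\sqrt{\dot\Sigma(t)}$.

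\textbf{Step 1: the prior score.} By Theorem~\ref{thm:markovian_projection}, the marginal of $\bar X_s$ equals that of $X_s$. From \eqref{eq:forward_process}, conditionally on $X_0=\mathbf{x}_0$ this marginal is $\mathcal N(h(\mathbf{x}_0,s),\Sigma(s))$. Writing $p_s(\mathbf{x})=\int \mathcal N(\mathbf{x};h(\mathbf{x}_0,s),\Sigma(s))\,p_0(\mathbf{x}_0)\,d\mathbf{x}_0$ and differentiating under the integral gives
\[
\nabla\log p_s(\mathbf{x})=-\Sigma(s)^{-1}\bigl(\mathbf{x}-\mathbb{E}[h(X_0,s)\mid X_s=\mathbf{x}]\bigr).
\]
Because $h$ is affine in $(\mathbf{x},H(\mathbf{x}))$ by \eqref{eq:interpolation_map}, linearity of conditional expectation yields $\mathbb{E}[h(X_0,s)\mid X_s=\mathbf{x}]=a(s)m_1+(1-a(s))m_2=\widehat h_s(\mathbf{x})$. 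This is \eqref{eq:prior_score_estimator_m}, valid for $s>0$ where $\Sigma(s)$ is invertible. The posterior-score decomposition \eqref{eq:posterior_score}, applied at diffusion time $s$ with the surrogate likelihood of Proposition~\ref{prop:endpoint_gaussian_likelihood_score}, then justifies \eqref{eq:guided_score} as an approximation of $\nabla\log p_s(\mathbf{x}\mid\mathbf{z})$.

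\textbf{Step 2: reverse SDE and discretization.} Conditioning on $\bar X_1=\mathbf{z}$ via Doob's $h$-transform adds $\dot\Sigma\,\nabla\log p(\mathbf z\mid \mathbf{x}_t)$ to the forward drift. Anderson's theorem~\cite{Anderson1982ReversetimeDE} for the resulting Markov diffusion then gives the reverse dynamics
\[
d\mathbf{x}_t=\bigl[b(\mathbf{x}_t,t)-\dot\Sigma(t)\nabla\log p_t(\mathbf{x}_t\mid\mathbf z)\bigr]dt+\sqrt{\dot\Sigma(t)}\,d\bar B_t,
\]
run from $t=1-\varepsilon$ down to $0$. I will integrate this from $s$ to $t<s$, freezing the conditional means $m_1,m_2$ and the score at $(\mathbf{x}_s,s)$, in the same spirit as the freezing used in \eqref{eq:endpoint_transition_approx}. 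The frozen drift integrates to $\int_s^t\dot a(u)\,du\,(m_1-m_2)=(a(t)-a(s))(m_1-m_2)$, and the score term integrates to $-\Sigma_{s\to t}\,s_{\mathrm{guide}}$. The noise term has covariance $\int_t^s\dot\Sigma=\Sigma(s)-\Sigma(t)=-\Sigma_{s\to t}$. Because $\dot\Sigma\succeq0$ by the assumption of Theorem~\ref{thm:non_markovian_sde}, this matrix is positive semidefinite, so $\sqrt{-\Sigma_{s\to t}}$ is well defined. Together these three terms give exactly \eqref{eq:guided_sampler_step}.

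\textbf{Main obstacle.} The delicate part is sign and orientation bookkeeping in the reverse-time integration: the integrals run backward, so increments such as $a(t)-a(s)>0$ and $\Sigma_{s\to t}<0$ must come out with the correct signs. It also requires rigor to justify that the $h$-transformed process inherits the marginals and regularity needed for Anderson's theorem. For the latter, the natural route is to cite Theorem~\ref{thm:markovian_projection} and assume that $p_t$ is smooth and positive, treating the freezing as the stated approximation rather than proving an error bound.
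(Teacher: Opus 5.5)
Your discretization in Step 2 is the same argument as the paper's proof. You freeze $m_1,m_2$ and the score at $(\mathbf{x}_s,s)$, integrate $\dot a$ to get $(a(t)-a(s))(m_1-m_2)$, integrate $\dot\Sigma$ to get $-\Sigma_{s\to t}\,s_{\mathrm{guide}}$, and use $\Sigma(s)-\Sigma(t)=\int_t^s\dot\Sigma(u)\,du\succeq 0$ for the noise term. Your Step 1 is a correct addition that the paper only cites. Differentiating the Gaussian mixture gives $\nabla\log p_s(\mathbf{x})=-\Sigma(s)^{-1}\bigl(\mathbf{x}-\mathbb{E}[h(X_0,s)\mid X_s=\mathbf{x}]\bigr)$. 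Because $(\widetilde X_0,\widetilde X_s)$ and $(X_0,X_s)$ have the same joint law (proof of Theorem~\ref{thm:non_markovian_sde}), this conditional mean is $\widehat h_s$.

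The step that fails is your justification of the guided reverse SDE. Suppose you condition on $\bar X_1=\mathbf{z}$ by a Doob $h$-transform with $h_t(\mathbf{x})=p(\mathbf{z}\mid\bar X_t=\mathbf{x})$. Then the conditioned forward drift is $b+\dot\Sigma\nabla\log h_t$, and its marginals are $p_t(\mathbf{x}\mid\mathbf{z})\propto p_t(\mathbf{x})h_t(\mathbf{x})$. Anderson's theorem therefore gives the reverse drift
\[
b+\dot\Sigma\nabla\log h_t-\dot\Sigma\bigl(\nabla\log p_t+\nabla\log h_t\bigr)=b-\dot\Sigma\nabla\log p_t .
\]
The likelihood term cancels: the reversal of the process pinned at $\mathbf{z}$ is the unguided reverse SDE started from the point $\mathbf{z}$. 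You applied Anderson's theorem as if the forward drift were still $b$, which drops the $h$-transform term.

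A drift $b-\dot\Sigma\nabla\log p_t(\cdot\mid\mathbf{z})$ does arise when $\mathbf{z}$ is a measurement of the \emph{initial} state with noise independent of $B_t$. In that case the SDE is unchanged and only the initial law becomes $\bar X_0\sim p_0(\cdot\mid\mathbf{z})$. But then the likelihood inside the score is $\mathbb{E}[p(\mathbf{z}\mid\bar X_0)\mid\bar X_t=\mathbf{x}]$, not the endpoint transition law $p(\bar X_1=\mathbf{z}\mid\bar X_t=\mathbf{x})$ that $g$ approximates. The terminal law is also $p_1(\cdot\mid\mathbf{z})$ rather than a point mass.

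The paper does not derive the guided SDE at all. It posits the reverse SDE with the posterior score in place of the prior score, substitutes $s_{\mathrm{prior}}+g$, and discretizes, which is all the lemma asserts. Replace your $h$-transform sentence with that stipulation, treat $g$ as the surrogate of Proposition~\ref{prop:endpoint_gaussian_likelihood_score}, and the rest of your argument goes through.
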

To improve numerical stability, we balance the prior and likelihood scores using time-dependent weights, following guidance-based diffusion sampling~\cite{ho2022cfg,chung2023dps}:
\begin{align}
\label{eq:scaled_guided_score}
    s_{\mathrm{guide}}(\mathbf{x}_s,s,\mathbf{z})
    =
    \lambda_s(s)\,s_{\mathrm{prior}}(\mathbf{x}_s,s)
    +
    \lambda_g(s)\,g(\mathbf{z},\mathbf{x}_s,s).
\end{align}
When $\lambda_s=\lambda_g=1$, \eqref{eq:scaled_guided_score} recovers the formal posterior-score decomposition.

To implement the sampler, we need to estimate the conditional means $m_1$ and $m_2$, which we learn from samples of the forward process in \eqref{eq:forward_process} via denoising objectives with $L_2$ losses.
\begin{proposition}[Denoising objective for conditional means]
\label{prop:training_conditional_means}
Let $m_{1,\theta}$ and $m_{2,\theta}$ be parametric estimators of
$m_1$ and $m_2$ in \eqref{eq:m1_m2_definition}.
For $i=1,2$, let $y_1(\mathbf{x}_0)=\mathbf{x}_0$ and
$y_2(\mathbf{x}_0)=H(\mathbf{x}_0)$, and define
\begin{align}
\label{eq:m1_m2_losses_compact}
    L_i(\theta,t)
    &:=
    \mathbb{E}_{\mathbf{x}_t\sim \tilde p_t}
    \left[
        \left\|
            m_{i,\theta}(\mathbf{x}_t,t)-m_i(\mathbf{x}_t,t)
        \right\|^2
    \right],
    \\
    \widehat L_i(\theta,t)
    &:=
    \mathbb{E}_{\mathbf{x}_0\sim p_0}
    \mathbb{E}_{\mathbf{x}_t\sim p_t(\cdot\mid \mathbf{x}_0)}
    \left[
        \left\|
            m_{i,\theta}(\mathbf{x}_t,t)-y_i(\mathbf{x}_0)
        \right\|^2
    \right],
\end{align}
where $\tilde p_t$ is the marginal distribution of $\widetilde X_t$ in \eqref{eq:non_markovian_sde} and
$p_t(\cdot\mid\mathbf{x}_0)$ is the conditional distribution of $X_t$ in
\eqref{eq:forward_process}. 
Then
\begin{align}
    \nabla_\theta L_i(\theta,t)
    =
    \nabla_\theta \widehat L_i(\theta,t),
    \qquad i=1,2.
\end{align}
Thus, in practice, we train $m_{1,\theta}$ and $m_{2,\theta}$ by minimizing
$\mathcal{L}_{\mathrm{reg}}(\theta,t)
    =
    \widehat L_1(\theta,t)+\widehat L_2(\theta,t).$
\end{proposition}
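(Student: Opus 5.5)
The plan is the standard denoising score-matching argument: expand both squared losses, show that they differ by a term that does not depend on $\theta$, and then differentiate. Fix $t$ and $i\in\{1,2\}$. The first step is to identify the joint laws. By Theorem~\ref{thm:non_markovian_sde}, $\tilde p_t=\mathcal{L}(X_t)$. Because $\widetilde X_t = h(\widetilde X_0,t)+\int_0^t\sqrt{\dot\Sigma(u)}\,dB_u$ with $\widetilde X_0$ independent of $B$, the pair $(\widetilde X_0,\widetilde X_t)$ has the same joint law as $(X_0,X_t)$ in \eqref{eq:forward_process}. Hence the conditional mean $m_i(\mathbf{x},t)=\mathbb{E}[y_i(\widetilde X_0)\mid \widetilde X_t=\mathbf{x}]$ can be computed under the joint law $p_0(\mathbf{x}_0)\,p_t(\mathbf{x}_t\mid\mathbf{x}_0)$. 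This is what makes the two expectations comparable.

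Next, expand both losses, assuming $\mathbb{E}\|y_i(X_0)\|^2<\infty$ and $\mathbb{E}\|m_{i,\theta}(X_t,t)\|^2<\infty$. We have
\begin{align*}
\widehat L_i(\theta,t)&=\mathbb{E}\|m_{i,\theta}(X_t,t)\|^2-2\,\mathbb{E}\bigl[m_{i,\theta}(X_t,t)^{\mathsf T}y_i(X_0)\bigr]+\mathbb{E}\|y_i(X_0)\|^2,\\
L_i(\theta,t)&=\mathbb{E}\|m_{i,\theta}(X_t,t)\|^2-2\,\mathbb{E}\bigl[m_{i,\theta}(X_t,t)^{\mathsf T}m_i(X_t,t)\bigr]+\mathbb{E}\|m_i(X_t,t)\|^2.
\end{align*}
The tower property, conditioning on $X_t$, gives $\mathbb{E}[m_{i,\theta}(X_t,t)^{\mathsf T}y_i(X_0)]=\mathbb{E}[m_{i,\theta}(X_t,t)^{\mathsf T}m_i(X_t,t)]$. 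Therefore
\[
\widehat L_i(\theta,t)-L_i(\theta,t)=\mathbb{E}\|y_i(X_0)\|^2-\mathbb{E}\|m_i(X_t,t)\|^2,
\]
which is the conditional variance term $\mathbb{E}\|y_i(X_0)-m_i(X_t,t)\|^2$ and does not depend on $\theta$.

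Finally, differentiate in $\theta$. Since the difference is $\theta$-independent, $\nabla_\theta L_i=\nabla_\theta\widehat L_i$ whenever either gradient exists; differentiating under the expectation requires a dominated-convergence condition on $\nabla_\theta m_{i,\theta}$. Summing over $i$ justifies minimizing $\mathcal{L}_{\mathrm{reg}}$.

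The main obstacle is the first step: justifying that conditioning on $\widetilde X_t$ agrees with conditioning under $p_t(\cdot\mid\mathbf{x}_0)$. Theorem~\ref{thm:non_markovian_sde} only asserts equality of the marginals, so the argument needs the explicit solution of \eqref{eq:non_markovian_sde}, namely its integrated drift $h(\widetilde X_0,t)-h(\widetilde X_0,0)$ with $h(\widetilde X_0,0)=\widetilde X_0$. That solution gives equality of the joint laws, and then both expectations are over the same pair.
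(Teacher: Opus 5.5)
Your argument is correct, but its central step differs from the paper's. The paper's proof invokes the $L^2$ projection fact: for fixed $t$, the minimizer over functions $f$ of $\mathbb{E}\|f(X_t,t)-Y\|^2$ is $\mathbb{E}[Y\mid X_t]$. It applies this with $Y=X_0$ and $Y=H(X_0)$. It then observes, as you do, that the representation $\widetilde X_t=h(\widetilde X_0,t)+\sqrt{\Sigma(t)}\,\eta$ with $\widetilde X_0\stackrel{d}{=}X_0$ makes the conditional laws of $(X_0,X_t)$ and $(\widetilde X_0,\widetilde X_t)$ coincide. You are right that equality of one-time marginals alone would not suffice, which is why that step is needed.

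What the paper's route establishes is the appendix form of the claim: the optimal $m_{i,\theta}^{\star}$ are the conditional means, provided the model class can represent them. Your identity $\widehat L_i(\theta,t)=L_i(\theta,t)+\mathbb{E}\|y_i(X_0)-m_i(X_t,t)\|^2$ is the fact underlying that projection property, and it gives more. The two objectives differ by a $\theta$-independent constant at every $\theta$. That is exactly what the main-text claim $\nabla_\theta L_i=\nabla_\theta\widehat L_i$ asserts, and it holds even when the parametric family is misspecified. The minimizer characterization follows from your identity immediately, since $L_i\ge 0$ with equality when $m_{i,\theta}=m_i$. The converse fails: the minimizer statement by itself does not give gradient equality away from the optimum.

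Your dominated-convergence remark matters only for passing $\nabla_\theta$ inside the expectation, e.g.\ for unbiased stochastic gradients. The gradient equality itself follows from the constant offset alone, as you note.
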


Details of the overall procedure are provided in Appendix~\ref{app:algorithmic_details}.
Additional implementation details, including the noise schedule and guidance weights, are described in Appendix~\ref{app:noise_schedule_guidance_weights}.

\section{Experimental Setup}
\label{subsec:experimental_setup}
We evaluated MASF on Kolmogorov flow under various linear and nonlinear measurement operators.
This setup provides a challenging testbed for high-dimensional nonlinear data assimilation, with Kolmogorov flow at state dimensions exceeding $10^5$ and diverse measurement settings, including spatially sparse measurements.
To specify this setup, we describe the dynamical system, the model architecture, and the implementation details.

\textbf{Kolmogorov flow.}
We use a two-dimensional Kolmogorov flow as a high-dimensional benchmark.
In Kolmogorov flow, the state $\mathbf{u}(\tau)\in\mathbb{R}^{2\times H\times W}$ is a velocity field with horizontal and vertical components on a periodic grid.
The dynamics follow the incompressible Navier--Stokes equations with external forcing $\mathbf{f}$:
\begin{align}
    \partial_t \mathbf{u}
    &=
    -(\mathbf{u}\cdot\nabla)\mathbf{u}
    +
    \frac{1}{\mathrm{Re}}\nabla^2\mathbf{u}
    -
    \frac{1}{\rho}\nabla p
    +
    \mathbf{f},
    \qquad
    \nabla\cdot\mathbf{u}=0.
\end{align}
We simulate the system using JAX-CFD~\citep{kochkov2021mlcfd} on $[0,2\pi]^2$ with $\rho=1$, $\mathrm{Re}=1000$, and time step $dt=0.2$.
Unless otherwise stated, measurements are taken every $10$ steps over steps $50$--$100$, with measurement noise $\sigma=0.1$.

\textbf{Measurement operators.}
We consider two linear and two nonlinear measurement operators.
Linear measurement operators are grid masks and center masks, denoted by $A_{\mathrm{grid}}$ and $A_{\mathrm{center}}$, respectively.
Both are binary masks applied element-wise to the state:
\begin{align}
[A_{\mathrm{grid}}]_{ij}
&=
\mathbf{1}\{i\equiv0 \pmod{s},\ j\equiv0 \pmod{s}\},
\qquad
[A_{\mathrm{center}}]_{ij}
=
\mathbf{1}\{(i,j)\notin\mathcal{C}_h\},
\end{align}
where $s$ is the grid-mask stride and $\mathcal{C}_h$ is the central square hole of side length $h$.
Nonlinear measurements include an element-wise sigmoid measurement and a speed measurement.
The element-wise sigmoid preserves the state dimension but applies the sigmoid function to each component.
In contrast, the speed measurement maps the two velocity channels at each spatial location to bounded scalar measurements by applying the sigmoid function to the squared local speed:
\begin{align}
[H(\mathbf{x})]_{ij}
=
\operatorname{sigmoid}\!\left(\sum_{c=1}^{2}x_{cij}^{2}\right).
\end{align}
The squared local speed is a standard quantity derived from velocity fields in fluid dynamics~\citep{kundu2015fluid,pope2000turbulent}.
The effective dimensions of all measurements are summarized in Table~\ref{tab:measurement_settings}.
\begin{table}[H]
\centering
\vspace{-0.5cm}
\caption{
\textbf{Measurement settings.}
Dim. denotes the effective dimension of all measurements.
}
\label{tab:measurement_settings}
\small
\setlength{\tabcolsep}{6pt}
\renewcommand{\arraystretch}{1.12}
\resizebox{\linewidth}{!}{%
\begin{tabular}{llll}
\toprule
Measurement & Dim. & Property & $H(\mathbf{x})$ \\
\midrule
Grid mask
& $2(H/s)(W/s)$
& Spatially sparse mask
& $A_{\mathrm{grid}}\odot\mathbf{x}$ \\

Center mask
& $2(HW-h^2)$
& Central hole mask
& $A_{\mathrm{center}}\odot\mathbf{x}$ \\

Sigmoid
& $2HW$
& Element-wise nonlinear
& $\operatorname{sigmoid}(\mathbf{x})$ \\

Speed
& $HW$
& Channel-coupled nonlinear
& $[H(\mathbf{x})]_{ij}
= \operatorname{sigmoid}\!\left(\sum_{c=1}^{2}x_{cij}^{2}\right)$ \\
\bottomrule
\end{tabular}%
}
\vspace{-0.5cm}
\end{table}

\textbf{Model architecture.}
For linear measurements, we use a U-Net~\cite{ronneberger2015unet} to estimate $m_{1,\theta}$.
Since $H(\mathbf{x})=A\mathbf{x}$ in this case, the second conditional mean is available analytically as $m_{2,\theta}=A m_{1,\theta}$ and does not require separate models.
For nonlinear measurements, we use a dual-head U-Net with a shared encoder and separate decoders for $m_{1,\theta}$ and $m_{2,\theta}$~\cite{ronneberger2015unet,kuga2017multi}.
The same backbone as MASF is used for the score-based baselines whenever applicable to ensure a fair comparison.

\textbf{Efficient implementation.}
MASF uses a lightweight model, amortized pretraining, and a reduced number of function evaluations (NFE).
The model is pretrained once on $1000$ samples generated from the dynamics and fine-tuned at each measurement update.
In all experiments, MASF uses ensemble size $N=10$ and $\mathrm{NFE}=150$--$250$.

\textbf{Baselines and metrics.}
We compare MASF with EnKF, LETKF, SF, and SSLS.
Each baseline is tuned over its main hyperparameters, including ensemble size, inflation, localization, and sampling parameters.
We report root mean squared error (RMSE), critical success index (CSI)~\cite{schaefer1990critical}, and wall-clock time averaged over the assimilation window.
For MASF, wall-clock time includes online fine-tuning and guided sampling at measurement updates, but excludes one-time offline pretraining, since pretraining is performed without any measurement updates and the resulting initialization can be reused across assimilation runs with the same dynamics and measurement setting.
Additional implementation details, including hyperparameters, tuning procedures, and pretraining effects, are provided in Appendices~\ref{app:masf_tuning}--\ref{app:pretraining_effect}.

\begin{table}[t]
\centering
\vspace{-0.25cm}
\caption{
\textbf{Main results on Kolmogorov flow at $128^2$ resolution.}
Mean $\pm$ standard deviation over five seeds.
Wall-clock time is reported in seconds, and speedup is computed relative to MASF within each measurement setting.
}
\label{tab:main_results_selected}

\begingroup
\scriptsize
\renewcommand{\arraystretch}{1}
\setlength{\aboverulesep}{0.08ex}
\setlength{\belowrulesep}{0.08ex}
\setlength{\cmidrulesep}{0.02ex}
\setlength{\tabcolsep}{5pt}

\resizebox{0.98\textwidth}{!}{%
\begin{tabular}{cllccccc}
\toprule

Type & Measurement & Method & Ensemble & RMSE $(\downarrow)$ & CSI $(\uparrow)$ & Wall-clock (s) & Speedup \\
\midrule
\multirow{10}{*}{Linear}
& \multirow{5}{*}{Grid mask}
& EnKF  & 250 & $0.18 \pm 0.04$ & $0.76 \pm 0.05$ & $180.3 \pm 0.4$ & $1.5\times$ \\
& & LETKF & 40  & $0.18 \pm 0.01$ & $0.74 \pm 0.02$ & $389.6 \pm 5.4$ & $3.3\times$ \\
& & SF    & 100 & $0.81 \pm 0.04$ & $0.00 \pm 0.00$ & $670.6 \pm 0.9$ & \best{$5.6\times$} \\
& & SSLS  & 100 & $0.47 \pm 0.07$ & $0.34 \pm 0.10$ & $611.5 \pm 1.1$ & $5.1\times$ \\
& & MASF  & 10  & \best{$0.15 \pm 0.02$} & \best{$0.78 \pm 0.03$} & \best{$118.8 \pm 2.5$} & -- \\
\cmidrule(lr){2-8}
& \multirow{5}{*}{Center mask}
& EnKF  & 250 & $0.75 \pm 0.17$ & $0.32 \pm 0.08$ & $185.4 \pm 2.5$ & $1.1\times$ \\
& & LETKF & 40  & $0.34 \pm 0.05$ & $0.64 \pm 0.03$ & $2294.1 \pm 45.2$ & \best{$14.0\times$} \\
& & SF    & 100 & $0.28 \pm 0.02$ & $0.68 \pm 0.03$ & $672.3 \pm 3.4$ & $4.1\times$ \\
& & SSLS  & 100 & $0.29 \pm 0.05$ & $0.69 \pm 0.09$ & $610.8 \pm 0.8$ & $3.7\times$ \\
& & MASF  & 10  & \best{$0.27 \pm 0.06$} & \best{$0.71 \pm 0.04$} & \best{$164.1 \pm 1.9$} & -- \\
\midrule
\multirow{10}{*}{Nonlinear}
& \multirow{5}{*}{Sigmoid}
& EnKF  & 250 & $0.62 \pm 0.13$ & $0.36 \pm 0.03$ & $191.1 \pm 1.1$ & $2.1\times$ \\
& & LETKF & 40  & $0.18 \pm 0.04$ & $0.77 \pm 0.03$ & $2586.4 \pm 27.4$ & \best{$28.2\times$} \\
& & SF    & 100 & $0.26 \pm 0.07$ & $0.63 \pm 0.05$ & $559.1 \pm 3.8$ & $6.1\times$ \\
& & SSLS  & 100 & $0.11 \pm 0.03$ & $0.82 \pm 0.05$ & $536.3 \pm 1.2$ & $5.8\times$ \\
& & MASF  & 10  & \best{$0.07 \pm 0.01$} & \best{$0.89 \pm 0.01$} & \best{$91.7 \pm 1.5$} & -- \\
\cmidrule(lr){2-8}
& \multirow{5}{*}{Speed}
& EnKF  & 250 & $1.30 \pm 0.05$ & $0.04 \pm 0.02$ & $190.7 \pm 0.7$ & $1.5\times$ \\
& & LETKF & 40  & $0.96 \pm 0.08$ & $0.09 \pm 0.02$ & $2567.6 \pm 9.6$ & \best{$20.7\times$} \\
& & SF    & 100 & $0.87 \pm 0.04$ & $0.01 \pm 0.01$ & $380.5 \pm 7.1$ & $3.1\times$ \\
& & SSLS  & 100 & $0.83 \pm 0.06$ & $0.01 \pm 0.01$ & $358.9 \pm 2.2$ & $2.9\times$ \\
& & MASF  & 10  & \best{$0.23 \pm 0.04$} & \best{$0.67 \pm 0.08$} & \best{$123.9 \pm 1.0$} & -- \\
\bottomrule
\end{tabular}%
}
\endgroup
\vspace{-0.5cm}
\end{table}

\section{Experimental Results}
\label{subsec:main_filtering_results}
We evaluate MASF on Kolmogorov flow in terms of filtering accuracy, high-dimensional scalability, and runtime--accuracy trade-offs.
We first compare MASF with EnKF, LETKF, SF, and SSLS across the four measurement operators in Table~\ref{tab:measurement_settings}.
We then examine scaling to $256^2$ and $512^2$ resolutions and analyze sensitivity to runtime budget, temporal sparsity, and spatial sparsity.

\subsection{Main Results at $128^2$ Resolution}
\label{subsec:main_results}
Table~\ref{tab:main_results_selected} reports the main results on Kolmogorov flow, and Fig.~\ref{fig:various} shows qualitative comparisons; see Appendix~\ref{app:qualitative_samples} for additional qualitative samples.
Across all measurement operators, MASF achieves the best RMSE and CSI while maintaining favorable wall-clock time.
We provide a measurement-wise analysis of the results below.

\textbf{Grid Mask.}
Under grid-mask measurements, the performance of SF and SSLS degrades because their likelihood guidance,
$A^{\mathsf T}(\mathbf{z}-A\mathbf{x})/\sigma^2$, only injects residual information at observed grid locations.
For the sparse mask $A$, this term is zero on unobserved coordinates, so the sampler receives no direct measurement correction in most of the state space.
MASF avoids this issue by deriving the likelihood score for the perturbed state $\mathbf{x}_t$, achieving the best RMSE and CSI with the shortest wall-clock time.

\textbf{Center Mask.}
Center-mask measurements are less sparse than grid-mask measurements, so SF and SSLS receive denser likelihood-score information and become more competitive.
MASF still achieves the best RMSE and fastest wall-clock time with comparable CSI.
LETKF becomes substantially slower because the average number of measurements within each localization window increases from $6.5$ to $475.5$, a trend also observed in the sigmoid and speed settings.

\textbf{Sigmoid and Speed.}
For the element-wise sigmoid measurement, several baselines remain competitive, but MASF achieves the best RMSE and CSI with the shortest wall-clock time, yielding a $28.2\times$ speedup over LETKF.
The speed measurement is more challenging because it nonlinearly couples velocity channels. 
In this setting, EnKF and LETKF show lower accuracy than MASF, while SF and SSLS yield near-zero CSI despite moderate RMSE, suggesting poor recovery of high-speed regions.
As shown in Fig.~\ref{fig:various}, MASF recovers posterior states more accurately through nonlinear measurements.

\begin{figure}[t]
    \centering
    \includegraphics[width=0.95\linewidth]{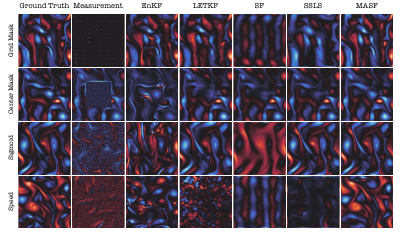}
    \caption{
    \textbf{Qualitative results.}
    Rows correspond to different measurements, and columns show the ground truth, measurement, and estimates from EnKF, LETKF, SF, SSLS, and MASF.
    }
    \label{fig:various}
    \vspace{-0.5cm}
\end{figure}

\subsection{Scaling to High Resolution}
We assess high-dimensional scaling under grid-mask measurements at $256^2$ and $512^2$ resolutions, corresponding to state dimensions above $10^5$.
At $256^2$, MASF achieves the best RMSE and CSI while being $3.0\times$--$4.8\times$ faster than the baselines.
At $512^2$, MASF remains stable and achieves the best accuracy, whereas SF and SSLS are omitted due to numerical instability; MASF is $10.6\times$ faster than EnKF and $3.9\times$ faster than LETKF.

\begin{table}[htb!]
\centering
\setlength{\tabcolsep}{5pt}
\renewcommand{\arraystretch}{0.95}
\resizebox{0.95\textwidth}{!}{%
\scriptsize
\begin{tabular}{clccccc}
\toprule
Resolution & Method & Ensemble & RMSE $(\downarrow)$ & CSI $(\uparrow)$ & Wall-clock (s) & Speedup \\
\midrule
\multirow{5}{*}{$256^2$}
& EnKF  & 250 & $0.17 \pm 0.03$ & $0.76 \pm 0.03$ & $1276.3 \pm 5.8$ & $3.7\times$ \\
& LETKF & 40  & $0.19 \pm 0.01$ & $0.70 \pm 0.01$ & $1646.7 \pm 15.2$ & \best{$4.8\times$} \\
& SF    & 100 & $0.88 \pm 0.03$ & $0.00 \pm 0.00$ & $1056.2 \pm 4.3$ & $3.1\times$ \\
& SSLS  & 100 & $0.72 \pm 0.06$ & $0.03 \pm 0.04$ & $1016.7 \pm 8.8$ & $3.0\times$ \\
& MASF  & 10  & \best{$0.16 \pm 0.02$} & \best{$0.77 \pm 0.03$} & \best{$343.2 \pm 4.0$} & - \\
\midrule
\multirow{3}{*}{$512^2$}
& EnKF  & 250 & $0.18 \pm 0.07$ & $0.78 \pm 0.05$ & $11162.5 \pm 80.3$ & \best{$10.6\times$} \\
& LETKF & 40  & $0.30 \pm 0.02$ & $0.46 \pm 0.05$ & $4140.1 \pm 37.4$ & $3.9\times$ \\
& MASF  & 10  & \best{$0.13 \pm 0.01$} & \best{$0.81 \pm 0.02$} & \best{$1049.4 \pm 22.1$} & - \\
\bottomrule
\end{tabular}%
}
\caption{
\textbf{High-dimensional scaling on grid-mask measurements.}
Mean $\pm$ standard deviation over five seeds.
Wall-clock time is reported in seconds, and speedup is computed relative to MASF at the same resolution.
}
\label{tab:high_dimensional_scaling}
\vspace{-0.5cm}
\end{table}

\subsection{Runtime--Accuracy and Sensitivity Analysis}
\label{subsec:runtime_sensitivity}

Fig.~\ref{fig:sensitivity} summarizes runtime--accuracy and sensitivity results under grid-mask measurements.
SF is omitted for visual clarity because its RMSE is substantially larger in this setting.
MASF consistently achieves low RMSE with substantially shorter wall-clock time than SSLS.
EnKF improves with larger ensembles but requires more runtime and shows higher variability, while LETKF is competitive but slower than MASF.
For temporal-length sensitivity, we vary the final assimilation step with the temporal gap fixed at $15$.
As the assimilation horizon increases, MASF maintains the lowest RMSE, while EnKF error grows and SSLS improves gradually.
For temporal and spatial sparsity, MASF degrades more slowly than the baselines and remains the most accurate.
In contrast, SSLS is comparable to EnKF at grid-mask stride $5$ but degrades sharply from stride $10$.
Additional ensemble-size and sensitivity results are provided in Appendices~\ref{app:ensemble_size_sensitivity} and~\ref{app:extended_sensitivity}.

\begin{figure}[htb!]
    \centering
    \includegraphics[width=1\linewidth]{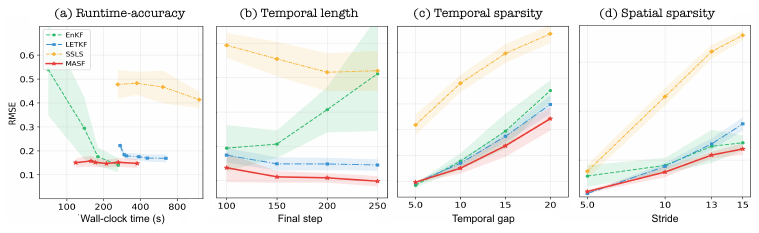}
    \caption{
    \textbf{Runtime--accuracy and sensitivity analysis.}
    Comparison under grid-mask measurements across runtime budget, temporal length, temporal gap, and spatial sparsity.
    Shaded regions denote standard deviation over five seeds.}
    \label{fig:sensitivity}
    \vspace{-0.5cm}
\end{figure}

\section{Limitations and Broader Impacts}
\textbf{Limitations.}
MASF assumes a known state SDE and measurement equation, and may degrade under misspecification or extremely noisy measurements.
For nonlinear measurements, it relies on an endpoint Gaussian approximation, whose error analysis is left for future work.
MASF also requires online fine-tuning and depends on hyperparameters such as guidance scales, normalization, NFE, and fine-tuning epochs.
Our experiments focus on Kolmogorov flow with five seeds and scaling up to $512^2$ resolution; broader systems, larger 3D domains, and more extensive evaluation remain future work.

\textbf{Broader Impacts.}
MASF may benefit scientific and engineering applications requiring accurate, efficient state estimation from partial or nonlinear measurements, such as fluid monitoring, forecasting, and sensor-based control.
Its reduced online cost may make high-dimensional filtering more practical.
In high-stakes settings, misspecified dynamics, biased measurements, or overconfident estimates may lead to unreliable decisions, requiring validation, uncertainty assessment, and appropriate privacy and security safeguards.

\section{Conclusion}
\label{sec:conclusion}
We propose the Measurement-Aware Score-based Filter (MASF), which redesigns the forward process itself for score-based data assimilation by incorporating the measurement equation.
Unlike classical forward processes that perturb data toward a noise distribution, the measurement-aware forward process lets the prior score be learned on perturbed states that reflect the state-measurement relationship.
From this forward process, we derive a Markovian SDE with matching marginal distributions, whose endpoint transition law yields likelihood scores on perturbed states: exact for linear measurements and theoretically grounded through an endpoint approximation for nonlinear measurements.
With amortized pretraining and a lightweight model, MASF improves accuracy and online wall-clock efficiency on Kolmogorov flow, particularly under spatially sparse and channel-coupled nonlinear measurements where existing filters perform poorly, while remaining effective at state dimensions exceeding $10^5$.
These results position MASF as a framework for score-based data assimilation that combines efficiency and scalability in nonlinear high-dimensional systems.

\section{Funding}
\label{sec:funding}
This work was supported by the National Research Foundation of Korea (Grant No. RS-2023-00301976, RS-2025-00523567, RS-2025-00561696, RS-2025-02215354, RS-2026-25512726, and RS-2026-25522728), the Korea Basic Science Institute (Grant No. RS-2026-25500300), the New Faculty Startup Fund from KAIST (Grant No. G04240060), the KAIST-CERAGEM Research Fund (Grant No. G01250193), and the New Faculty Startup Fund from Seoul National University (Grant No. 326-20240027).

\bibliographystyle{unsrtnat}
\bibliography{main}

\clearpage
\appendix

\paragraph{Notation.}
Table~\ref{tab:notation} summarizes the main symbols used throughout the paper.

\begin{table}[h]
\vspace{-0.5cm}
\centering
\caption{\textbf{Summary of notation.}}
\label{tab:notation}
\renewcommand{\arraystretch}{1.25}
\begin{tabular}{p{0.16\linewidth} p{0.58\linewidth} p{0.18\linewidth}}
\hline
\textbf{Symbols} & \textbf{Description} & \textbf{Defined in} \\
\hline

$\tau$ 
& The physical time variable of the dynamical system. 
& Eq.~\eqref{eq:state_sde} \\

$t\in[0,1]$ 
& The diffusion time variable of the measurement-aware forward process. 
& Sec.~\ref{subsec:measurement_aware_forward_process} \\

\hline

$X_k $ 
& The state at the measurement time $\tau_k$. 
& Sec.~\ref{subsec:Bayesian Filtering} \\

$Z_k$ 
& The measurement at the measurement time $\tau_k$. 
& Eq.~\eqref{eq:measurement} \\

$\mathbf{z}_{1:k}$ 
& The observed measurement sequence $(\mathbf{z}_1,\ldots,\mathbf{z}_k)$. 
& Sec.~\ref{subsec:Bayesian Filtering} \\

\hline

$X_t$ 
& The measurement-aware forward process. 
& Eq.~\eqref{eq:forward_process} \\

$\widetilde X_t$ 
& The non-Markovian SDE of the forward process. 
& Eq.~\eqref{eq:non_markovian_sde} \\

$\bar X_t$ 
& The Markovian projection. 
& Eq.~\eqref{eq:markovian_projection_sde} \\

$\check X_t$ 
& The linear Markov realization. 
& Eq.~\eqref{eq:linear_transition_kernel} \\

\hline

$H$ 
& The measurement operator. 
& Eq.~\eqref{eq:measurement} \\

$a(t)$ 
& The interpolation schedule with $a(0)=1$ and $a(1)=0$. 
& Sec.~\ref{subsec:measurement_aware_forward_process} \\

$h(\mathbf{x},t)$ 
& The interpolation map between the identity map and the measurement operator. 
& Eq.~\eqref{eq:interpolation_map} \\

$\Sigma(t)$ 
& The noise covariance schedule, $\Sigma(t)=\sigma^2(1-a^2(t))I$. 
& Sec.~\ref{subsec:measurement_aware_forward_process} \\

$\Sigma_{s\to t}$ 
& The covariance increment $\Sigma_{s\to t}:=\Sigma(t)-\Sigma(s)$. 
& Sec.~\ref{subsec:likelihood_score_estimation} \\

\hline

$b(\mathbf{x},t)$ 
& The projected Markov drift. 
& Eq.~\eqref{eq:markovian_projection_drift} \\

$m_1(\mathbf{x},t)$ 
& The conditional mean of the initial state. 
& Eq.~\eqref{eq:m1_m2_definition} \\

$m_2(\mathbf{x},t)$ 
& The conditional mean of the measurement. 
& Eq.~\eqref{eq:m1_m2_definition} \\

\hline

$\mu_t(\mathbf{x})$ 
& The mean of the endpoint Gaussian approximation. 
& Eq.~\eqref{eq:mu_t_definition} \\

$g(\mathbf{z},\mathbf{x},t)$ 
& The likelihood-score approximation for nonlinear measurements. 
& Eq.~\eqref{eq:nonlinear_guidance_score} \\

$s_{\mathrm{prior}}$ 
& The prior score estimator. 
& Eq.~\eqref{eq:prior_score_estimator_m} \\

$s_{\mathrm{guide}}$ 
& The guided posterior score estimator. 
& Eq.~\eqref{eq:guided_score} \\

\hline

$A$ 
& The matrix representation of a linear measurement operator.
& Thm.~\ref{thm:linear_measurement_case}\\

$A(t)$ 
& The linear interpolation matrix $A(t)=a(t)I+(1-a(t))A$. 
& Eq.~\eqref{eq:At_definition} \\

$M_{t\to 1}$ 
& The transition matrix from diffusion time $t$ to the endpoint in the linear case. 
& Eq.~\eqref{eq:linear_transition_params} \\

$\check \Sigma_{t\to 1}$ 
& The endpoint transition covariance for the linear Markov realization. 
& Eq.~\eqref{eq:linear_transition_params} \\

\hline
\end{tabular}
\vspace{-0.2cm}
\end{table}

\section{Theoretical Details}
\label{app:theory}

This section develops the SDE underlying MASF.
We first define a measurement-aware forward process $X_t$, then construct a non-Markovian SDE $\widetilde X_t$ with the same marginal distribution, and obtain a Markovian SDE $\bar X_t$ via Markovian projection.
The projected process provides the basis for endpoint likelihood-score estimation.
For nonlinear measurements, we approximate the endpoint transition law locally by a Gaussian distribution.
For linear measurements, the projected process admits a linear Markov realization, yielding a closed-form endpoint transition and likelihood score.
The resulting likelihood score is combined with the prior score to deduce the guided reverse-time sampler.

\subsection{Measurement-Aware Forward Process}
\label{app:measurement_aware_forward}

Let $a:[0,1]\to[0,1]$ be a decreasing schedule with $a(0)=1$ and $a(1)=0$, and define
\begin{align}
\label{eq:masf_sigma_t_definition}
   \Sigma(t)=\sigma^2(1-a^2(t))I.
\end{align}
For a measurement operator $H$, define
\begin{align}
\label{eq:masf_interpolation_map}
    h(\mathbf{x},t)
    =
    a(t)\mathbf{x}
    +
    \bigl(1-a(t)\bigr)H(\mathbf{x}).
\end{align}
Then, the measurement-aware forward process is
\begin{align}
\label{eq:masf_forward_process}
    X_t
    =
    h(X_0,t)
    +
    \sqrt{\Sigma(t)}\,\eta,
    \qquad
    \eta\sim\mathcal{N}(0,I),
    \qquad t\in[0,1].
\end{align}
By construction,
\begin{align}
    X_0=X,
    \qquad
    X_1=H(X_0)+\sigma\eta\stackrel{d}{=}Z.
\end{align}

\begin{theorem}[Non-Markovian SDE representation]
\label{thm:masf_non_markovian_sde}
Assume that $a(t)$ and $\Sigma(t)$ are differentiable and $\dot{\Sigma}(t)$ is positive semidefinite.
Let $\widetilde X_0\stackrel{d}{=}X_0$ be independent of the Brownian motion $B_t$.
Define
\begin{align}
\label{eq:masf_non_markovian_sde}
    d\widetilde X_t
    =
    \partial_t h(\widetilde X_0,t)\,dt
    +
    \sqrt{\dot{\Sigma}(t)}\,dB_t,
    \qquad t\in[0,1].
\end{align}
Then $\widetilde X_t$ has the same marginal distributions as the measurement-aware forward process:
\begin{align}
\label{eq:masf_non_markovian_same_marginal}
    \mathcal{L}(\widetilde X_t)=\mathcal{L}(X_t),
    \qquad t\in[0,1].
\end{align}
\end{theorem}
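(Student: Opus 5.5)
The plan is to solve the SDE \eqref{eq:masf_non_markovian_sde} explicitly, pathwise. The drift $\partial_t h(\widetilde X_0,t)=\dot a(t)\bigl(\widetilde X_0-H(\widetilde X_0)\bigr)$ depends only on the time-zero variable $\widetilde X_0$ and on $t$, not on the current state $\widetilde X_t$. So the equation is really just an integral. Integrating from $0$ to $t$ and using $h(\mathbf{x},0)=\mathbf{x}$ (since $a(0)=1$), we get
\begin{align*}
    \widetilde X_t
    =
    \widetilde X_0+\int_0^t \partial_u h(\widetilde X_0,u)\,du+\int_0^t\sqrt{\dot\Sigma(u)}\,dB_u
    =
    h(\widetilde X_0,t)+W_t,
    \qquad
    W_t:=\int_0^t\sqrt{\dot\Sigma(u)}\,dB_u .
\end{align*}
The first integral is evaluated by the fundamental theorem of calculus, which requires $a$ to be differentiable (and, to be safe, $\dot a$ integrable on $[0,1]$). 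This representation also gives existence and uniqueness of the strong solution at once, since there is no state dependence.

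The second step identifies the law of $W_t$. Because $\sqrt{\dot\Sigma(u)}$ is deterministic, $W_t$ is a Wiener integral and hence a centered Gaussian. By the Itô isometry its covariance is $\int_0^t \dot\Sigma(u)\,du=\Sigma(t)-\Sigma(0)=\Sigma(t)$, using $\Sigma(0)=\sigma^2(1-a(0)^2)I=0$. Here $\sqrt{\dot\Sigma}$ is the symmetric PSD square root, which is well defined by the assumption that $\dot\Sigma$ is positive semidefinite. In the paper's setting $\dot\Sigma(t)=-2\sigma^2 a(t)\dot a(t)I\succeq0$, so this holds for a decreasing nonnegative $a$. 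It follows that $W_t\stackrel{d}{=}\sqrt{\Sigma(t)}\,\eta$ with $\eta\sim\mathcal N(0,I)$.

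The third step combines the two pieces using independence. The variable $W_t$ is measurable with respect to the Brownian motion, and $\widetilde X_0$ is independent of it. Hence the pair $(\widetilde X_0,W_t)$ has the same joint law as $(X_0,\sqrt{\Sigma(t)}\eta)$, where $\eta$ is independent of $X_0$, as is implicit in \eqref{eq:masf_forward_process}. Applying the measurable map $(\mathbf{x},\mathbf{w})\mapsto h(\mathbf{x},t)+\mathbf{w}$ to both pairs gives $\mathcal L(\widetilde X_t)=\mathcal L(X_t)$ for every $t\in[0,1]$; the endpoints $t=0,1$ are included. Measurability of the map needs $H$ to be measurable, which is implicit in the setup.

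No step is a real obstacle. The only care needed is in the regularity bookkeeping: integrability of $\dot a$ for the fundamental theorem of calculus, square-integrability of $\sqrt{\dot\Sigma}$ so that the Wiener integral is defined, and the independence of $\eta$ from $X_0$ in the definition of the forward process. I would state these as standing assumptions; the first two are implied if $a$ is continuously differentiable on $[0,1]$.
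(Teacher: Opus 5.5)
Your proposal is correct and follows the paper's proof essentially step for step. Both integrate the state-independent drift to get $\widetilde X_t=h(\widetilde X_0,t)+\int_0^t\sqrt{\dot\Sigma(u)}\,dB_u$, identify the stochastic integral as centered Gaussian with covariance $\Sigma(t)-\Sigma(0)=\Sigma(t)$, and conclude from $\widetilde X_0\stackrel{d}{=}X_0$. Your extra points (using $a(0)=1$ so that $h(\cdot,0)$ is the identity, independence of the pair $(\widetilde X_0,W_t)$, and integrability of $\dot a$) only make explicit what the paper leaves implicit.
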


\begin{proof}
Conditioned on $\widetilde X_0$, integrating \eqref{eq:masf_non_markovian_sde} gives
\begin{align}
    \widetilde X_t
    =
    h(\widetilde X_0,t)
    +
    \int_0^t \sqrt{\dot{\Sigma}(u)}\,dB_u.
\end{align}
The stochastic integral is Gaussian with covariance
\begin{align}
    \int_0^t \dot{\Sigma}(u)\,du
    =
    \Sigma(t)-\Sigma(0)
    =
    \Sigma(t),
\end{align}
since $\Sigma(0)=0$.
Therefore,
\begin{align}
    \widetilde X_t
    =
    h(\widetilde X_0,t)
    +
    \sqrt{\Sigma(t)}\,\eta,
    \qquad
    \eta\sim\mathcal{N}(0,I).
\end{align}
Because $\widetilde X_0\stackrel{d}{=}X_0$, this representation has the same marginal law as
\begin{align}
    X_t=h(X_0,t)+\sqrt{\Sigma(t)}\,\eta.
\end{align}
Hence $\mathcal{L}(\widetilde X_t)=\mathcal{L}(X_t)$ for all $t\in[0,1]$.
\end{proof}

\subsection{Markovian Projection}
\label{app:markovian_projection}

The process $\widetilde X_t$ has the desired marginal distributions but is generally non-Markovian because its drift depends on the initial variable $\widetilde X_0$.
We therefore use Markovian projection to construct a Markov process with the same one-time marginal distributions.

\begin{theorem}[Markovian projection]
\label{thm:masf_markovian_projection}
Let $\widetilde X_t$ be the non-Markovian process in \eqref{eq:masf_non_markovian_sde}, and assume that the regularity and integrability conditions for Markovian projection hold.
Define
\begin{align}
\label{eq:masf_markovian_projection_drift}
    b(\mathbf{x},t)
    :=
    \mathbb{E}\!\left[
        \partial_t h(\widetilde X_0,t)
        \mid
        \widetilde X_t=\mathbf{x}
    \right].
\end{align}
Consider the Markovian SDE
\begin{align}
\label{eq:masf_markovian_projection_sde}
    d\bar X_t
    =
    b(\bar X_t,t)\,dt
    +
    \sqrt{\dot{\Sigma}(t)}\,dB_t,
    \qquad
    \bar X_0\stackrel{d}{=}X_0,
\end{align}
with $\bar X_0$ independent of $B_t$.
Then $\bar X_t$, $\widetilde X_t$, and $X_t$ have identical one-time marginal distributions for all $t\in[0,1]$:
\begin{align}
\label{eq:masf_markovian_projection_same_marginal}
    \mathcal{L}(\bar X_t)
    =
    \mathcal{L}(\widetilde X_t)
    =
    \mathcal{L}(X_t),
    \qquad t\in[0,1].
\end{align}
\end{theorem}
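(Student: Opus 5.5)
The plan is to prove equality of one-time marginals by a Fokker--Planck (weak forward Kolmogorov) argument, in the spirit of Gy\"ongy's mimicking theorem and its extension by Brunick and Shreve \cite{brunick2013mimicking}. The second equality $\mathcal{L}(\widetilde X_t)=\mathcal{L}(X_t)$ is exactly Theorem~\ref{thm:masf_non_markovian_sde}. The work is therefore to show $\mathcal{L}(\bar X_t)=\mathcal{L}(\widetilde X_t)$.

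First, the drift \eqref{eq:masf_markovian_projection_drift} is well defined. The integrability condition \eqref{eq:markovian_projection_integrability} makes $\partial_t h(\widetilde X_0,t)$ integrable for a.e.\ $t$. By the Doob--Dynkin lemma, its conditional expectation given $\widetilde X_t$ is a measurable function $b(\cdot,t)$; joint measurability in $(\mathbf{x},t)$ follows by disintegrating the joint law of $(t,\widetilde X_0,\widetilde X_t)$.

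Next, fix $\varphi\in C_c^\infty(\mathbb{R}^d)$ and apply It\^o's formula to $\varphi(\widetilde X_t)$. Taking expectations removes the martingale term, and the integrability assumption justifies Fubini. This gives
\begin{align*}
\frac{d}{dt}\mathbb{E}\,\varphi(\widetilde X_t)
=
\mathbb{E}\!\left[\partial_t h(\widetilde X_0,t)\cdot\nabla\varphi(\widetilde X_t)\right]
+\tfrac12\,\mathrm{tr}\!\left(\dot\Sigma(t)\nabla^2\varphi\right)(\widetilde X_t)\text{ in expectation}.
\end{align*}
Conditioning the first term on $\widetilde X_t$ (tower property) replaces $\partial_t h(\widetilde X_0,t)$ by $b(\widetilde X_t,t)$. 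Hence the marginal flow $\tilde p_t$ is a weak (distributional) solution of the linear Fokker--Planck equation
\begin{align*}
\partial_t p=-\nabla\cdot(b\,p)+\tfrac12\nabla\cdot\bigl(\dot\Sigma(t)\nabla p\bigr),
\qquad p_0=\mathcal{L}(X_0).
\end{align*}
Since $\dot\Sigma(t)$ does not depend on $\mathbf{x}$, this is exactly the Fokker--Planck equation for \eqref{eq:masf_markovian_projection_sde}. The same It\^o computation applied to $\bar X_t$ shows that $\mathcal{L}(\bar X_t)$ solves the same equation with the same initial law.

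The main obstacle is the last step: concluding that the two flows coincide. That requires either uniqueness of measure-valued solutions of this Fokker--Planck equation, or existence of a weak solution $\bar X$ whose law is the given one. I would not try to prove well-posedness of the SDE with the merely measurable drift $b$. Instead I would invoke Brunick--Shreve's mimicking theorem directly. It takes a process with integrable drift and diffusion satisfying \eqref{eq:markovian_projection_integrability} and constructs a weak solution of \eqref{eq:masf_markovian_projection_sde} with the same one-dimensional marginals as $\widetilde X_t$, which is precisely the existence claim of the theorem. Alternatively, for $t>0$ with $\dot\Sigma(t)=\sigma^2\,\tfrac{d}{dt}(1-a^2)\,I$ strictly positive, one could use the superposition principle (Figalli/Trevisan): any weak Fokker--Planck solution with $\int\!\!\int |b|\,dp_t\,dt<\infty$ is the marginal flow of some martingale-problem solution. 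That yields $\bar X$ with $\mathcal{L}(\bar X_t)=\tilde p_t$. Either route gives the chain of equalities $\mathcal{L}(\bar X_t)=\mathcal{L}(\widetilde X_t)=\mathcal{L}(X_t)$.
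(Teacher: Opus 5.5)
Your proposal is correct and rests on the same key step as the paper, whose proof simply cites the Brunick--Shreve mimicking theorem to get a solution of the projected SDE with the one-time marginals of $\widetilde X_t$, and then chains this with Theorem~\ref{thm:masf_non_markovian_sde}, exactly as you do. Your Fokker--Planck computation and the superposition-principle alternative add justification, and you correctly flag that only existence of such a $\bar X$ is established, not uniqueness; neither changes the route.
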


\begin{proof}
This is a direct application of the Markovian projection theorem of
\citet{brunick2013mimicking}.
The non-Markovian process \eqref{eq:masf_non_markovian_sde} has drift
$\partial_t h(\widetilde X_0,t)$ and diffusion coefficient
$\sqrt{\dot{\Sigma}(t)}$.
Markovian projection replaces the drift by its conditional expectation given the current state while preserving the diffusion coefficient.
Thus the projected drift is precisely \eqref{eq:masf_markovian_projection_drift}.
The projected Markovian process has the same one-time marginal distributions as $\widetilde X_t$.
Combining this with Theorem~\ref{thm:masf_non_markovian_sde} gives the stated marginal equality.
\end{proof}

\subsection{Projected Drift and Likelihood-Score Approximation}
\label{app:likelihood_score_approx}

We first express the projected drift in terms of conditional means.

\begin{proposition}[Projected drift representation]
\label{prop:masf_projected_drift}
Define
\begin{align}
\label{eq:masf_m1_m2_definition}
    m_1(\mathbf{x},t)
    :=
    \mathbb{E}\!\left[\widetilde X_0 \mid \widetilde X_t=\mathbf{x}\right],
    \qquad
    m_2(\mathbf{x},t)
    :=
    \mathbb{E}\!\left[H(\widetilde X_0) \mid \widetilde X_t=\mathbf{x}\right].
\end{align}
Then the projected drift in \eqref{eq:masf_markovian_projection_drift} is
\begin{align}
\label{eq:masf_projected_drift_m1_m2}
    b(\mathbf{x},t)
    =
    \dot a(t)
    \left(
        m_1(\mathbf{x},t)-m_2(\mathbf{x},t)
    \right).
\end{align}
\end{proposition}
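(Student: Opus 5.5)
The plan is to compute the time derivative of the interpolation map explicitly and then use linearity of conditional expectation. From the definition \eqref{eq:masf_interpolation_map}, $h(\mathbf{x},t)=a(t)\mathbf{x}+(1-a(t))H(\mathbf{x})$. Here $H$ does not depend on $t$, and $a$ is differentiable by the standing assumption of Theorem~\ref{thm:masf_non_markovian_sde}. Differentiating in $t$ for fixed $\mathbf{x}$ therefore gives
\begin{align*}
    \partial_t h(\mathbf{x},t)
    =
    \dot a(t)\,\mathbf{x}
    -
    \dot a(t)\,H(\mathbf{x})
    =
    \dot a(t)\bigl(\mathbf{x}-H(\mathbf{x})\bigr).
\end{align*}
Evaluating at the random initial state gives $\partial_t h(\widetilde X_0,t)=\dot a(t)\bigl(\widetilde X_0-H(\widetilde X_0)\bigr)$.

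Next I substitute this into the definition \eqref{eq:masf_markovian_projection_drift} of the projected drift from Theorem~\ref{thm:masf_markovian_projection}. Since $\dot a(t)$ is a deterministic scalar, it can be pulled out of the conditional expectation. Linearity of conditional expectation then splits the remaining term:
\begin{align*}
    b(\mathbf{x},t)
    =
    \dot a(t)\Bigl(
    \mathbb{E}\bigl[\widetilde X_0\mid \widetilde X_t=\mathbf{x}\bigr]
    -
    \mathbb{E}\bigl[H(\widetilde X_0)\mid \widetilde X_t=\mathbf{x}\bigr]
    \Bigr).
\end{align*}
By \eqref{eq:masf_m1_m2_definition}, the right-hand side is exactly $\dot a(t)\bigl(m_1(\mathbf{x},t)-m_2(\mathbf{x},t)\bigr)$, which is the claimed form.

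The only delicate point is measure-theoretic: the two conditional expectations must exist separately and the split must be legitimate. For this I will assume $\mathbb{E}\|\widetilde X_0\|<\infty$ and $\mathbb{E}\|H(\widetilde X_0)\|<\infty$. When $\dot a(t)\neq 0$ these follow from the integrability hypothesis \eqref{eq:markovian_projection_integrability}, or they are implicit in the well-definedness of $m_1$ and $m_2$. Under these assumptions, choosing regular versions of the conditional expectations, the identity holds for almost every $\mathbf{x}$ with respect to the law of $\widetilde X_t$, which is all that is needed for the drift of the Markovian SDE. I do not expect any genuine obstacle beyond stating these integrability conditions.
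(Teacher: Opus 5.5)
Your proof is correct and matches the paper's argument: differentiate $h$ to get $\partial_t h(\mathbf{x},t)=\dot a(t)\bigl(\mathbf{x}-H(\mathbf{x})\bigr)$, substitute into the projected drift, pull out the deterministic scalar $\dot a(t)$, and split by linearity of conditional expectation. One small correction to your side remark: the integrability hypothesis \eqref{eq:markovian_projection_integrability} only gives $\mathbb{E}\|\widetilde X_0-H(\widetilde X_0)\|<\infty$, not integrability of each term separately. The separate integrability needed for the split must therefore come from assuming that $m_1$ and $m_2$ are well defined, as your alternative justification says.
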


\begin{proof}
From the definition of $h$,
\begin{align}
    \partial_t h(\mathbf{x},t)
    =
    \dot a(t)
    \left(
        \mathbf{x}-H(\mathbf{x})
    \right).
\end{align}
Substituting this into the projected drift gives
\begin{align}
    b(\mathbf{x},t)
    &=
    \mathbb{E}\!\left[
        \dot a(t)
        \left(
            \widetilde X_0-H(\widetilde X_0)
        \right)
        \mid
        \widetilde X_t=\mathbf{x}
    \right]
    \\
    &=
    \dot a(t)
    \left(
        \mathbb{E}[\widetilde X_0\mid \widetilde X_t=\mathbf{x}]
        -
        \mathbb{E}[H(\widetilde X_0)\mid \widetilde X_t=\mathbf{x}]
    \right),
\end{align}
which gives the claim.
\end{proof}

For nonlinear measurement operators, the exact transition law from $X_t$ to the endpoint $Z=X_1$ is generally intractable under the projected SDE.
We therefore use a local Gaussian approximation for the endpoint likelihood.

\begin{lemma}[Endpoint Gaussian approximation]
\label{lem:masf_endpoint_gaussian_approx}
Let $0\le t<1$ and suppose that the conditional means in the projected drift are frozen at $(\mathbf{x},t)$ over the interval $[t,1]$.
Then
\begin{align}
\label{eq:masf_endpoint_gaussian_approx}
    Z \mid X_t=\mathbf{x}
    \approx
    \mathcal{N}
    \left(
        \mu_t(\mathbf{x}),
        \Sigma_{t\to 1}
    \right),
\end{align}
where
\begin{align}
\label{eq:masf_mu_t_definition}
    \mu_t(\mathbf{x})
    &:=
    \mathbf{x}
    -
    a(t)
    \left(
        m_1(\mathbf{x},t)-m_2(\mathbf{x},t)
    \right),
    \\
    \Sigma_{t\to 1}
    &:=
    \Sigma(1)-\Sigma(t).
\end{align}
\end{lemma}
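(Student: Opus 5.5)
The plan is to work directly with the projected SDE \eqref{eq:masf_markovian_projection_sde}, whose one-time marginals agree with those of $X_t$ by Theorem~\ref{thm:masf_markovian_projection}; in particular $\bar X_1\stackrel{d}{=}X_1\stackrel{d}{=}Z$. This lets us interpret $Z\mid X_t=\mathbf{x}$ as the endpoint transition law $\bar X_1\mid \bar X_t=\mathbf{x}$ of the Markov process. We then integrate from $t$ to $1$, using Proposition~\ref{prop:masf_projected_drift} for the drift:
\begin{align*}
    \bar X_1
    =
    \bar X_t
    +
    \int_t^1 \dot a(u)\bigl(m_1(\bar X_u,u)-m_2(\bar X_u,u)\bigr)\,du
    +
    \int_t^1 \sqrt{\dot\Sigma(u)}\,dB_u .
\end{align*}

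Next we apply the freezing hypothesis of the lemma. On $[t,1]$ we replace $m_1(\bar X_u,u)-m_2(\bar X_u,u)$ by the constant vector $m_1(\mathbf{x},t)-m_2(\mathbf{x},t)$. The drift integral then collapses to
\begin{align*}
    \Bigl(\int_t^1 \dot a(u)\,du\Bigr)\bigl(m_1(\mathbf{x},t)-m_2(\mathbf{x},t)\bigr)
    =
    \bigl(a(1)-a(t)\bigr)\bigl(m_1-m_2\bigr)
    =
    -a(t)\bigl(m_1(\mathbf{x},t)-m_2(\mathbf{x},t)\bigr),
\end{align*}
using $a(1)=0$. Adding $\bar X_t=\mathbf{x}$ gives exactly $\mu_t(\mathbf{x})$.

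For the noise term, the diffusion coefficient is deterministic and $B$ has independent increments. Hence $\int_t^1\sqrt{\dot\Sigma(u)}\,dB_u$ is independent of $\mathcal F_t$, and in particular of $\bar X_t$. By the It\^o isometry, as in the proof of Theorem~\ref{thm:masf_non_markovian_sde}, it is Gaussian with mean zero and covariance
\begin{align*}
    \int_t^1\dot\Sigma(u)\,du=\Sigma(1)-\Sigma(t)=\Sigma_{t\to 1},
\end{align*}
which is positive semidefinite since $\dot\Sigma\succeq 0$. Therefore, under the frozen drift, $\bar X_1\mid\bar X_t=\mathbf{x}$ is exactly $\mathcal N(\mu_t(\mathbf{x}),\Sigma_{t\to1})$. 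The symbol $\approx$ records only the error incurred by freezing.

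The main obstacle is justifying the identification of $Z\mid X_t$ with $\bar X_1\mid \bar X_t$. Markovian projection matches only one-time marginals, not joint laws, so this step is a modeling choice rather than a consequence of Theorem~\ref{thm:masf_markovian_projection}. I would state explicitly that the surrogate likelihood is defined through the transition law of the projected Markov process, whose endpoint marginal is the law of $Z$. This is also the reading used in \eqref{eq:endpoint_transition_approx}. With that understood, the remaining steps are direct computations.
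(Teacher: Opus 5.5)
Your proposal is correct and follows the paper's proof. You integrate the projected SDE over $[t,1]$, freeze $m_1,m_2$ at $(\mathbf{x},t)$ so the drift integral becomes $(a(1)-a(t))(m_1-m_2)=-a(t)(m_1-m_2)$, and take the diffusion term as a centered Gaussian with covariance $\int_t^1\dot\Sigma(u)\,du=\Sigma_{t\to 1}$. Your caveat is also accurate: since Markovian projection preserves only one-time marginals, $Z\mid X_t$ has to be read as the transition law of the projected Markov process, which the paper leaves implicit by working with $\bar X$ from the start.
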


\begin{proof}
The projected SDE is
\begin{align}
    d\bar X_u
    =
    b(\bar X_u,u)\,du
    +
    \sqrt{\dot{\Sigma}(u)}\,dB_u.
\end{align}
Using Proposition~\ref{prop:masf_projected_drift} and freezing
$m_1,m_2$ at $(\mathbf{x},t)$ gives
\begin{align}
    b(\bar X_u,u)
    \approx
    \dot a(u)
    \left(
        m_1(\mathbf{x},t)-m_2(\mathbf{x},t)
    \right).
\end{align}
Thus,
\begin{align}
    \int_t^1 b(\bar X_u,u)\,du
    \approx
    \left(a(1)-a(t)\right)
    \left(
        m_1(\mathbf{x},t)-m_2(\mathbf{x},t)
    \right)
    =
    -a(t)
    \left(
        m_1(\mathbf{x},t)-m_2(\mathbf{x},t)
    \right).
\end{align}
The diffusion accumulated over $[t,1]$ has covariance
\begin{align}
    \int_t^1 \dot{\Sigma}(u)\,du
    =
    \Sigma(1)-\Sigma(t).
\end{align}
Combining the drift and diffusion contributions yields the Gaussian approximation in \eqref{eq:masf_endpoint_gaussian_approx}.
\end{proof}

\begin{corollary}[Likelihood-score approximation for nonlinear measurements]
\label{cor:masf_nonlinear_likelihood_score}
Under the endpoint Gaussian approximation in Lemma~\ref{lem:masf_endpoint_gaussian_approx}, the likelihood score is approximated by
\begin{align}
\label{eq:masf_nonlinear_guidance_score}
    g(\mathbf{z},\mathbf{x},t)
    &:=
    \nabla_{\mathbf{x}}
    \log
    \mathcal{N}
    \left(
        \mathbf{z};
        \mu_t(\mathbf{x}),
        \Sigma_{t\to 1}
    \right)
    \\
    &=
    \left(\nabla_{\mathbf{x}}\mu_t(\mathbf{x})\right)^{\mathsf T}
    \Sigma_{t\to 1}^{-1}
    \left(
        \mathbf{z}-\mu_t(\mathbf{x})
    \right).
\end{align}
\end{corollary}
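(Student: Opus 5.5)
The corollary follows by differentiating the log of the Gaussian surrogate from Lemma~\ref{lem:masf_endpoint_gaussian_approx} with respect to $\mathbf{x}$. First we identify which quantities depend on $\mathbf{x}$. The covariance $\Sigma_{t\to 1}=\Sigma(1)-\Sigma(t)=\sigma^2 a^2(t)I$ depends only on $t$. It is positive definite for $0\le t<1$ as long as $a(t)>0$ there. This holds because $a$ is decreasing with $a(1)=0$; if $a$ vanished earlier, we would restrict to times with $a(t)>0$. Hence the normalizing constant $-\tfrac12\log\det(2\pi\Sigma_{t\to1})$ does not depend on $\mathbf{x}$ and drops out of the gradient. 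The only $\mathbf{x}$-dependence is through the mean $\mu_t(\mathbf{x})$, which the proposition assumes differentiable. Concretely,
\begin{align*}
\log\mathcal{N}\bigl(\mathbf{z};\mu_t(\mathbf{x}),\Sigma_{t\to1}\bigr)
= -\tfrac12\bigl(\mathbf{z}-\mu_t(\mathbf{x})\bigr)^{\mathsf T}\Sigma_{t\to1}^{-1}\bigl(\mathbf{z}-\mu_t(\mathbf{x})\bigr) + C(t).
\end{align*}

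Next we apply the chain rule. Set $\mathbf{r}(\mathbf{x})=\mathbf{z}-\mu_t(\mathbf{x})$ and $Q(\mathbf{r})=\tfrac12\mathbf{r}^{\mathsf T}\Sigma_{t\to1}^{-1}\mathbf{r}$. Since $\Sigma_{t\to1}^{-1}$ is symmetric, $\nabla_{\mathbf{r}}Q=\Sigma_{t\to1}^{-1}\mathbf{r}$. The Jacobian of $\mathbf{r}$ is $-\nabla_{\mathbf{x}}\mu_t(\mathbf{x})$, where we use the convention that $\nabla_{\mathbf{x}}\mu_t$ is the Jacobian matrix with rows indexed by the output. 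The gradient of the composite $-Q(\mathbf{r}(\mathbf{x}))$ is therefore
\begin{align*}
-\bigl(-\nabla_{\mathbf{x}}\mu_t(\mathbf{x})\bigr)^{\mathsf T}\Sigma_{t\to1}^{-1}\bigl(\mathbf{z}-\mu_t(\mathbf{x})\bigr)
=\bigl(\nabla_{\mathbf{x}}\mu_t(\mathbf{x})\bigr)^{\mathsf T}\Sigma_{t\to1}^{-1}\bigl(\mathbf{z}-\mu_t(\mathbf{x})\bigr),
\end{align*}
which is the claimed expression for $g$.

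The main point needing care is the interpretation, not the calculus. The statement defines $g$ as the gradient of the surrogate density, and we take that surrogate as given from Lemma~\ref{lem:masf_endpoint_gaussian_approx}. We do \emph{not} attempt to show that it equals $\nabla\log p(\mathbf{z}\mid\mathbf{x}_t)$. In particular, the $\mathbf{x}$-dependence enters only through the frozen mean $\mu_t(\mathbf{x})$, which contains $m_1,m_2$; the freezing point and the differentiation variable are deliberately identified. Beyond that, we only need to record the transpose convention for the Jacobian so the formula is unambiguous.
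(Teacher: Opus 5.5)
Your proposal is correct and follows the same route as the paper: differentiate the Gaussian log-density from Lemma~\ref{lem:masf_endpoint_gaussian_approx} with respect to $\mathbf{x}$ and apply the chain rule through $\mu_t(\mathbf{x})$. Your additional remarks make explicit what the paper's one-line proof leaves implicit: that $\Sigma_{t\to 1}=\sigma^2 a^2(t) I$ is independent of $\mathbf{x}$ and invertible when $a(t)>0$, and which Jacobian-transpose convention is used.
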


\begin{proof}
This follows by differentiating the Gaussian log-density in
\eqref{eq:masf_endpoint_gaussian_approx} with respect to $\mathbf{x}$ and applying the chain rule through $\mu_t(\mathbf{x})$.
\end{proof}

\subsection{Guided Reverse-Time Sampling}
\label{app:guided_reverse_sampling}

We combine the prior score and the likelihood score using the posterior-score decomposition.
For nonlinear measurements, we use the likelihood-score approximation in \eqref{eq:masf_nonlinear_guidance_score}.
For linear measurements, we use the closed-form likelihood score derived in Section~\ref{app:linear_likelihood_score}.

\begin{lemma}[Guided reverse update]
\label{lem:masf_guided_reverse_update}
Let $s>t$ be reverse-time discretization points and define $\Sigma_{s\to t}:=\Sigma(t)-\Sigma(s)$.
Given $\mathbf{x}_s$, define
\begin{align}
\label{eq:masf_hhat_s_definition}
    \widehat h_s(\mathbf{x}_s)
    :=
    a(s)m_1(\mathbf{x}_s,s)
    +
    \bigl(1-a(s)\bigr)m_2(\mathbf{x}_s,s).
\end{align}
Estimate the prior score by
\begin{align}
\label{eq:masf_prior_score_estimator_m}
    s_{\mathrm{prior}}(\mathbf{x}_s,s)
    :=
    -\Sigma(s)^{-1}
    \left(
        \mathbf{x}_s-\widehat h_s(\mathbf{x}_s)
    \right).
\end{align}
Let
\begin{align}
\label{eq:masf_guided_score}
    s_{\mathrm{guide}}(\mathbf{x}_s,s,\mathbf{z})
    :=
    s_{\mathrm{prior}}(\mathbf{x}_s,s)
    +
    g(\mathbf{z},\mathbf{x}_s,s).
\end{align}
Then the guided reverse update is
\begin{align}
\label{eq:masf_guided_sampler_step}
    \mathbf{x}_t
    =
    \mathbf{x}_s
    +
    \bigl(a(t)-a(s)\bigr)
    \left(
        m_1(\mathbf{x}_s,s)-m_2(\mathbf{x}_s,s)
    \right)
    -
    \Sigma_{s\to t}
    s_{\mathrm{guide}}(\mathbf{x}_s,s,\mathbf{z})
    +
    \sqrt{-\Sigma_{s\to t}}\,\eta,
\end{align}
where $\eta\sim\mathcal{N}(0,I)$.
\end{lemma}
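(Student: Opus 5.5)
The plan is to derive the update as one Euler--Maruyama step of the reverse-time SDE associated with the Markovian projection \eqref{eq:masf_markovian_projection_sde}, with the posterior score substituted for the marginal score. Theorem~\ref{thm:masf_markovian_projection} gives a Markov diffusion $d\bar X_u=b(\bar X_u,u)\,du+\sqrt{\dot\Sigma(u)}\,dB_u$ with the same marginals $p_u$ as the forward process. Anderson's theorem then gives the reverse-time dynamics
\begin{align*}
d\bar X_u=\bigl[b(\bar X_u,u)-\dot\Sigma(u)\nabla\log p_u(\bar X_u)\bigr]du+\sqrt{\dot\Sigma(u)}\,d\bar B_u ,
\end{align*}
run from $u=s$ down to $u=t<s$. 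Conditioning on $Z=\mathbf z$ replaces $\nabla\log p_u$ by the posterior score, which decomposes as prior plus likelihood score in the manner of \eqref{eq:posterior_score}. I then discretize on $[t,s]$, freezing all state-dependent quantities at $(\mathbf x_s,s)$.

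The key steps, in order, are as follows.
\begin{enumerate}
\item \emph{Drift.} By Proposition~\ref{prop:masf_projected_drift}, $b=\dot a(u)(m_1-m_2)$. With $m_1,m_2$ frozen at $(\mathbf x_s,s)$, integrating $\dot a$ from $s$ to $t$ gives exactly $(a(t)-a(s))(m_1-m_2)$. This matches the freezing used in Lemma~\ref{lem:masf_endpoint_gaussian_approx}.
\item \emph{Score term.} Integrating $-\dot\Sigma(u)$ from $s$ to $t$, with the score frozen, yields $-(\Sigma(t)-\Sigma(s))\,s_{\mathrm{guide}}=-\Sigma_{s\to t}\,s_{\mathrm{guide}}$. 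Here $\Sigma_{s\to t}$ is negative semidefinite because $a$ is decreasing.
\item \emph{Noise.} The reverse diffusion accumulates covariance $\int_t^s\dot\Sigma=\Sigma(s)-\Sigma(t)=-\Sigma_{s\to t}$, which gives the term $\sqrt{-\Sigma_{s\to t}}\,\eta$.
\item \emph{Prior score.} Conditionally on $\widetilde X_0$, the forward law is $\widetilde X_s\mid\widetilde X_0\sim\mathcal N(h(\widetilde X_0,s),\Sigma(s))$ by Theorem~\ref{thm:masf_non_markovian_sde}. Tweedie's identity then gives $\nabla\log p_s(\mathbf x)=-\Sigma(s)^{-1}\bigl(\mathbf x-\mathbb E[h(\widetilde X_0,s)\mid\widetilde X_s=\mathbf x]\bigr)$. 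By linearity of conditional expectation, $\mathbb E[h(\widetilde X_0,s)\mid\widetilde X_s=\mathbf x]=a(s)m_1+(1-a(s))m_2=\widehat h_s$, which is \eqref{eq:masf_prior_score_estimator_m}. I will verify the Tweedie step directly by differentiating $p_s(\mathbf x)=\int\mathcal N(\mathbf x;h(\mathbf x_0,s),\Sigma(s))\,p_0(d\mathbf x_0)$ under the integral.
\item \emph{Likelihood score.} Take $g$ from Corollary~\ref{cor:masf_nonlinear_likelihood_score}, or the exact form in the linear case.
\end{enumerate}
Summing the drift, score and noise contributions gives \eqref{eq:masf_guided_sampler_step}.

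The main obstacle is justifying the reverse-time step. Anderson's theorem applies to the Markov process $\bar X$, not to $\widetilde X$, so it needs regularity of $b$ and of $p_u$; I will assume these as part of the Markovian-projection hypotheses. The posterior-guided reverse SDE also needs care: the posterior marginal score at time $u$ is $\nabla\log p_u(\mathbf x)+\nabla\log p(\mathbf z\mid\bar X_u=\mathbf x)$, which is an h-transform of the projected process. I will state this decomposition and note that the likelihood factor is the endpoint transition density of $\bar X$, which is why it is well-defined via Lemma~\ref{lem:masf_endpoint_gaussian_approx}.

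Singularity of $\Sigma(s)^{-1}$ near $s=0$ and of $\Sigma_{s\to1}^{-1}$ near $s=1$ is excluded by restricting to $0<t<s\le1-\varepsilon$. Apart from these points, the result is just the statement that the update is the exponential-integrator-type discretization with frozen coefficients.
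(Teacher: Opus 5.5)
Your proposal is correct and follows the same route as the paper's proof. Both treat the update as a frozen-coefficient discretization on $[t,s]$ of the reverse-time projected SDE: $\int_s^t\dot a(u)\,du=a(t)-a(s)$ gives the drift, $-\Sigma_{s\to t}\succeq 0$ serves as both the score-correction factor and the noise covariance, and the posterior score is split into $s_{\mathrm{prior}}$ and $g$. You additionally make explicit the Tweedie identity $\nabla\log p_s(\mathbf x)=-\Sigma(s)^{-1}(\mathbf x-\widehat h_s(\mathbf x))$, which the paper only invokes and which is exact for exact $m_1,m_2$.

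One caution on your motivation: the paper makes the same move, and the update formula does not depend on it. The time reversal of the $h$-transform conditioned on $\bar X_1=\mathbf z$ has the unguided drift $b-\dot\Sigma\nabla\log p_u$, because the extra forward drift $\dot\Sigma\nabla\log p(\mathbf z\mid\bar X_u=\cdot)$ cancels the extra score term. So substituting the posterior score should be presented as a modeling choice, not as something derived from that conditioning.
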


\begin{proof}
The first drift term follows from discretizing the projected forward drift in reverse time:
\begin{align}
    \int_s^t
    \dot a(u)
    \left(
        m_1(\mathbf{x}_s,s)-m_2(\mathbf{x}_s,s)
    \right)du
    \approx
    \bigl(a(t)-a(s)\bigr)
    \left(
        m_1(\mathbf{x}_s,s)-m_2(\mathbf{x}_s,s)
    \right).
\end{align}
The score term is obtained from the posterior-score decomposition
\begin{align}
    \nabla_{\mathbf{x}_s}\log p(\mathbf{x}_s\mid \mathbf{z})
    =
    \nabla_{\mathbf{x}_s}\log p(\mathbf{x}_s)
    +
    \nabla_{\mathbf{x}_s}\log p(\mathbf{z}\mid \mathbf{x}_s),
\end{align}
with the two terms approximated by $s_{\mathrm{prior}}$ and $g$, respectively.
Since $\Sigma(t)$ is increasing in the positive-semidefinite order and $s>t$,
$\Sigma_{s\to t}=\Sigma(t)-\Sigma(s)\preceq 0$.
Thus $-\Sigma_{s\to t}\succeq 0$ defines the reverse-time covariance used both in the score correction and in the diffusion term $\sqrt{-\Sigma_{s\to t}}\,\eta$.
\end{proof}

In practice, we use time-dependent weights to balance the prior and likelihood terms:
\begin{align}
\label{eq:masf_scaled_guided_score}
    s_{\mathrm{guide}}(\mathbf{x}_s,s,\mathbf{z})
    =
    \lambda_s(s)\,s_{\mathrm{prior}}(\mathbf{x}_s,s)
    +
    \lambda_g(s)\,g(\mathbf{z},\mathbf{x}_s,s).
\end{align}

\subsection{Learning Conditional Means}
\label{app:learning_conditional_means}

It remains to estimate the conditional means $m_1$ and $m_2$ appearing in the projected drift and guided sampler.
Although these conditional means are defined through the non-Markovian process $\widetilde X_t$, the explicit forward process $X_t$ induces the same joint law for the pair of initial and perturbed states:
\begin{align}
    X_t
    =
    h(X_0,t)+\sqrt{\Sigma(t)}\,\eta,
    \qquad
    \widetilde X_t
    =
    h(\widetilde X_0,t)+\sqrt{\Sigma(t)}\,\eta,
    \qquad
    \widetilde X_0\stackrel{d}{=}X_0 .
\end{align}
Thus, in practice, the conditional means can be estimated using samples from the forward process in \eqref{eq:masf_forward_process}.
The following proposition shows that the corresponding estimators are obtained by minimizing denoising $L_2$ objectives.

\begin{proposition}[Denoising objective for conditional means]
\label{prop:masf_training_conditional_means}
Let
\begin{align}
    X_t=h(X_0,t)+\sqrt{\Sigma(t)}\,\eta,
    \qquad
    \eta\sim\mathcal{N}(0,I).
\end{align}
Consider neural estimators $m_{1,\theta}$ and $m_{2,\theta}$ trained by
\begin{align}
\label{eq:masf_m1_m2_training_loss}
    \mathcal{L}_{\mathrm{reg}}(\theta)
    =
    \mathbb{E}_{t,X_0,\eta}
    \left[
        \left\|
            m_{1,\theta}(X_t,t)-X_0
        \right\|^2
        +
        \left\|
            m_{2,\theta}(X_t,t)-H(X_0)
        \right\|^2
    \right].
\end{align}
Then the minimizers satisfy
\begin{align}
\label{eq:masf_conditional_mean_optimality}
    m_{1,\theta}^{\star}(\mathbf{x},t)
    =
    \mathbb{E}[X_0\mid X_t=\mathbf{x}],
    \qquad
    m_{2,\theta}^{\star}(\mathbf{x},t)
    =
    \mathbb{E}[H(X_0)\mid X_t=\mathbf{x}].
\end{align}
Equivalently, since $X_t$ and $\widetilde X_t$ admit the same conditional representation with $X_0\stackrel{d}{=}\widetilde X_0$, these minimizers recover the conditional means used in \eqref{eq:masf_m1_m2_definition}.
\end{proposition}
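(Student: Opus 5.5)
The proof is the standard $L^2$-projection characterization of conditional expectation, applied for each fixed $t$ and then pooled over $t$. Fix $t$ and write $Y_1:=X_0$ and $Y_2:=H(X_0)$. We assume $\mathbb{E}\|X_0\|^2<\infty$ and $\mathbb{E}\|H(X_0)\|^2<\infty$, so that both targets lie in $L^2$. For an arbitrary measurable $f$ with $f(X_t,t)\in L^2$, the plan is to expand
\begin{align*}
    \mathbb{E}\|f(X_t,t)-Y_i\|^2
    =
    \mathbb{E}\|f(X_t,t)-\mathbb{E}[Y_i\mid X_t]\|^2
    +
    \mathbb{E}\|\mathbb{E}[Y_i\mid X_t]-Y_i\|^2 .
\end{align*}
The cross term vanishes by the tower property: $f(X_t,t)-\mathbb{E}[Y_i\mid X_t]$ is $\sigma(X_t)$-measurable, while $\mathbb{E}\big[\mathbb{E}[Y_i\mid X_t]-Y_i\mid X_t\big]=0$.

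The second summand does not depend on $f$. Hence the loss is minimized exactly when $f(\mathbf{x},t)=\mathbb{E}[Y_i\mid X_t=\mathbf{x}]$ for $\mathcal{L}(X_t)$-a.e.\ $\mathbf{x}$. The two terms of $\mathcal{L}_{\mathrm{reg}}$ involve separate functions $m_{1,\theta}$ and $m_{2,\theta}$, so they decouple and can be minimized separately. The outer expectation over $t$ is an integral of nonnegative per-$t$ losses, so a minimizer over functions of $(\mathbf{x},t)$ is obtained by minimizing pointwise in $t$ (a.e.\ in $t$). We take the minimization over a sufficiently rich class, namely all measurable functions or a parametric family that contains the conditional mean; this is the implicit reading of ``minimizers'' in the statement.

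For the final sentence, we identify these conditional means with $m_1$ and $m_2$ from \eqref{eq:masf_m1_m2_definition}. The key observation is that the pairs $(X_0,X_t)$ and $(\widetilde X_0,\widetilde X_t)$ have the same joint law. The proof of Theorem~\ref{thm:masf_non_markovian_sde} gives $\widetilde X_t=h(\widetilde X_0,t)+\sqrt{\Sigma(t)}\,\eta$, where the Gaussian $\eta$ is independent of $\widetilde X_0$ because $\widetilde X_0$ is independent of $B$. Both pairs are therefore the image of (initial law)$\otimes\mathcal{N}(0,I)$ under the same measurable map $(\mathbf{x}_0,\eta)\mapsto(\mathbf{x}_0,h(\mathbf{x}_0,t)+\sqrt{\Sigma(t)}\eta)$. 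Conditional expectations of $X_0$ and of $H(X_0)$ given $X_t$ are determined by this joint law, up to null sets of the common marginal. Hence $\mathbb{E}[X_0\mid X_t=\mathbf{x}]=m_1(\mathbf{x},t)$ and $\mathbb{E}[H(X_0)\mid X_t=\mathbf{x}]=m_2(\mathbf{x},t)$ a.e.

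The only delicate step is this last identification, because conditional expectations are defined only up to null sets. We handle it by fixing versions through the joint law: take a Borel function $\phi$ with $\mathbb{E}[Y\mid X_t]=\phi(X_t)$, characterized by $\mathbb{E}[Y\,\psi(X_t)]=\mathbb{E}[\phi(X_t)\,\psi(X_t)]$ for all bounded Borel $\psi$. This characterization depends only on $\mathcal{L}(Y,X_t)$, so the same $\phi$ works for the tilde process. At $t=0$, where $\Sigma(0)=0$, the conditional means are simply $\mathbf{x}$ and $H(\mathbf{x})$, and the argument still applies.
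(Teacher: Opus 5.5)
Your proof is correct and follows the same route as the paper. Both use the $L^2$-projection characterization of the conditional mean at each fixed $t$, applied with $Y=X_0$ and $Y=H(X_0)$, and then identify these conditional means with $m_1,m_2$ because $(X_0,X_t)$ and $(\widetilde X_0,\widetilde X_t)$ have the same joint law; you additionally make explicit the details the paper leaves implicit (square-integrability of the targets, the vanishing cross term, decoupling of the two losses and pooling over $t$, richness of the function class, and fixing versions through the joint law).
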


\begin{proof}
This follows from the standard $L^2$ projection identity.
For a fixed $t$, the minimizer of
\begin{align}
    \mathbb{E}\left[\|f(X_t,t)-Y\|^2\right]
\end{align}
is the conditional mean $\mathbb{E}[Y\mid X_t]$.
Applying this identity with $Y=X_0$ gives
\begin{align}
    m_{1,\theta}^{\star}(\mathbf{x},t)
    =
    \mathbb{E}[X_0\mid X_t=\mathbf{x}],
\end{align}
and applying it with $Y=H(X_0)$ gives
\begin{align}
    m_{2,\theta}^{\star}(\mathbf{x},t)
    =
    \mathbb{E}[H(X_0)\mid X_t=\mathbf{x}].
\end{align}
Because the non-Markovian process satisfies
$\widetilde X_t=h(\widetilde X_0,t)+\sqrt{\Sigma(t)}\eta$ with
$\widetilde X_0\stackrel{d}{=}X_0$, the conditional laws induced by
$(X_0,X_t)$ and $(\widetilde X_0,\widetilde X_t)$ coincide.
Therefore, the denoising objectives estimate the conditional means required in the projected drift and guided sampler.
\end{proof}

\subsection{Closed-Form Likelihood Score for Linear Measurements}
\label{app:linear_likelihood_score}
We now consider the linear case, where $H(X_0)=AX_0$ for some matrix $A$.

\begin{proposition}[Conditional means for the linear forward process]
\label{prop:masf_linear_conditional_means}
For the linear measurement-aware forward process
\begin{align}
\label{eq:masf_linear_forward_process}
    X_t
    =
    A(t)X_0
    +
    \sqrt{\Sigma(t)}\,\eta,
    \qquad
    A(t)
    =
    a(t)I+\bigl(1-a(t)\bigr)A,
\end{align}
the conditional means satisfy
\begin{align}
\label{eq:masf_linear_m2_m1}
    m_2(\mathbf{x},t)
    =
    A\,m_1(\mathbf{x},t).
\end{align}
\end{proposition}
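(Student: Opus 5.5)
The claim follows from linearity of conditional expectation, once we note that $h(\mathbf{x},t)=A(t)\mathbf{x}$ in this case. First, substitute $H(\mathbf{x})=A\mathbf{x}$ into the interpolation map \eqref{eq:masf_interpolation_map}:
\[
h(\mathbf{x},t)=a(t)\mathbf{x}+(1-a(t))A\mathbf{x}=A(t)\mathbf{x}.
\]
This identifies the forward process \eqref{eq:masf_forward_process} with the linear form \eqref{eq:masf_linear_forward_process}. By Theorem~\ref{thm:masf_non_markovian_sde} and its proof, the non-Markovian process has the same representation, $\widetilde X_t=A(t)\widetilde X_0+\sqrt{\Sigma(t)}\,\eta$, with $\widetilde X_0\stackrel{d}{=}X_0$ and $\eta$ independent of $\widetilde X_0$.

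Next, use the definition \eqref{eq:masf_m1_m2_definition}:
\[
m_2(\mathbf{x},t)=\mathbb{E}[A\widetilde X_0\mid \widetilde X_t=\mathbf{x}].
\]
The matrix $A$ is deterministic and does not depend on $t$ or $\mathbf{x}$, so linearity of conditional expectation lets us pull it out. This gives $A\,\mathbb{E}[\widetilde X_0\mid\widetilde X_t=\mathbf{x}]=A\,m_1(\mathbf{x},t)$, which is the claim. The identity holds $\mathcal{L}(\widetilde X_t)$-almost everywhere, which is the sense in which conditional means are defined.

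There is no real obstacle beyond measure-theoretic bookkeeping. We need $\widetilde X_0$ to be integrable so that $m_1$ exists, which is implied by the integrability assumption of Theorem~\ref{thm:markovian_projection} because $\partial_t h$ is linear in $\widetilde X_0$. We also fix versions of the two conditional expectations so that the equality holds pointwise. If we want the statement for all $\mathbf{x}$, we take the version of $m_2$ defined as $A m_1$. When $t>0$, $\Sigma(t)$ is nondegenerate, so $\widetilde X_t$ has a positive Gaussian-convolved density; the conditional means can then be written as explicit ratios of integrals, and $A$ factors out of the numerator directly, giving a version valid everywhere. At $t=0$ the identity is trivial, since $m_1(\mathbf{x},0)=\mathbf{x}$ and $m_2(\mathbf{x},0)=A\mathbf{x}$.
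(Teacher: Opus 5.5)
Your argument is the paper's own: with $H(\mathbf{x})=A\mathbf{x}$, the deterministic matrix $A$ comes out of the conditional expectation, $\mathbb{E}[A\widetilde X_0\mid\widetilde X_t=\mathbf{x}]=A\,m_1(\mathbf{x},t)$, and your extra remarks on choosing an everywhere-valid version and on the $t=0$ case are fine. One side claim is wrong, however: the integrability hypothesis of Theorem~\ref{thm:markovian_projection} does not give $\mathbb{E}\|\widetilde X_0\|<\infty$. Since $\partial_t h(\mathbf{x},t)=\dot a(t)(I-A)\mathbf{x}$ and $\int_0^1|\dot a|\,dt=1$, it only gives $\mathbb{E}\|(I-A)\widetilde X_0\|<\infty$, which fails to control $\widetilde X_0$ when $I-A$ is singular, as for the grid and center masks. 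Integrability of $X_0$ should instead be taken as the standing assumption under which $m_1$ is defined at all.
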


\begin{proof}
By linearity of $H$,
\begin{align}
    m_2(\mathbf{x},t)
    =
    \mathbb{E}[H(X_0)\mid X_t=\mathbf{x}]
    =
    \mathbb{E}[AX_0\mid X_t=\mathbf{x}]
    =
    A\,\mathbb{E}[X_0\mid X_t=\mathbf{x}]
    =
    A\,m_1(\mathbf{x},t).
\end{align}
\end{proof}

\begin{proposition}[Score-form projected drift for linear measurements]
\label{prop:masf_linear_markovian_projection_drift}
Assume that $A(t)$ is invertible for $t\in[0,1)$.
Let $p_t$ denote the density of $X_t$, and define
\begin{align}
\label{eq:masf_Ft_definition}
    F_t
    :=
    \dot A(t)A(t)^{-1}.
\end{align}
Then the projected drift can be written as
\begin{align}
\label{eq:masf_linear_projected_drift_score_form}
    b(\mathbf{x},t)
    =
    F_t\mathbf{x}
    +
    F_t\Sigma(t)\nabla_{\mathbf{x}}\log p_t(\mathbf{x}).
\end{align}
\end{proposition}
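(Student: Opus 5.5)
The plan is to reduce the projected drift to a multiple of $m_1$, and then eliminate $m_1$ in favour of the score using Tweedie's formula for the Gaussian perturbation $X_t=A(t)X_0+\sqrt{\Sigma(t)}\,\eta$.

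\textbf{Step 1: drift as a multiple of $m_1$.} By Proposition~\ref{prop:masf_projected_drift} and Proposition~\ref{prop:masf_linear_conditional_means},
\begin{align*}
b(\mathbf{x},t)=\dot a(t)\bigl(m_1(\mathbf{x},t)-A\,m_1(\mathbf{x},t)\bigr)=\dot a(t)(I-A)\,m_1(\mathbf{x},t).
\end{align*}
Differentiating $A(t)=a(t)I+(1-a(t))A$ gives $\dot A(t)=\dot a(t)(I-A)$. Hence
\begin{align*}
b(\mathbf{x},t)=\dot A(t)\,m_1(\mathbf{x},t).
\end{align*}

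\textbf{Step 2: Tweedie's formula.} Fix $t\in(0,1)$, so that $\Sigma(t)=\sigma^2(1-a^2(t))I$ is positive definite. Write the marginal density as
\begin{align*}
p_t(\mathbf{x})=\int \mathcal{N}\bigl(\mathbf{x};A(t)\mathbf{x}_0,\Sigma(t)\bigr)\,p_0(d\mathbf{x}_0).
\end{align*}
Differentiate under the integral sign; this is justified by dominated convergence, since the Gaussian kernel and its gradient are bounded uniformly in $\mathbf{x}_0$ once $\Sigma(t)\succ 0$. Then divide by $p_t(\mathbf{x})$ to obtain
\begin{align*}
\nabla_{\mathbf{x}}\log p_t(\mathbf{x})=-\Sigma(t)^{-1}\bigl(\mathbf{x}-A(t)\,m_1(\mathbf{x},t)\bigr).
\end{align*}
Here I use that $m_1$ is also the conditional mean under $(X_0,X_t)$, since the pairs $(X_0,X_t)$ and $(\widetilde X_0,\widetilde X_t)$ have the same joint law, as noted before Proposition~\ref{prop:masf_training_conditional_means}. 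Rearranging gives $A(t)m_1=\mathbf{x}+\Sigma(t)\nabla\log p_t(\mathbf{x})$. By the invertibility assumption on $A(t)$,
\begin{align*}
m_1(\mathbf{x},t)=A(t)^{-1}\bigl(\mathbf{x}+\Sigma(t)\nabla_{\mathbf{x}}\log p_t(\mathbf{x})\bigr).
\end{align*}

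\textbf{Step 3: substitute.} Plugging this into Step 1 gives
\begin{align*}
b(\mathbf{x},t)=\dot A(t)A(t)^{-1}\bigl(\mathbf{x}+\Sigma(t)\nabla\log p_t(\mathbf{x})\bigr)=F_t\mathbf{x}+F_t\Sigma(t)\nabla_{\mathbf{x}}\log p_t(\mathbf{x}).
\end{align*}
No commutation issue arises, because the order $F_t\Sigma(t)$ comes out directly; moreover $\Sigma(t)$ is a scalar multiple of $I$ in any case.

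\textbf{Main obstacle: the endpoint $t=0$.} At $t=0$ we have $\Sigma(0)=0$, $p_0$ may not have a smooth density, and Tweedie's formula is unavailable. There I would argue directly: $X_0$ is observed exactly, so $m_1(\mathbf{x},0)=\mathbf{x}$, and $A(0)=I$ gives $F_0=\dot A(0)$. Thus $b(\mathbf{x},0)=F_0\mathbf{x}$, which agrees with the formula once the term $\Sigma(0)\nabla\log p_0$ is interpreted as zero, or as the limit $t\downarrow 0$. The remaining technical point is to state mild regularity assumptions (positivity of $p_t$ and finiteness of the first moment of $X_0$) so that $m_1$ and the exchange of derivative and integral are well defined for $t\in(0,1)$.
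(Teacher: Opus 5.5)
Your proposal is correct and matches the paper's proof step for step. Like the paper, you reduce $b$ to $\dot A(t)\,m_1$ using $m_2=A\,m_1$ and $\dot A(t)=\dot a(t)(I-A)$, then invert the denoising (Tweedie) identity $A(t)m_1=\mathbf{x}+\Sigma(t)\nabla_{\mathbf{x}}\log p_t(\mathbf{x})$ using invertibility of $A(t)$. Your justification of differentiating under the integral and your separate treatment of $t=0$ (where $\Sigma(0)=0$) are extra care that the paper omits, but they do not change the argument.
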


\begin{proof}
For the linear Gaussian corruption
\begin{align}
    X_t=A(t)X_0+\sqrt{\Sigma(t)}\,\eta,
\end{align}
the denoising identity gives
\begin{align}
\label{eq:masf_linear_denoising_identity}
    A(t)m_1(\mathbf{x},t)
    =
    \mathbf{x}
    +
    \Sigma(t)\nabla_{\mathbf{x}}\log p_t(\mathbf{x}).
\end{align}
Since $m_2=A m_1$, Proposition~\ref{prop:masf_projected_drift} yields
\begin{align}
    b(\mathbf{x},t)
    =
    \dot a(t)(I-A)m_1(\mathbf{x},t).
\end{align}
Using $\dot A(t)=\dot a(t)(I-A)$ and substituting \eqref{eq:masf_linear_denoising_identity}, we obtain
\begin{align}
    b(\mathbf{x},t)
    =
    \dot A(t)A(t)^{-1}
    \left(
        \mathbf{x}
        +
        \Sigma(t)\nabla_{\mathbf{x}}\log p_t(\mathbf{x})
    \right),
\end{align}
which gives \eqref{eq:masf_linear_projected_drift_score_form}.
\end{proof}

\begin{theorem}[Linear Markov realization]
\label{thm:masf_linear_markov_realization}
Assume that $A(t)$ is invertible for $t\in[0,1)$ and let
\begin{align}
    F_t=\dot A(t)A(t)^{-1}.
\end{align}
Then the linear measurement-aware forward process admits a linear Markov realization
\begin{align}
\label{eq:masf_linear_markov_realization_sde}
    d\check X_t
    =
    F_t\check X_t\,dt
    +
    G_t\,dB_t,
\end{align}
where \(G_t\) is chosen such that
\begin{align}
\label{eq:masf_Gt_condition}
    G_tG_t^{\mathsf T}
    =
    \dot\Sigma(t)
    -
    F_t\Sigma(t)
    -
    \Sigma(t) F_t^{\mathsf T},
\end{align}
provided that the right-hand side is positive semidefinite.
Moreover,
\begin{align}
    \mathcal{L}(\check X_t)=\mathcal{L}(X_t),
    \qquad t\in[0,1].
\end{align}
\end{theorem}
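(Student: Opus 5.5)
We will show that the linear SDE \eqref{eq:masf_linear_markov_realization_sde} and the projected SDE \eqref{eq:masf_markovian_projection_sde} have marginals satisfying the same Fokker--Planck equation with the same initial law. Uniqueness of that equation then gives $\mathcal{L}(\check X_t)=\mathcal{L}(\bar X_t)$, and Theorem~\ref{thm:masf_markovian_projection} gives $\mathcal{L}(\bar X_t)=\mathcal{L}(X_t)$. We take $\check X_0\stackrel{d}{=}X_0$, independent of $B_t$.

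\textbf{Step 1 (Fokker--Planck for the forward marginals).} By Theorem~\ref{thm:masf_markovian_projection}, $p_t$ (the density of $X_t$) is a weak solution of
\begin{align*}
\partial_t p_t = -\nabla\cdot\bigl(b(\cdot,t)p_t\bigr) + \tfrac12 \nabla\cdot\bigl(\dot\Sigma(t)\nabla p_t\bigr).
\end{align*}
Here we use that the diffusion matrix $\dot\Sigma(t)$ does not depend on $\mathbf{x}$. We then substitute the score form of the drift from Proposition~\ref{prop:masf_linear_markovian_projection_drift}, $b=F_t\mathbf{x}+F_t\Sigma(t)\nabla\log p_t$, and use $p_t\nabla\log p_t=\nabla p_t$. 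This gives
\begin{align*}
\partial_t p_t = -\nabla\cdot(F_t\mathbf{x}\,p_t) + \tfrac12\nabla\cdot\Bigl(\bigl(\dot\Sigma(t)-2F_t\Sigma(t)\bigr)\nabla p_t\Bigr).
\end{align*}

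\textbf{Step 2 (symmetrization).} For a constant matrix $C$, $\nabla\cdot(C\nabla p)=\sum_{ij}C_{ij}\partial_i\partial_j p$, so only the symmetric part of $C$ matters. Replacing $2F_t\Sigma(t)$ by $F_t\Sigma(t)+\Sigma(t)F_t^{\mathsf T}$ therefore leaves the equation unchanged. The result is exactly the Fokker--Planck equation of the linear SDE \eqref{eq:masf_linear_markov_realization_sde}, with drift $F_t\mathbf{x}$ and diffusion $G_tG_t^{\mathsf T}$ given by \eqref{eq:masf_Gt_condition}. Such a $G_t$ exists, for example the symmetric square root, precisely because the right-hand side of \eqref{eq:masf_Gt_condition} is assumed positive semidefinite.

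\textbf{Step 3 (uniqueness).} The linear SDE has a unique Gaussian-convolution solution: $\check X_t=\Phi(t,0)\check X_0+\int_0^t\Phi(t,u)G_u\,dB_u$, where $\Phi$ is the fundamental matrix of $\dot\Phi=F_t\Phi$. Since $F_t=\dot A(t)A(t)^{-1}$, we have $\Phi(t,s)=A(t)A(s)^{-1}$, which matches $M_{s\to t}$ in \eqref{eq:linear_transition_params}. The Fokker--Planck equation with a linear drift and a spatially constant diffusion has unique solutions in the class of probability measures (for instance via the Fourier transform, which reduces it to a first-order linear PDE in frequency). Hence $p_t$ equals the law of $\check X_t$.

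For a cross-check that avoids PDE uniqueness, one can verify directly that $\check X_t$ has the representation $A(t)X_0+\sqrt{\Sigma(t)}\eta$. The mean part $\Phi(t,0)\check X_0=A(t)\check X_0$ already matches, since $A(0)=I$. For the noise part, the covariance $P(t)$ of $\int_0^t\Phi(t,u)G_u\,dB_u$ satisfies $\dot P=F_tP+PF_t^{\mathsf T}+G_tG_t^{\mathsf T}$ with $P(0)=0$. By \eqref{eq:masf_Gt_condition}, $P=\Sigma(t)$ also solves this equation, so $P(t)=\Sigma(t)$ by ODE uniqueness.

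\textbf{Main obstacle.} The delicate point is regularity. Step 1 requires $\nabla\log p_t$ to exist and the weak Fokker--Planck formulation to be valid. The invertibility of $A(t)$ fails at $t=1$ when $A$ is singular (for example a mask), so $F_t$ may blow up as $t\to1$. I would therefore prove the equality on $[0,1)$ using the direct covariance argument, which needs no density assumptions, and then extend to $t=1$ by continuity in law: both $\check X_t$ and $X_t$ converge in distribution as $t\to1$ under the standing assumptions.
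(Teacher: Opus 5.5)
Your Steps 1--3 are the paper's own argument. The paper also writes the Fokker--Planck equation of the linear SDE, uses \eqref{eq:masf_Gt_condition} with the score-form drift of Proposition~\ref{prop:masf_linear_markovian_projection_drift} to identify it with the projected process's equation, and then invokes Fokker--Planck uniqueness. You fill in what it leaves implicit: $p_t\nabla\log p_t=\nabla p_t$, the symmetrization of $2F_t\Sigma(t)$, and a reason uniqueness holds.

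The route you say you would actually use is genuinely different. You write $\check X_t=A(t)\check X_0+\int_0^t A(t)A(u)^{-1}G_u\,dB_u$ and note that the noise covariance and $\Sigma(t)$ solve the same Lyapunov ODE $\dot P=F_tP+PF_t^{\mathsf T}+G_tG_t^{\mathsf T}$ with $P(0)=\Sigma(0)=0$. This gives $\check X_t\stackrel{d}{=}A(t)X_0+\sqrt{\Sigma(t)}\,\eta$ directly. That argument needs neither the Markovian projection, nor existence or regularity of $\nabla\log p_t$, nor PDE uniqueness. It also shows that \eqref{eq:masf_Gt_condition} is exactly the covariance-propagation identity. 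As a bonus, it yields the kernel of the next theorem, $M_{s\to t}=A(t)A(s)^{-1}$ and $\check\Sigma_{s\to t}=\Sigma(t)-M_{s\to t}\Sigma(s)M_{s\to t}^{\mathsf T}$, for free.

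What the Fokker--Planck route buys instead is the conceptual claim of the main text: $\check X$ realizes the marginal flow generated by the projected drift $b$. With your direct argument, the identification with $\bar X_t$ comes only afterwards, through Theorem~\ref{thm:masf_markovian_projection}.

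Your treatment of $t=1$ is also more careful than the paper's. For a singular $A$ such as a mask, $(F_t)_{ii}=\dot a(t)/a(t)$ on unobserved coordinates is not integrable up to $t=1$. So the SDE does not define $\check X_1$ pathwise, and the endpoint must be understood through its law (or the endpoint kernel), which is exactly what your continuity-in-law extension does.
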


\begin{proof}
Since the right-hand side of \eqref{eq:masf_Gt_condition} is positive semidefinite,
there exists a matrix \(G_t\) satisfying \eqref{eq:masf_Gt_condition}.
Then the Fokker--Planck equation of \eqref{eq:masf_linear_markov_realization_sde} is
\begin{align}
    \partial_t \hat p_t
    =
    -
    \nabla\cdot
    \left(
        F_t\mathbf{x}\hat p_t
    \right)
    +
    \frac{1}{2}
    \nabla\cdot
    \left(
        G_tG_t^{\mathsf T}
        \nabla \hat p_t
    \right).
\end{align}
Using \eqref{eq:masf_Gt_condition}, the diffusion term can be rewritten so that
the equation coincides with the Fokker--Planck equation generated by the
score-form projected drift in Proposition~\ref{prop:masf_linear_markovian_projection_drift}.
Since both processes are initialized with the same law, uniqueness of the
Fokker--Planck equation implies that their marginal distributions agree.
\end{proof}

\begin{theorem}[Closed-form likelihood score for linear measurements]
\label{thm:masf_linear_likelihood_score}
Assume that $A(t)$ is invertible for $t\in[0,1)$ and that $\check\Sigma_{t\to 1}$ is positive definite.
Then the linear Markov realization $\check X_t$ in Theorem~\ref{thm:masf_linear_markov_realization} admits the Gaussian endpoint transition
\begin{align}
\label{eq:masf_linear_transition_endpoint}
    Z\mid \check X_t
    \sim
    \mathcal{N}
    \left(
        M_{t\to 1}\check X_t,
        \check \Sigma_{t\to 1}
    \right),
\end{align}
where
\begin{align}
\label{eq:masf_linear_transition_params}
    M_{t\to 1}
    =
    A(1)A(t)^{-1},
    \qquad
    \check \Sigma_{t\to 1}
    =
     \Sigma(1)
    -
    M_{t\to 1} \Sigma(t) M_{t\to 1}^{\mathsf T}.
\end{align}
Consequently,
\begin{align}
\label{eq:masf_linear_likelihood_score}
    \nabla_{\mathbf{x}_t}\log p(\mathbf{z}\mid \mathbf{x}_t)
    =
    M_{t\to 1}^{\mathsf T}
    \check\Sigma_{t\to 1}^{-1}
    \left(
        \mathbf{z}
        -
        M_{t\to 1}\mathbf{x}_t
    \right).
\end{align}
\end{theorem}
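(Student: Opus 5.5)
The plan is to work directly with the linear SDE of Theorem~\ref{thm:masf_linear_markov_realization}, $d\check X_t = F_t\check X_t\,dt + G_t\,dB_t$. For such an SDE the transition law from $s$ to $t$ is Gaussian. Its mean is $\Phi(t,s)\check X_s$, where $\Phi$ is the fundamental matrix of $\dot\Phi = F_u\Phi$. Its covariance is $\int_s^t \Phi(t,u)G_uG_u^{\mathsf T}\Phi(t,u)^{\mathsf T}\,du$.

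\textbf{Mean.} I will first identify $\Phi$. Since $F_u=\dot A(u)A(u)^{-1}$, the candidate $\Phi(t,s)=A(t)A(s)^{-1}$ satisfies $\partial_t\Phi(t,s)=\dot A(t)A(s)^{-1}=F_t\Phi(t,s)$ and $\Phi(s,s)=I$. By uniqueness of linear ODE solutions, $\Phi(t,s)=M_{s\to t}$. At $t=1$ we get $M_{t\to1}=A(1)A(t)^{-1}=A\,A(t)^{-1}$, which requires invertibility of $A(t)$ only for $t<1$.

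\textbf{Covariance.} Define $P(t):=M_{s\to t}\Sigma(s)M_{s\to t}^{\mathsf T}$ and $C(t):=\Sigma(t)-P(t)$. Then $C(s)=0$ and $\dot P = F_tP+PF_t^{\mathsf T}$. Using the condition \eqref{eq:masf_Gt_condition} on $G_t$,
\[
\dot C = \dot\Sigma - F_tP - PF_t^{\mathsf T} = G_tG_t^{\mathsf T} + F_t\Sigma(t) + \Sigma(t)F_t^{\mathsf T} - F_tP - PF_t^{\mathsf T} = F_tC + CF_t^{\mathsf T} + G_tG_t^{\mathsf T}.
\]
This is exactly the Lyapunov ODE satisfied by the transition covariance, with the same initial value $0$ at $s$. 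By uniqueness, $C(t)=\check\Sigma_{s\to t}$.

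\textbf{Endpoint and score.} The limit $t\to1$ needs care, since $A(1)=A$ may be singular and $F_t$ may blow up. I expect this to be the main obstacle. I would handle it by noting that $M_{s\to t}$ and $\Sigma(t)$ are continuous on $[s,1]$, so the Gaussian transition law passes to the limit, and the formula is valid for all $s<t\le1$. Then, by Theorem~\ref{thm:masf_linear_markov_realization}, $\check X_1\stackrel{d}{=}X_1\stackrel{d}{=}Z$, which I identify with $Z$ in the endpoint law. This gives $Z\mid\check X_t\sim\mathcal N(M_{t\to1}\check X_t,\check\Sigma_{t\to1})$.

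Given that $\check\Sigma_{t\to1}$ is positive definite, the log-density is
\[
\log p(\mathbf z\mid\mathbf x_t) = -\tfrac12(\mathbf z-M_{t\to1}\mathbf x_t)^{\mathsf T}\check\Sigma_{t\to1}^{-1}(\mathbf z-M_{t\to1}\mathbf x_t) + \text{const}.
\]
Differentiating in $\mathbf x_t$ yields \eqref{eq:masf_linear_likelihood_score}.

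The step I expect to require the most care is interpretive rather than computational. The realization matches the forward process only in its one-time marginals, so the statement should be read as the likelihood under the linear Markov realization $\check X$, as the theorem itself phrases it, and not under the original joint law of $(X_t,Z)$.
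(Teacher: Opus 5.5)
Your proposal is correct and follows the same route as the paper's proof: a Gaussian transition for the linear SDE with mean map $A(1)A(t)^{-1}$ and covariance $\Sigma(1)-M_{t\to 1}\Sigma(t)M_{t\to 1}^{\mathsf T}$, then differentiation of the Gaussian log-density; the difference is that you actually verify the fundamental-matrix identity and the Lyapunov identity (from the defining condition on $G_tG_t^{\mathsf T}$), which the paper only asserts. Your limit-of-kernels treatment of $t\to 1$ is also the right way to read the endpoint, since for a mask the unobserved diagonal entries of $F_t$ and $G_tG_t^{\mathsf T}$ are $\dot a/a$ and $-2\sigma^2\dot a/a$, which blow up, and $\check X_t$ need not converge pathwise there, so $\check X_1$ has to be understood as defined through the limiting Gaussian kernel rather than as a pathwise limit.
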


\begin{proof}
For the linear Markov realization, the transition from $t$ to $1$ is Gaussian because the SDE is linear with additive Gaussian noise.
The deterministic part propagates as
\begin{align}
    M_{t\to 1}=A(1)A(t)^{-1},
\end{align}
and the covariance is the difference between the endpoint covariance and the propagated current covariance:
\begin{align}
    \check \Sigma_{t\to 1}
    =
     \Sigma(1)
    -
    M_{t\to 1}  \Sigma(t) M_{t\to 1}^{\mathsf T}.
\end{align}
Therefore,
\begin{align}
    Z\mid \check X_t=\mathbf{x}_t
    \sim
    \mathcal{N}
    \left(
        M_{t\to 1}\mathbf{x}_t,
        \check \Sigma_{t\to 1}
    \right).
\end{align}
Differentiating the Gaussian log-density with respect to $\mathbf{x}_t$ gives the stated likelihood score.
\end{proof}

\begin{remark}[Invertibility of $A(t)$]
\label{rem:masf_At_invertibility}
Let $\lambda_i$ be the eigenvalues of $A$.
Since
\begin{align}
    A(t)=a(t)I+\bigl(1-a(t)\bigr)A,
\end{align}
the eigenvalues of $A(t)$ are
\begin{align}
    a(t)+\bigl(1-a(t)\bigr)\lambda_i.
\end{align}
Thus, $A(t)$ is singular only if
\begin{align}
    a(t)
    =
    \frac{-\lambda_i}{1-\lambda_i}
\end{align}
for some eigenvalue $\lambda_i\neq 1$.
In particular, if $\sigma(A)\subset[0,1]$, then $A(t)$ is invertible for all $t\in[0,1)$.
This condition covers common linear measurement operators such as grid masks and center masks.
\end{remark}

\section{Algorithmic Details}
\label{app:algorithmic_details}

This section describes the practical data assimilation procedure used by MASF.
We describe dataset construction, online training at each assimilation time, measurement update, time update, and evaluation metrics.

\subsection{MASF Procedure}
\label{app:masf_procedure}

\paragraph{Dataset construction.}
Let $\{\tau_k\}_{k=0}^{K}$ denote the assimilation time grid.
For filtering, we sample an initial ensemble from the prior distribution,
\begin{align}
    \{\hat{\mathbf{x}}^{(i)}_{\mathrm{init}}\}_{i=1}^{N}
    \sim p_{\texttt{init}},
\end{align}
and propagate each particle to the first assimilation time:
\begin{align}
    \hat{\mathbf{x}}^{(i)}_{0}
    =
    \Phi_{\tau_0}
    \left(
        \hat{\mathbf{x}}^{(i)}_{\mathrm{init}}
    \right),
    \qquad i=1,\ldots,N.
\end{align}
Here, $\Phi_{\tau_0}$ denotes the numerical flow map to $\tau_0$, implemented by repeated applications of the dataset-specific transition solver.

For evaluation, we independently sample a ground-truth initial state
\begin{align}
    \mathbf{x}^{\mathrm{gt}}_{\mathrm{init}}\sim p_{\texttt{init}}
\end{align}
and propagate it along the assimilation grid:
\begin{align}
    \mathbf{x}^{\mathrm{gt}}_{k}
    =
    \Phi_{\tau_k}
    \left(
        \mathbf{x}^{\mathrm{gt}}_{\mathrm{init}}
    \right),
    \qquad k=0,\ldots,K.
\end{align}
The measurement at time $\tau_k$ is generated as
\begin{align}
    \mathbf{z}_{k}
    =
    H(\mathbf{x}^{\mathrm{gt}}_{k})
    +
    \sigma\boldsymbol{\epsilon}_{k},
    \qquad
    \boldsymbol{\epsilon}_{k}\sim\mathcal{N}(0,I),
    \qquad k=0,\ldots,K.
\end{align}
During filtering, MASF uses only the measurement sequence $\{\mathbf{z}_k\}_{k=0}^{K}$.
The ground-truth trajectory is used only for evaluation.

\paragraph{Online training at an assimilation time.}
At each assimilation time $\tau_k$, MASF trains the conditional-mean estimators using the current prior ensemble.
Given prior particles $\{\hat{\mathbf{x}}^{(i)}_{k}\}_{i=1}^{N}$, we generate perturbed samples through the measurement-aware forward process:
\begin{align}
    \hat{\mathbf{x}}^{(i)}_{k,t}
    =
    h(\hat{\mathbf{x}}^{(i)}_{k},t)
    +
    \sqrt{\Sigma(t)}\,\boldsymbol{\epsilon}^{(i)},
    \qquad
    \boldsymbol{\epsilon}^{(i)}\sim\mathcal{N}(0,I).
\end{align}
For nonlinear measurements, we train $m_{1,\theta}$ and $m_{2,\theta}$ by minimizing the denoising $L_2$ objective in \eqref{eq:masf_m1_m2_training_loss}.
For linear measurements $H(\mathbf{x})=A\mathbf{x}$, we train only $m_{1,\theta}$, since $m_{2,\theta}=A m_{1,\theta}$ is available analytically.

To reduce online training cost, the model is initialized from a pretrained checkpoint.
The pretrained model is obtained before filtering using samples generated from the state dynamics, without online measurements.
At the first assimilation time, the model is initialized from the pretrained checkpoint, and at later assimilation times it is fine-tuned on the current prior ensemble.

\paragraph{Measurement update.}
After training at $\tau_k$, MASF performs the measurement update using guided reverse-time sampling.
Starting from the prior ensemble $\{\hat{\mathbf{x}}^{(i)}_{k}\}_{i=1}^{N}$, we set $s_0=0.992$ and initialize $\mathbf{x}_{s_0}^{(i)}=\hat{\mathbf{x}}_k^{(i)}$.
Given the measurement $\mathbf{z}_k$, the sampler produces posterior particles:
\begin{align}
    \mathbf{x}^{(i)}_{k}
    =
    \texttt{GuidedSampler}
    \left(
        \mathbf{x}_{s_0}^{(i)},
        \mathbf{z}_k;
        \theta_k
    \right),
    \qquad i=1,\ldots,N.
\end{align}
The sampler combines the prior score with the likelihood score induced by $\mathbf{z}_k$.
For nonlinear measurements, the likelihood score is computed using \eqref{eq:masf_nonlinear_guidance_score}.
For linear measurements, we use the closed-form likelihood score in \eqref{eq:masf_linear_likelihood_score}.
The reverse sampler uses NFE function evaluations.

\paragraph{Time update.}
After assimilating the measurement at $\tau_k$, MASF propagates the posterior ensemble to the next assimilation time.
Let $n_k$ denote the number of numerical transition steps between $\tau_k$ and $\tau_{k+1}$.
The next prior ensemble is
\begin{align}
    \hat{\mathbf{x}}^{(i)}_{k+1}
    =
    \Phi_{n_k}
    \left(
        \mathbf{x}^{(i)}_{k}
    \right),
    \qquad i=1,\ldots,N,
\end{align}
where $\Phi_{n_k}$ denotes $n_k$ repeated applications of the transition solver.

The full procedure is summarized in Algorithm~\ref{alg:measurement_aware_guided_assimilation}.
Additional implementation parameters, including the terminal offset $\varepsilon$, covariance floor, and pseudo-inverse option, are reported in Appendix~\ref{app:noise_schedule_guidance_weights}.

\begin{algorithm}[!htb]
\caption{Measurement-Aware Score-based Filter}
\label{alg:measurement_aware_guided_assimilation}
\small
\begin{algorithmic}[1]
\State \textbf{Input:} measurements $(\mathbf{z}_k)_{k=0}^{K}$, ensemble size $N$, epochs $E$, NFE
\State \textbf{Input:} schedule $a(t)$, variance $\Sigma(t)$, measurement operator $H$, transition solver $\Phi$
\State \textbf{Output:} ensemble-mean estimates $(\bar{\mathbf{x}}_k)_{k=0}^{K}$

\State Sample $(\hat{\mathbf{x}}_{\mathrm{init}}^{(i)})_{i=1}^{N}\sim p_{\texttt{init}}$
\State $\hat{\mathbf{x}}^{(i)}_0 \gets \Phi_{\tau_0}(\hat{\mathbf{x}}_{\mathrm{init}}^{(i)}),\quad i=1,\ldots,N$

\For{$k=0$ \textbf{to} $K$}
  \State Initialize $\theta_k$ from the pretrained or previous-step weights

  \For{$e=1$ \textbf{to} $E$}
    \State Sample $t\sim\mathcal{U}(0,1)$ and $\boldsymbol{\epsilon}^{(i)}\sim\mathcal{N}(0,I)$
    \State $\hat{\mathbf{x}}^{(i)}_{k,t}\gets h(\hat{\mathbf{x}}^{(i)}_k,t)+\sqrt{\Sigma(t)}\,\boldsymbol{\epsilon}^{(i)}$

    \If{$H$ is linear}
      \State $L\gets \frac{1}{N}\sum_{i=1}^{N}
      \|m_{1,\theta_k}(\hat{\mathbf{x}}^{(i)}_{k,t},t)-\hat{\mathbf{x}}^{(i)}_k\|^2$
    \Else
      \State $L\gets \frac{1}{N}\sum_{i=1}^{N}\bigl[
      \|m_{1,\theta_k}(\hat{\mathbf{x}}^{(i)}_{k,t},t)-\hat{\mathbf{x}}^{(i)}_k\|^2
      +
      \|m_{2,\theta_k}(\hat{\mathbf{x}}^{(i)}_{k,t},t)-H(\hat{\mathbf{x}}^{(i)}_k)\|^2
      \bigr]$
    \EndIf

    \State Update $\theta_k$ by minimizing $L$
  \EndFor

  \State Initialize reverse sampling with $s_0=0.992$ and $\mathbf{x}_{s_0}^{(i)}\gets \hat{\mathbf{x}}^{(i)}_k$
  \State $(\mathbf{x}^{(i)}_k)_{i=1}^{N}\gets
  \texttt{GuidedSampler}((\mathbf{x}_{s_0}^{(i)})_{i=1}^{N},\mathbf{z}_k,\theta_k,H,a,\Sigma,\text{NFE})$

  \State $\bar{\mathbf{x}}_k\gets \frac{1}{N}\sum_{i=1}^{N}\mathbf{x}^{(i)}_k$

  \If{$k<K$}
    \State Let $n_k$ be the number of transition steps from $\tau_k$ to $\tau_{k+1}$
    \State $\hat{\mathbf{x}}^{(i)}_{k+1}\gets \Phi_{n_k}(\mathbf{x}^{(i)}_k),\quad i=1,\ldots,N$
  \EndIf
\EndFor
\end{algorithmic}
\end{algorithm}

\begin{algorithm}[!htb]
\caption{\texttt{GuidedSampler}}
\label{alg:guided_sampler}
\small
\begin{algorithmic}[1]
\State \textbf{Input:} particles $(\mathbf{x}_{s_0}^{(i)})_{i=1}^{N}$, measurement $\mathbf{z}_k$, parameters $\theta_k$, measurement operator $H$, schedules $a,\Sigma$, NFE
\State \textbf{Output:} posterior particles $(\mathbf{x}_{k}^{(i)})_{i=1}^{N}$
\State Construct a reverse-time grid $0.992=s_0>s_1>\cdots>s_L=10^{-5}$ with $L=\text{NFE}$

\For{$\ell=0$ \textbf{to} $L-1$}
  \State Set $s\gets s_\ell$ and $t\gets s_{\ell+1}$
  \State Compute $m_1(\mathbf{x}_s^{(i)},s)$ and $m_2(\mathbf{x}_s^{(i)},s)$ using $\theta_k$
  \State Compute $s_{\mathrm{prior}}(\mathbf{x}_s^{(i)},s)$ using \eqref{eq:prior_score_estimator_m}
  \State Compute $g(\mathbf{z}_k,\mathbf{x}_s^{(i)},s)$ using \eqref{eq:nonlinear_guidance_score} or \eqref{eq:linear_likelihood_score}
  \State Form $s_{\mathrm{guide}}(\mathbf{x}_s^{(i)},s,\mathbf{z}_k)$ using \eqref{eq:scaled_guided_score}
  \State Update $\mathbf{x}_t^{(i)}$ using \eqref{eq:guided_sampler_step}, for $i=1,\ldots,N$
\EndFor

\State \textbf{return} $\mathbf{x}_{k}^{(i)}\gets \mathbf{x}_{s_L}^{(i)},\quad i=1,\ldots,N$
\end{algorithmic}
\end{algorithm}

\subsection{Evaluation Metrics}
\label{app:evaluation_metrics}

At each assimilation time, the state estimate is the ensemble mean:
\begin{align}
    \bar{\mathbf{x}}_k
    =
    \frac{1}{N}
    \sum_{i=1}^{N}
    \mathbf{x}^{(i)}_k .
\end{align}
Given the ground-truth trajectory $\{\mathbf{x}^{\mathrm{gt}}_k\}_{k=0}^{K}$, we report the root mean squared error (RMSE) averaged over the full assimilation window:
\begin{align}
    \mathrm{RMSE}
    =
    \sqrt{
    \frac{1}{(K+1)d}
    \sum_{k=0}^{K}
    \left\|
        \bar{\mathbf{x}}_k
        -
        \mathbf{x}^{\mathrm{gt}}_k
    \right\|_2^2
    } .
\end{align}
We also report time-wise RMSE by applying the same metric separately at each assimilation time.

For spatial field experiments, we additionally report the critical success index (CSI), which measures the overlap between predicted and ground-truth high-intensity regions:
\begin{align}
    \mathrm{CSI}
    =
    \frac{\mathrm{TP}}
    {\mathrm{TP}+\mathrm{FP}+\mathrm{FN}}.
\end{align}
Here, $\mathrm{TP}$, $\mathrm{FP}$, and $\mathrm{FN}$ denote true positives, false positives, and false negatives after thresholding the predicted and ground-truth fields.
Unless a fixed threshold is specified, the event threshold is chosen as the $0.95$ quantile of the ground-truth values.

\section{Implementation and Tuning Details}
\label{app:implementation_tuning}

This section provides implementation details and hyperparameter tuning procedures.
We describe the efficient implementation of MASF, the sequential tuning protocol, the final selected configurations, and the tuning ranges for baselines.

\subsection{Compute Resources}
\label{app:compute_resources}

Table~\ref{tab:compute_resources} summarizes the compute environment used in our experiments.
Wall-clock times for the online data assimilation procedure are reported in the experimental tables.
For MASF, the reported wall-clock time excludes offline pretraining and includes only online fine-tuning and guided sampling.
Unless otherwise specified, each neural-network-based run used one GPU.
All quantitative results are averaged over five random seeds.

\begin{table}[!htbp]
\centering
\caption{Compute resources and software environment used for the experiments.}
\label{tab:compute_resources}
\small
\begin{tabular}{ll}
\toprule
Resource & Specification \\
\midrule
Operating system & Linux, kernel 5.14.0 \\
CPU & Intel Xeon 6530P, 2 sockets \\
CPU capacity & 64 physical cores, 128 logical CPUs total \\
CPU allocation & 8 logical CPUs per Slurm task \\
System memory & 250 GiB RAM, 63 GiB swap \\
GPU & NVIDIA RTX PRO 6000 Blackwell Server Edition \\
GPU memory & 97,887 MiB per GPU \\
NVIDIA driver & 580.105.08 \\
CUDA & 13.0, as reported by \texttt{nvidia-smi} \\
PyTorch & 2.9.0 with CUDA 13.0 support \\
JAX / JAX-CFD & JAX 0.4.33, JAX-CFD 0.2.1 \\
Numerical packages & NumPy 2.1.1, SciPy 1.14.1 \\
GPU usage & One GPU per neural-network-based run unless otherwise specified \\
Random seeds & 5 seeds: 42, 43, 44, 45, 46 \\
Reported runtime & Online data assimilation wall-clock time \\
MASF runtime convention & Offline pretraining excluded from reported online wall-clock time \\
\bottomrule
\end{tabular}
\end{table}

\subsection{Existing Assets, Versions, and Licenses}
\label{app:existing_assets}

Table~\ref{tab:existing_assets} summarizes the main existing software and methodological assets used in the experiments.
The Kolmogorov flow data used in this paper are generated by the authors through simulation and are not repackaged from an external dataset.

\begin{table}[!htbp]
\centering
\caption{Existing assets used in the experiments.}
\label{tab:existing_assets}
\small
\begin{tabular}{llll}
\toprule
Asset & Use & Version / source & License / terms \\
\midrule
JAX-CFD & Kolmogorov flow simulation & 0.2.1 & Apache-2.0 \\
JAX & Numerical computation & 0.4.33 & Apache-2.0 \\
PyTorch & Neural network training & 2.9.0 & BSD-style license \\
NumPy & Numerical computation & 2.1.1 & BSD 3-Clause \\
SciPy & Numerical computation & 1.14.1 & BSD 3-Clause \\
Optuna & Hyperparameter tuning & Cited package & MIT license \\
U-Net & Architecture reference & Cited paper & Reimplemented by authors \\
EnKF / LETKF & Baseline methods & Cited papers & Reimplemented by authors \\
SF / SSLS & Score-based baselines & Cited papers & Reimplemented by authors \\
\bottomrule
\end{tabular}
\end{table}



\subsection{Noise Schedule and Guidance Weights}
\label{app:noise_schedule_guidance_weights}

We use the cosine schedule
\begin{align}
    a(t)
    =
    \frac{
    \cos\left( \frac{t+s_{\mathrm{cos}}}{1+s_{\mathrm{cos}}}\frac{\pi}{2} \right)
    }{
    \cos\left( \frac{s_{\mathrm{cos}}}{1+s_{\mathrm{cos}}}\frac{\pi}{2} \right)
    },
    \qquad
    r(t)=\sqrt{1-a^2(t)},
\end{align}
with $s_{\mathrm{cos}}=0.008$.
Equivalently, the reverse-time sampler starts from $1-\varepsilon=0.992$ with $\varepsilon=0.008$ and terminates at $10^{-5}$; thus, likelihood scores are never evaluated at $t=1$.

For the guided score in \eqref{eq:masf_scaled_guided_score}, we use time-dependent weights
\begin{align}
    \lambda_s(s)
    &=
    \lambda_s^{\min}
    +
    \left(
    \lambda_s^{\max}
    -
    \lambda_s^{\min}
    \right)s^{p_s},
    \\
    \lambda_g(s)
    &=
    \lambda_g^{\min}
    +
    \left(
    \lambda_g^{\max}
    -
    \lambda_g^{\min}
    \right)(1-s)^{p_g}.
\end{align}
In all experiments, we set
\begin{align}
    \lambda_s^{\min}=1,\quad
    \lambda_s^{\max}=1.5,\quad
    p_s=1,
    \qquad
    \lambda_g^{\min}=1.1,\quad
    \lambda_g^{\max}=1.5,\quad
    p_g=2.
\end{align}
For covariance inversion, we add a small diagonal floor and use
\begin{align}
    \Sigma^{-1}
    \leftarrow
    \left(\Sigma+\delta I\right)^{-1},
\end{align}
with $\delta=10^{-5}$. We do not use a pseudo-inverse unless otherwise specified.

\subsection{Efficient Implementation}
\label{app:efficient_implementation}

MASF is designed to reduce the online computational cost of score-based data assimilation.
The main runtime bottlenecks of score-based filtering are online neural-network training and reverse-time sampling.
We reduce these costs using three implementation strategies: a lightweight conditional-mean architecture, amortized pretraining, and a reduced NFE.

\paragraph{Lightweight architecture.}
Unlike unconditional score-based generative modeling, data assimilation does not require learning a global data distribution for open-ended generation.
At each assimilation step, the model is trained only on the current prior ensemble and is used to estimate the conditional means required by the projected drift and guided sampler.
Thus, the learning problem is localized around the forecast distribution at the current assimilation time.
This allows us to use a lightweight U-Net rather than a large diffusion backbone.
For linear measurements, we further reduce the output dimension by training only $m_{1,\theta}$, since $m_2=A m_1$ is available analytically.
For nonlinear measurements, we use a dual-head U-Net with a shared encoder and bottleneck and separate decoders for $m_{1,\theta}$ and $m_{2,\theta}$.
This keeps most feature extraction shared while allowing the two conditional means to have different outputs.

\paragraph{Amortized pretraining.}
Training a neural score or denoising model from scratch at every assimilation time is expensive.
To reduce this cost, we pretrain the conditional-mean network before filtering using samples generated from the state dynamics, without using online measurements.
During filtering, the pretrained weights are used as initialization and are fine-tuned on the current prior ensemble at each measurement update.
At the first assimilation time, this avoids cold-start training; at later assimilation times, the previous or pretrained weights provide a warm start for adaptation to the updated forecast distribution.
Because consecutive assimilation steps have related forecast distributions, this amortized initialization substantially reduces the number of online optimization steps needed for stable conditional-mean estimation.
The offline pretraining cost is incurred once per configuration and is excluded from the reported online wall-clock time.

\paragraph{Reduced NFE.}
Score-based filtering methods can be expensive at inference time because reverse-time sampling typically requires many function evaluations.
MASF reduces this cost by using measurement-aware likelihood guidance, which directly steers particles toward the measurement-conditioned posterior.
As a result, the sampler requires fewer reverse-time discretization steps than score-based baselines that rely on a measurement-independent forward process.
In our experiments, MASF uses $\mathrm{NFE}=150$--$250$ depending on the resolution and measurement setting, whereas SF and SSLS are run with $\mathrm{NFE}=500$.
The selected NFE values are chosen through the sequential tuning procedure in Appendix~\ref{app:masf_tuning}.
This reduction directly lowers sampling time, while the measurement-aware guidance preserves filtering accuracy.

\subsection{MASF Sequential Tuning Procedure}
\label{app:masf_tuning}

We tuned MASF sequentially rather than jointly over all hyperparameters.
The tuning procedure was automated in code using Optuna~\cite{akiba2019optuna}, with each phase run for a fixed number of trials specified by the corresponding \texttt{n\_trials} setting.
For each trial, Optuna sampled a candidate configuration from the predefined search space, and completed trials were ranked by the final filtering metric.
After each phase, the best-performing configuration was used as the base configuration for the next phase.
The tuning phases were normalization selection, model selection, pretraining, sampling and guidance, and online fine-tuning.
Unless otherwise stated, each phase used $N=10$ particles, pretrained initialization, and the assimilation window from step $50$ to step $100$ with measurement gap $10$.
Thus, configurations were selected based on final filtering performance rather than training loss alone.

\paragraph{Normalization selection.}
We first tune the normalization module because the measurement-aware forward process interpolates between state and measurement variables.
If these variables have different scales or statistics, the forward process can become poorly conditioned and the conditional-mean estimators may become unstable.
Given data $x\in\mathbb{R}^{B\times C\times\cdots}$, normalization statistics are computed per channel over all non-channel dimensions.

\begin{table}[!htbp]
\centering
\caption{Normalization search space for MASF.}
\label{tab:normalization_tuning}
\small
\begin{tabular}{ll}
\toprule
Parameter & Search values \\
\midrule
same normalization & \{\texttt{true}, \texttt{false}\} \\
normalization form & \{\texttt{affine}, \texttt{scale\_only}\} \\
statistics mode & \{\texttt{standard}, \texttt{robust}, \texttt{minmax}\} \\
statistics update & \{\texttt{moving}, \texttt{fixed}, \texttt{adaptive}\} \\
momentum & \{0.2, 0.4\}\\
\bottomrule
\end{tabular}
\end{table}

The parameter \textit{same normalization} determines whether the measurement variable shares normalization statistics with the state variable.
When it is set to \texttt{true}, both $X$ and $Z$ are normalized using statistics computed from the state ensemble.
When it is set to \texttt{false}, separate measurement statistics are computed after applying the measurement operator to the state ensemble.
The \textit{normalization form} determines whether variables are centered and scaled or only rescaled:
\begin{align}
    x_{\mathrm{affine}}
    =
    \frac{x-u}{s+\epsilon},
    \qquad
    x_{\mathrm{scale}}
    =
    \frac{x}{s+\epsilon},
\end{align}
where $u$ and $s$ are the per-channel center and scale.
The \textit{statistics mode} determines how $u$ and $s$ are computed.
The \texttt{standard} mode uses the mean and standard deviation, the \texttt{robust} mode uses the median and median absolute deviation, and the \texttt{minmax} mode uses the midpoint and half-range of the channel values.
The \textit{statistics update} determines how normalization statistics evolve over assimilation steps.
The \texttt{fixed} mode keeps the initial statistics fixed, the \texttt{adaptive} mode recomputes them at each update, and the \texttt{moving} mode uses an exponential moving average:
\begin{align}
    \theta_{\mathrm{new}}
    =
    (1-\rho)\theta_{\mathrm{old}}
    +
    \rho\theta_{\mathrm{current}},
\end{align}
where $\rho$ is the momentum.

\paragraph{Model selection.}
After fixing the normalization scheme, we tune the U-Net architecture.
We search over width, depth, channel-multiplier schedule, and attention resolution.
Because the computational budget and spatial resolution differ across Kolmogorov-$128$, Kolmogorov-$256$, and Kolmogorov-$512$, we use resolution-specific search spaces.

\begin{table}[!htbp]
\centering
\caption{Model architecture search spaces for MASF on Kolmogorov datasets.}
\label{tab:model_tuning}
\small
\begin{tabular}{llll}
\toprule
Parameter & Kolmogorov-$128$ & Kolmogorov-$256$ & Kolmogorov-$512$ \\
\midrule
base channels 
& \{16, 32\} 
& \{16, 32\} 
& \{16, 32\} \\

residual blocks 
& \{1, 2, 3\} 
& \{1, 2, 3\} 
& \{1, 2, 3\} \\

channel multiplier 
& \{\texttt{1-2-4}, \texttt{1-2-4-8}\} 
& \{\texttt{1-2-4-8}, \texttt{1-1-2-3-4}\} 
& \{\texttt{1-2-4-8}, \texttt{1-1-2-3-4}\} \\

attention resolution 
& \{8, 16\} 
& \{8, 16, 32\} 
& \{8, 16, 32\} \\
\bottomrule
\end{tabular}
\end{table}

The \textit{base channels} parameter controls the width of the U-Net.
The \textit{residual blocks} parameter determines the number of residual blocks at each resolution.
The \textit{channel multiplier} specifies how the number of channels changes across resolutions.
The \textit{attention resolution} determines the spatial resolution at which self-attention is applied.
The selected architectures are measurement- and resolution-dependent and are reported in Tables~\ref{tab:final_masf_grid_config} and~\ref{tab:final_masf_measurement_config}.

\paragraph{Pretraining.}
We next tune pretraining hyperparameters while fixing the number of pretraining samples to $1000$.
The batch size, number of epochs, and learning rate control the optimization budget.
We use Adam with weight decay $0.01$ and betas $(0.9,0.99)$.
The selected pretraining configurations are measurement- and resolution-dependent and are reported in Tables~\ref{tab:final_masf_grid_config} and~\ref{tab:final_masf_measurement_config}.

\begin{table}[!htbp]
\centering
\caption{Pretraining search space for MASF.}
\label{tab:pretraining_tuning}
\small
\begin{tabular}{ll}
\toprule
Parameter & Search values \\
\midrule
pretraining samples & \{1000\} \\
batch size & \{16, 32, 64\} \\
epochs & \{100, 200, 300, 400\} \\
learning rate & \{$0.0001$, $0.0003$, $0.0005$\} \\
\bottomrule
\end{tabular}
\end{table}

\paragraph{Sampling and guidance.}
We tune the sampling configuration to reduce wall-clock time while preserving filtering accuracy.
In the guided reverse sampler, we use time-dependent scale factors for the prior score and likelihood score:
\begin{align}
\label{eq:sample_score_scale}
    \lambda_s(t)
    &=
    \lambda_{s,\min}
    +
    \left(
        \lambda_{s,\max}
        -
        \lambda_{s,\min}
    \right)t^{p_s}, \\
\label{eq:sample_guidance_scale}
    \lambda_g(t)
    &=
    \lambda_{g,\min}
    +
    \left(
        \lambda_{g,\max}
        -
        \lambda_{g,\min}
    \right)(1-t)^{p_g}.
\end{align}
Here, $\lambda_s(t)$ scales the learned prior score, while $\lambda_g(t)$ scales the likelihood score.
The prior scale increases with $t$, giving stronger prior regularization near the noisy endpoint.
The likelihood scale increases as $t$ moves toward $0$, strengthening measurement correction during the later part of reverse sampling.
The powers $p_s$ and $p_g$ control how sharply these scale factors change over time.

\begin{table}[!htbp]
\centering
\caption{Sampling and guidance search space for MASF.}
\label{tab:masf_sampling_tuning}
\small
\begin{tabular}{ll}
\toprule
Parameter & Search values \\
\midrule
NFE & \{150, 200, 250\}\\
prior scale min. $\lambda_{s,\min}$ & \{0.9, 0.95, 1.0\} \\
prior scale max. $\lambda_{s,\max}$ & \{1.4, 1.5, 1.6\} \\
prior scale power $p_s$ & \{1.0, 1.25, 1.5\} \\
likelihood scale min. $\lambda_{g,\min}$ & \{1.0, 1.1, 1.2\} \\
likelihood scale max. $\lambda_{g,\max}$ & \{1.4, 1.5, 1.6\} \\
likelihood scale power $p_g$ & \{1.5, 2.0\} \\
\bottomrule
\end{tabular}
\end{table}

The \textit{NFE} controls the number of reverse-time discretization steps.
The parameters $\lambda_{s,\min}$ and $\lambda_{s,\max}$ determine the range of the prior-score scale, and $p_s$ controls its time dependence.
Similarly, $\lambda_{g,\min}$ and $\lambda_{g,\max}$ determine the range of the likelihood-score scale, and $p_g$ controls how strongly the likelihood score is emphasized along the reverse trajectory.

\paragraph{Online fine-tuning.}
Finally, we tune the online fine-tuning parameters used during assimilation.
Starting from the pretrained model, the network is fine-tuned on the current prior ensemble at each measurement step.
The search space is summarized in Table~\ref{tab:masf_finetuning_tuning}.
During this phase, we fix the ensemble size to $N=10$, use pretrained initialization, and evaluate the assimilation window from step $50$ to step $100$ with measurement gap $10$.

\begin{table}[!htbp]
\centering
\caption{Online fine-tuning search space for MASF.}
\label{tab:masf_finetuning_tuning}
\small
\begin{tabular}{ll}
\toprule
Parameter & Search values \\
\midrule
batch size & \{10\} \\
learning rate & \{$3\times10^{-4}$, $1\times10^{-4}$\} \\
full epochs & \{80, 90, 100, 110, 120\} \\
\bottomrule
\end{tabular}
\end{table}

\subsection{Final MASF Configuration}
\label{app:final_masf_config}

The selected MASF hyperparameters for the main Kolmogorov experiments are summarized in
Tables~\ref{tab:final_masf_grid_config} and~\ref{tab:final_masf_measurement_config}.
Table~\ref{tab:final_masf_grid_config} reports the grid-mask configurations across resolutions,
and Table~\ref{tab:final_masf_measurement_config} reports the remaining $128^2$ measurement settings.
We report the parameters that define the selected experimental configurations; omitted parameters are kept at their default values.

\begin{table}[!htbp]
\centering
\caption{Selected MASF configurations for grid-mask measurements across resolutions.}
\label{tab:final_masf_grid_config}
\scriptsize
\setlength{\tabcolsep}{4pt}
\renewcommand{\arraystretch}{1.08}
\resizebox{\textwidth}{!}{%
\begin{tabular}{llccc}
\toprule
Category & Field & Grid mask $128^2$ & Grid mask $256^2$ & Grid mask $512^2$ \\
\midrule
\multirow{7}{*}{measurement}
& type & \texttt{grid\_mask} & \texttt{grid\_mask} & \texttt{grid\_mask} \\
& same normalization & \texttt{true} & \texttt{false} & \texttt{false} \\
& normalization form & \texttt{affine} & \texttt{affine} & \texttt{scale\_only} \\
& statistics mode & \texttt{standard} & \texttt{standard} & \texttt{standard} \\
& statistics update & \texttt{moving} & \texttt{moving} & \texttt{adaptive} \\
& main parameter & stride $=10$ & stride $=15$ & stride $=20$ \\
& momentum & $0.2$ & $0.2$ & -- \\
\midrule
\multirow{6}{*}{model}
& type & U-Net & U-Net & U-Net \\
& input channels & $2$ & $2$ & $2$ \\
& base channels & $32$ & $16$ & $16$ \\
& residual blocks & $2$ & $1$ & $1$ \\
& attention resolution & $16$ & $32$ & $32$ \\
& channel multiplier & $(1,2,4,8)$ & $(1,2,4,8)$ & $(1,2,4,8)$ \\
\midrule
\multirow{3}{*}{pretraining}
& batch size & $16$ & $16$ & $16$ \\
& epochs & $400$ & $300$ & $300$ \\
& learning rate & $0.0003$ & $0.0005$ & $0.0005$ \\
\midrule
\multirow{7}{*}{sampling}
& NFE & $150$ & $200$ & $150$ \\
& prior scale min. & $0.95$ & $0.9$ & $0.95$ \\
& prior scale max. & $1.6$ & $1.5$ & $1.6$ \\
& prior scale power & $1.5$ & $1.5$ & $1.5$ \\
& likelihood scale min. & $1.1$ & $1.0$ & $1.1$ \\
& likelihood scale max. & $1.5$ & $1.6$ & $1.5$ \\
& likelihood scale power & $2.0$ & $1.5$ & $2.0$ \\
\midrule
\multirow{3}{*}{fine-tuning}
& batch size & $10$ & $10$ & $10$ \\
& learning rate & $0.0001$ & $0.0003$ & $0.0003$ \\
& full epochs & $120$ & $110$ & $110$ \\
\bottomrule
\end{tabular}%
}
\end{table}

\begin{table}[!htbp]
\centering
\caption{Selected MASF configurations for the remaining $128^2$ measurement settings.}
\label{tab:final_masf_measurement_config}
\scriptsize
\setlength{\tabcolsep}{4pt}
\renewcommand{\arraystretch}{1.08}
\resizebox{\textwidth}{!}{%
\begin{tabular}{llccc}
\toprule
Category & Field & Center mask & Sigmoid & Speed \\
\midrule
\multirow{7}{*}{measurement}
& type & \texttt{center\_mask} & \texttt{sigmoid} & \texttt{speed} \\
& same normalization & \texttt{false} & \texttt{false} & \texttt{false} \\
& normalization form & \texttt{scale\_only} & \texttt{affine} & \texttt{scale\_only} \\
& statistics mode & \texttt{standard} & \texttt{standard} & \texttt{robust} \\
& statistics update & \texttt{fixed} & \texttt{fixed} & \texttt{adaptive} \\
& main parameter & hole ratio $=0.5$ & $-$ & $-$ \\
\midrule
\multirow{6}{*}{model}
& type & U-Net & dual-head U-Net & dual-head U-Net \\
& input channels & $2$ & $2$ & $2$ \\
& base channels & $32$ & $32$ & $16$ \\
& residual blocks & $2$ & $2$ & $3$ \\
& attention resolution & $16$ & $16$ & $8$ \\
& channel multiplier & $(1,2,4,8)$ & $(1,2,4,8)$ & $(1,2,4,8)$ \\
\midrule
\multirow{3}{*}{pretraining}
& batch size & $16$ & $16$ & $16$ \\
& epochs & $400$ & $200$ & $300$ \\
& learning rate & $0.0003$ & $0.0003$ & $0.0003$ \\
\midrule
\multirow{7}{*}{sampling}
& NFE & $250$ & $150$ & $200$ \\
& prior scale min. & $1.0$ & $0.9$ & $0.9$ \\
& prior scale max. & $1.4$ & $1.4$ & $1.6$ \\
& prior scale power & $1.25$ & $1.25$ & $1.25$ \\
& likelihood scale min. & $1.2$ & $1.2$ & $1.0$ \\
& likelihood scale max. & $1.5$ & $1.6$ & $1.4$ \\
& likelihood scale power & $1.5$ & $2.0$ & $2.0$ \\
\midrule
\multirow{3}{*}{fine-tuning}
& batch size & $10$ & $10$ & $10$ \\
& learning rate & $0.0003$ & $0.0003$ & $0.0003$ \\
& full epochs & $120$ & $120$ & $100$ \\
\bottomrule
\end{tabular}%
}
\end{table}

\subsection{Baseline Tuning Ranges}
\label{app:baseline_tuning}

For fair comparison, we tuned the classical filtering baselines over their main filtering hyperparameters.
For EnKF and LETKF, we searched over ensemble size, inflation, and numerical stabilization parameters.
For LETKF, we additionally tuned the localization radius.
For the score-based baselines, SF and SSLS used the same neural backbone as MASF whenever applicable, while their method-specific sampling parameters were tuned separately.
Both SF and SSLS were run with $\mathrm{NFE}=500$.

\paragraph{EnKF.}
For EnKF, we used the stochastic full-covariance update.
The search space is summarized in Table~\ref{tab:enkf_tuning}.
The ensemble size controls the number of particles.
The inflation factor rescales the forecast covariance to mitigate ensemble underdispersion.
The parameter $\epsilon$ is used for numerical stabilization in covariance operations.

\begin{table}[!htbp]
\centering
\caption{EnKF hyperparameter search space.}
\label{tab:enkf_tuning}
\small
\begin{tabular}{ll}
\toprule
Parameter & Search values \\
\midrule
ensemble size & \{100, 200, 250, 300, 400\} \\
update mode & \{\texttt{full}\} \\
inflation & \{1.0, 1.05, 1.1, 1.15, 1.2, 1.25, 1.3\} \\
epsilon & \{$10^{-5}$, $10^{-6}$\} \\
\bottomrule
\end{tabular}
\end{table}

\paragraph{LETKF.}
For LETKF, we tuned the ensemble size, inflation factor, localization radius, and numerical stabilization parameter.
The search space is summarized in Table~\ref{tab:letkf_tuning}.
The localization radius controls the spatial support of each local analysis update.
The inflation factor rescales the forecast covariance, and $\epsilon$ is used for numerical stabilization.

\begin{table}[!htbp]
\centering
\caption{LETKF hyperparameter search space.}
\label{tab:letkf_tuning}
\small
\begin{tabular}{ll}
\toprule
Parameter & Search values \\
\midrule
ensemble size & \{10, 20, 30, 40, 60, 100\} \\
inflation & \{1.0, 1.05, 1.1, 1.15, 1.2, 1.25, 1.3\} \\
localization radius & \{3, 5, 7\} \\
epsilon & \{$10^{-5}$, $10^{-6}$\} \\
\bottomrule
\end{tabular}
\end{table}

To choose default ensemble sizes for the main experiments, we also evaluated the accuracy--cost trade-off across ensemble sizes under grid-mask measurements.
Table~\ref{tab:ensemble_size_baseline_sensitivity} summarizes the results.
For EnKF, increasing the ensemble size improves RMSE, but larger ensembles substantially increase wall-clock time.
We therefore use $N=250$ as the default EnKF setting, which provides strong accuracy before the cost of larger ensembles becomes high.
For LETKF, the RMSE improvement beyond $N=40$ is modest compared with the additional runtime, so we use $N=40$ as the default LETKF setting.

\begin{table}[!htbp]
\centering
\caption{
Ensemble-size sensitivity for EnKF and LETKF under grid-mask measurements.
The selected default ensemble sizes are highlighted in bold.
}
\label{tab:ensemble_size_baseline_sensitivity}
\small
\setlength{\tabcolsep}{5pt}
\renewcommand{\arraystretch}{1.05}
\begin{tabular}{lccc}
\toprule
Method & Ensemble size & RMSE ($\downarrow$) & Wall-clock (s) \\
\midrule
\multirow{7}{*}{EnKF}
& $40$  & $1.011 \pm 0.069$ & $42.9 \pm 2.2$ \\
& $100$ & $0.540 \pm 0.192$ & $70.6 \pm 1.6$ \\
& $200$ & $0.295 \pm 0.130$ & $139.1 \pm 0.6$ \\
& $\mathbf{250}$ & $\mathbf{0.176 \pm 0.039}$ & $\mathbf{179.5 \pm 2.5}$ \\
& $400$ & $0.141 \pm 0.030$ & $263.5 \pm 9.8$ \\
\midrule
\multirow{6}{*}{LETKF}
& $10$  & $0.222 \pm 0.023$ & $273.1 \pm 4.0$ \\
& $20$  & $0.184 \pm 0.013$ & $294.7 \pm 14.9$ \\
& $30$  & $0.179 \pm 0.014$ & $308.1 \pm 13.5$ \\
& $\mathbf{40}$ & $\mathbf{0.175 \pm 0.014}$ & $\mathbf{389.6 \pm 5.4}$ \\
& $60$  & $0.170 \pm 0.016$ & $455.9 \pm 14.0$ \\
\bottomrule
\end{tabular}
\end{table}

\paragraph{Score-based baselines.}
SF and SSLS used the same neural architecture as MASF whenever applicable.
SF was run with $\mathrm{NFE}=500$.
For SSLS, we tuned the perturbation standard deviation, Langevin stepsize, and tolerance parameter.
The score-based baseline settings are summarized in Table~\ref{tab:score_baseline_tuning}.

\begin{table}[!htbp]
\centering
\caption{Score-based baseline sampling settings.}
\label{tab:score_baseline_tuning}
\small
\begin{tabular}{lll}
\toprule
Method & Parameter & Search values / selected value \\
\midrule
SF, SSLS & NFE & 500 \\
SSLS & perturbation std. & \{0.001, 0.05, 0.1, 0.2\} \\
SSLS & stepsize & \{0.0001, 0.001, 0.002, 0.003, 0.004\} \\
SF, SSLS & tolerance & \{10, 50, 100, 200\} \\
\bottomrule
\end{tabular}
\end{table}

For SSLS, the perturbation standard deviation controls the artificial perturbation scale.
The stepsize controls the Langevin update size, and the tolerance parameter clips or stabilizes large guided-score updates during sampling.

\section{Ablation Studies}
\label{app:ablation_studies}

This section provides ablation studies for MASF.
We first vary individual components of the selected configuration and then examine the effect of pretraining on score-based filtering methods.

\subsection{Component Ablations}
\label{app:ablation}

We conduct component ablations under grid-mask measurements on Kolmogorov-$128$.
Unless otherwise stated, each ablation changes one component from the selected MASF configuration while keeping all other hyperparameters fixed.
The selected configuration uses pretrained initialization, $N=10$ particles, $\mathrm{NFE}=150$, and the assimilation window from step $50$ to step $100$ with measurement gap $10$.
Table~\ref{tab:ablation_study} reports representative ablations.

\begin{table}[htb!]
\centering
\caption{
\textbf{Component ablations.}
Each block reports the current configuration for that phase and one-parameter changes from it.
}
\label{tab:ablation_study}
\scriptsize
\setlength{\tabcolsep}{6pt}
\renewcommand{\arraystretch}{1.03}
\resizebox{0.85\textwidth}{!}{%
\begin{tabular}{lllc}
\toprule
Parameter & From & To & RMSE ($\downarrow$) \\
\midrule

\multicolumn{4}{l}{\textbf{Normalization}} \\
\texttt{current\_config} & -- & --
& $\mathbf{0.151 \pm 0.019}$ \\
\texttt{same\_normalization} & \texttt{true} & \texttt{false}
& $0.229 \pm 0.017$ \\
\texttt{normalization\_form} & \texttt{affine} & \texttt{scale\_only}
& $0.180 \pm 0.013$ \\
\texttt{stats\_mode} & \texttt{standard} & \texttt{robust}
& $0.157 \pm 0.020$ \\
\texttt{stats\_update\_mode} & \texttt{moving} & \texttt{fixed}
& $\mathbf{0.151 \pm 0.018}$ \\
\texttt{stats\_update\_mode} & \texttt{moving} & \texttt{adaptive}
& $0.152 \pm 0.017$ \\
\midrule

\multicolumn{4}{l}{\textbf{Model selection}} \\
\texttt{current\_config} & -- & --
& $0.151 \pm 0.019$ \\
\texttt{model\_channels} & $32$ & $16$
& $0.165 \pm 0.020$ \\
\texttt{num\_res\_blocks} & $2$ & $1$
& $0.167 \pm 0.016$ \\
\texttt{num\_res\_blocks} & $2$ & $3$
& $0.159 \pm 0.016$ \\
\texttt{attention\_resolutions} & $8$ & $16$
& $\mathbf{0.147 \pm 0.015}$ \\
\midrule

\multicolumn{4}{l}{\textbf{Pretraining}} \\
\texttt{current\_config} & -- & --
& $\mathbf{0.146 \pm 0.015}$ \\
\texttt{batch\_size} & $16$ & $32$
& $0.169 \pm 0.021$ \\
\texttt{batch\_size} & $16$ & $64$
& $0.176 \pm 0.018$ \\
\texttt{epoch} & $400$ & $100$
& $0.187 \pm 0.019$ \\
\texttt{epoch} & $400$ & $200$
& $0.165 \pm 0.005$ \\
\texttt{epoch} & $400$ & $300$
& $0.163 \pm 0.018$ \\
\texttt{lr} & $0.0003$ & $0.0001$
& $0.184 \pm 0.024$ \\
\texttt{lr} & $0.0003$ & $0.0005$
& $0.159 \pm 0.021$ \\
\midrule

\multicolumn{4}{l}{\textbf{Sampling}} \\
\texttt{current\_config} & -- & --
& $0.146 \pm 0.015$ \\
\text{NFE} & $200$ & $150$
& $\mathbf{0.144 \pm 0.018}$ \\
\text{NFE} & $200$ & $250$
& $0.147 \pm 0.018$ \\
\texttt{s\_scale\_min} & $0.95$ & $0.9$
& $0.146 \pm 0.015$ \\
\texttt{s\_scale\_min} & $0.95$ & $1.0$
& $0.147 \pm 0.015$ \\
\texttt{s\_scale\_max} & $1.6$ & $1.4$
& $0.150 \pm 0.014$ \\
\texttt{s\_scale\_max} & $1.6$ & $1.5$
& $0.148 \pm 0.014$ \\
\texttt{s\_scale\_power} & $1.5$ & $1.0$
& $0.147 \pm 0.015$ \\
\texttt{s\_scale\_power} & $1.5$ & $1.25$
& $0.146 \pm 0.015$ \\
\texttt{g\_scale\_min} & $1.1$ & $1.0$
& $0.147 \pm 0.013$ \\
\texttt{g\_scale\_min} & $1.1$ & $1.2$
& $0.147 \pm 0.015$ \\
\texttt{g\_scale\_max} & $1.5$ & $1.4$
& $0.146 \pm 0.015$ \\
\texttt{g\_scale\_max} & $1.5$ & $1.6$
& $0.146 \pm 0.015$ \\
\texttt{g\_scale\_power} & $2.0$ & $1.5$
& $0.146 \pm 0.015$ \\
\midrule

\multicolumn{4}{l}{\textbf{Fine-tuning}} \\
\texttt{current\_config} & -- & --
& $\mathbf{0.144 \pm 0.018}$ \\
\texttt{lr} & $0.0001$ & $0.0003$
& $0.154 \pm 0.015$ \\
\texttt{online.full\_epoch} & $120$ & $80$
& $0.151 \pm 0.017$ \\
\texttt{online.full\_epoch} & $120$ & $90$
& $0.150 \pm 0.018$ \\
\texttt{online.full\_epoch} & $120$ & $100$
& $0.151 \pm 0.018$ \\
\texttt{online.full\_epoch} & $120$ & $110$
& $0.148 \pm 0.016$ \\
\bottomrule
\end{tabular}%
}
\end{table}

The ablation results show that normalization, model capacity, and pretraining are the most influential components.
Separate normalization or scale-only normalization degrades RMSE, and reducing model capacity or pretraining budget also worsens RMSE.
In contrast, sampling and online fine-tuning hyperparameters are relatively stable within the tested ranges.
Reducing NFE from $200$ to $150$ preserves RMSE, motivating the efficient configuration used in the main experiments.

\subsection{Effect of Pretraining}
\label{app:pretraining_effect}

We evaluate the effect of pretraining for score-based filtering methods under grid-mask and speed measurements.
For each method and measurement setting, we compare scratch training with $N=100$, scratch training with $N=10$, and pretrained initialization with $N=10$.
Wall-clock time excludes offline pretraining and measures only the online data assimilation procedure.
Each entry in Table~\ref{tab:pretraining_effect} is reported as RMSE / CSI / wall-clock time.

\begin{table}[htb!]
\centering
\caption{
\textbf{Effect of pretraining.}
We compare scratch training and pretrained initialization for score-based filtering methods under grid-mask and speed measurements.
Wall-clock time excludes offline pretraining and measures only the online data assimilation procedure.
}
\label{tab:pretraining_effect}
\scriptsize
\setlength{\tabcolsep}{4pt}
\renewcommand{\arraystretch}{1.05}
\resizebox{\textwidth}{!}{%
\begin{tabular}{lllcccc}
\toprule
Measurement & Method & Strategy & Ensemble & RMSE ($\downarrow$) & CSI ($\uparrow$) & Wall-clock (s) \\
\midrule
\multirow{9}{*}{Grid mask}
& \multirow{3}{*}{SF}
& Scratch & 100 & $0.81 \pm 0.05$ & $0.00 \pm 0.00$ & $670.6 \pm 0.9$ \\
& & Scratch & 10 & $0.87 \pm 0.03$ & $0.00 \pm 0.00$ & $100.9 \pm 0.7$ \\
& & Pretraining & 10 & $0.87 \pm 0.05$ & $0.00 \pm 0.00$ & $81.9 \pm 0.6$ \\
\cmidrule(lr){2-7}
& \multirow{3}{*}{SSLS}
& Scratch & 100 & $0.47 \pm 0.07$ & $0.34 \pm 0.10$ & $611.5 \pm 1.1$ \\
& & Scratch & 10 & $0.58 \pm 0.10$ & $0.26 \pm 0.09$ & $89.2 \pm 0.6$ \\
& & Pretraining & 10 & $0.39 \pm 0.04$ & $0.44 \pm 0.01$ & $81.3 \pm 12.0$ \\
\cmidrule(lr){2-7}
& \multirow{3}{*}{MASF}
& Scratch & 100 & $0.25 \pm 0.04$ & $0.66 \pm 0.04$ & $535.7 \pm 5.2$ \\
& & Scratch & 10 & $0.40 \pm 0.04$ & $0.47 \pm 0.03$ & $139.6 \pm 3.6$ \\
& & Pretraining & 10 & $\mathbf{0.15 \pm 0.02}$ & $\mathbf{0.78 \pm 0.03}$ & $\mathbf{118.8 \pm 2.5}$ \\
\midrule
\multirow{9}{*}{Speed}
& \multirow{3}{*}{SF}
& Scratch & 100 & $0.87 \pm 0.04$ & $0.01 \pm 0.01$ & $380.5 \pm 7.1$ \\
& & Scratch & 10 & $0.95 \pm 0.06$ & $0.00 \pm 0.00$ & $112.4 \pm 5.0$ \\
& & Pretraining & 10 & $0.89 \pm 0.09$ & $0.01 \pm 0.01$ & $86.8 \pm 4.9$ \\
\cmidrule(lr){2-7}
& \multirow{3}{*}{SSLS}
& Scratch & 100 & $0.83 \pm 0.06$ & $0.01 \pm 0.01$ & $358.9 \pm 2.2$ \\
& & Scratch & 10 & $0.87 \pm 0.09$ & $0.03 \pm 0.03$ & $97.1 \pm 0.9$ \\
& & Pretraining & 10 & $0.83 \pm 0.11$ & $0.02 \pm 0.03$ & $75.5 \pm 0.9$ \\
\cmidrule(lr){2-7}
& \multirow{3}{*}{MASF}
& Scratch & 100 & $0.46 \pm 0.06$ & $0.45 \pm 0.06$ & $648.5 \pm 1.4$ \\
& & Scratch & 10 & $0.67 \pm 0.08$ & $0.22 \pm 0.09$ & $171.0 \pm 12.5$ \\
& & Pretraining & 10 & $\mathbf{0.23 \pm 0.04}$ & $\mathbf{0.67 \pm 0.08}$ & $\mathbf{126.2 \pm 5.1}$ \\
\bottomrule
\end{tabular}%
}
\end{table}

Pretraining is most beneficial for MASF in the small-ensemble regime.
With $N=10$, pretraining improves MASF RMSE from $0.40$ to $0.15$ under grid-mask measurements and from $0.67$ to $0.23$ under speed measurements.
The corresponding CSI increases from $0.47$ to $0.78$ and from $0.22$ to $0.67$, respectively.
It also reduces the online wall-clock time compared with scratch training at the same ensemble size, from $139.6$ s to $118.8$ s for grid-mask measurements and from $171.0$ s to $126.2$ s for speed measurements.
In contrast, SF and SSLS show limited accuracy gains from pretraining, indicating that the main benefit comes from combining pretrained initialization with MASF's measurement-aware forward process.

\newpage 
\section{Additional Experimental Results}
\label{app:additional_results}

\subsection{Qualitative Posterior Samples}
\label{app:qualitative_samples}

Figures~\ref{fig:grid_mask}--\ref{fig:speed} show qualitative comparisons under the four measurement settings used in the main experiments.
Each figure presents the ground truth, measurements, and posterior samples from each method at selected time steps.

\begin{figure}[htb!]
    \centering
    \includegraphics[width=0.95\linewidth]{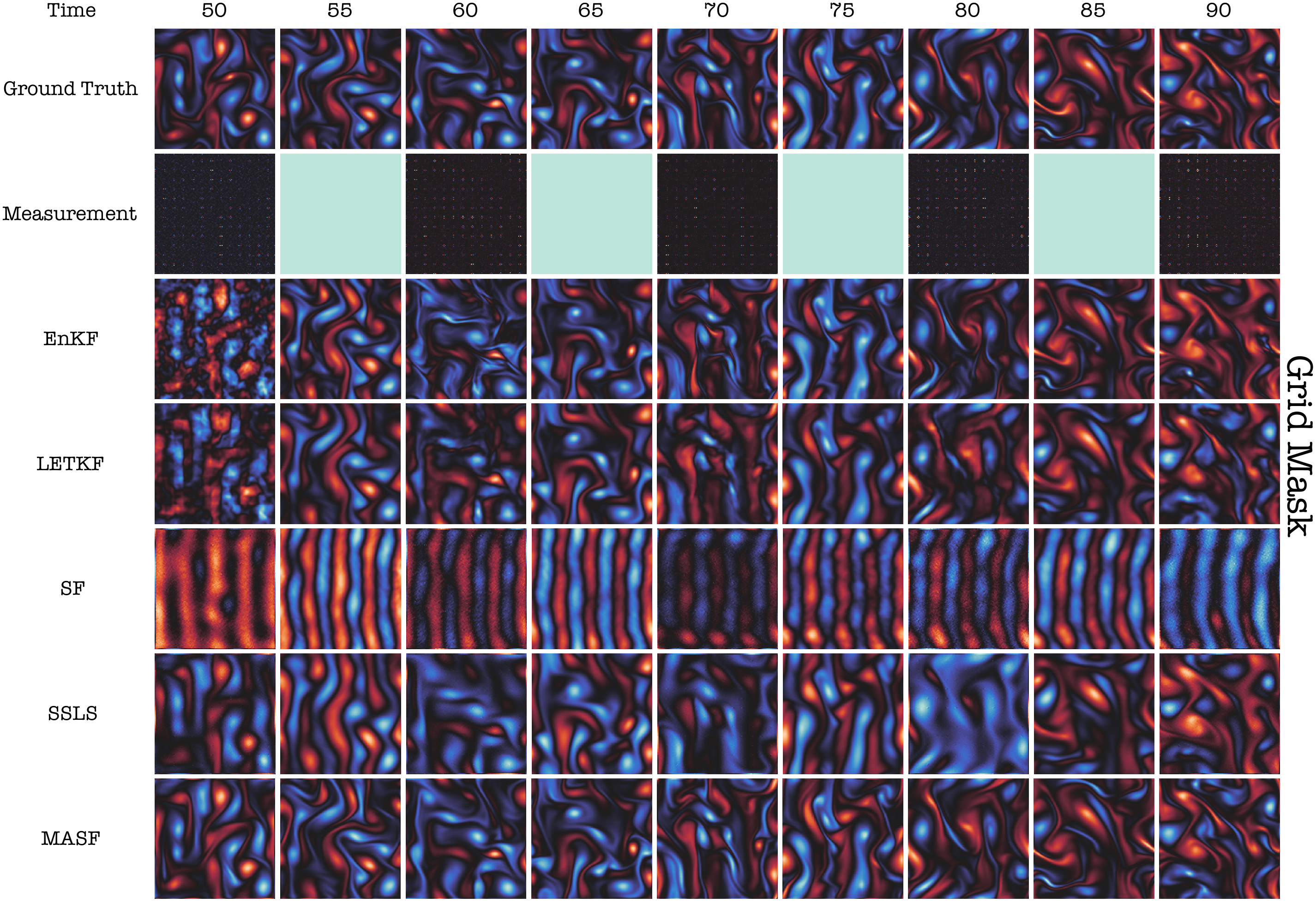}
    \caption{
    \textbf{Grid-mask measurements.}
    Qualitative comparison of measurements and posterior samples under grid-mask measurements at selected time steps.
    }
    \label{fig:grid_mask}
\end{figure}

\begin{figure}[htb!]
    \centering
    \includegraphics[width=0.95\linewidth]{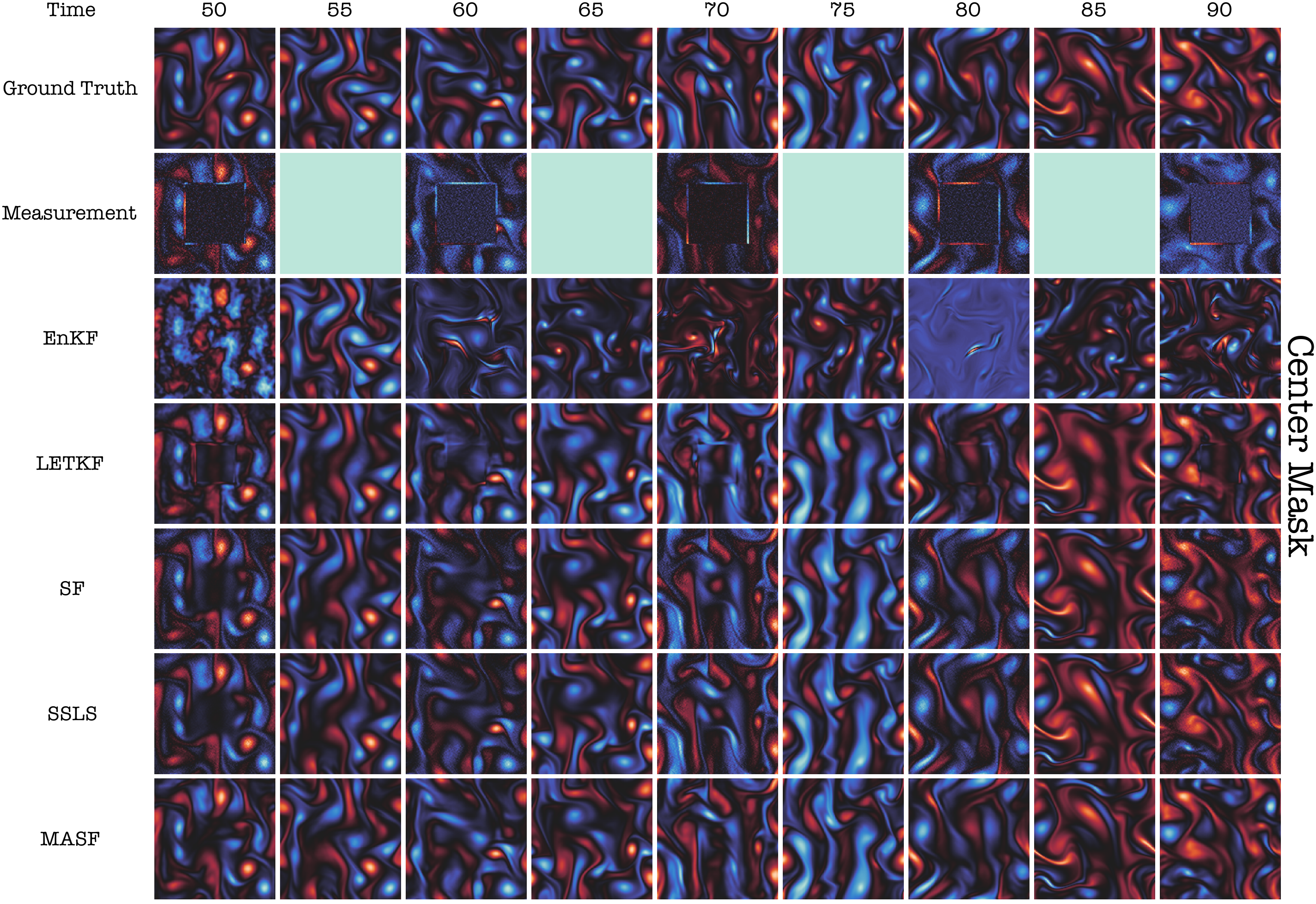}
    \caption{
    \textbf{Center-mask measurements.}
    Qualitative comparison of measurements and posterior samples under center-mask measurements at selected time steps.
    }
    \label{fig:center_mask}
\end{figure}

\begin{figure}[htb!]
    \centering
    \includegraphics[width=1\linewidth]{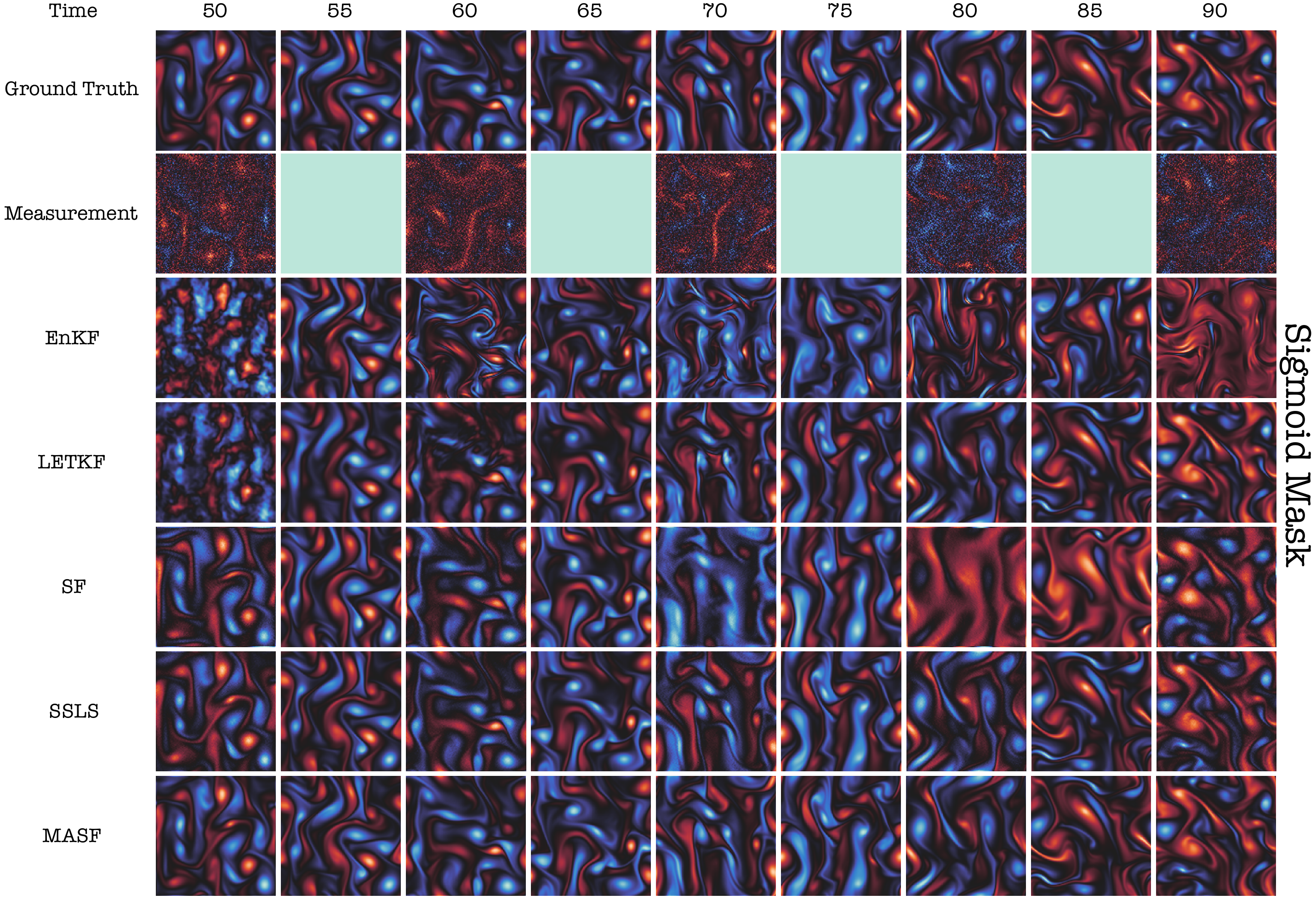}
    \caption{
    \textbf{Sigmoid measurements.}
    Qualitative comparison of measurements and posterior samples under sigmoid measurements at selected time steps.
    }
    \label{fig:sigmoid}
\end{figure}

\begin{figure}[t]
    \centering
    \includegraphics[width=1\linewidth]{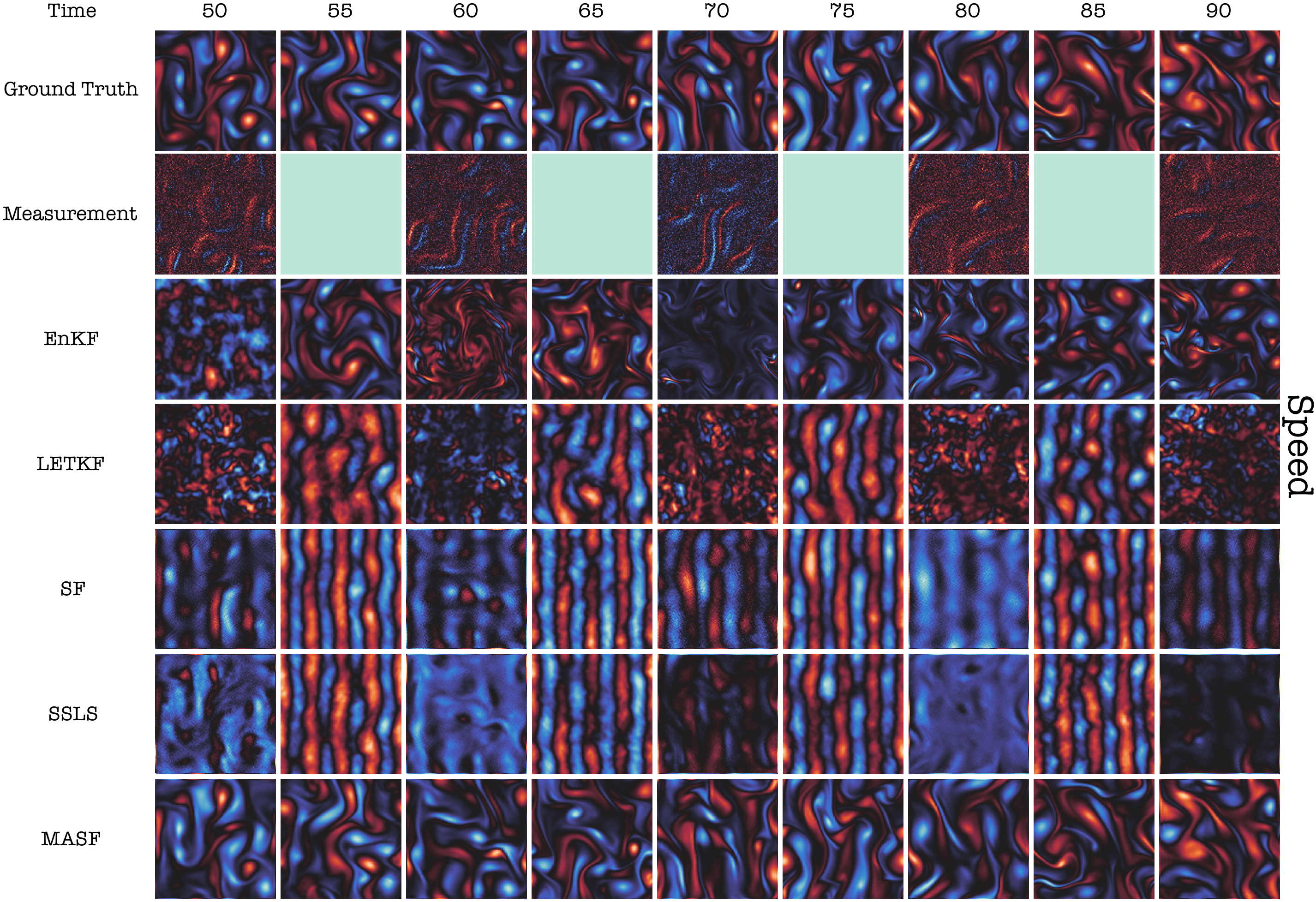}
    \caption{
    \textbf{Speed measurements.}
    Qualitative comparison of measurements and posterior samples under speed measurements at selected time steps.
    }
    \label{fig:speed}
\end{figure}

\clearpage
\newpage

\subsection{Ensemble-Size Sensitivity}
\label{app:ensemble_size_sensitivity}
We analyze how ensemble size affects accuracy and runtime.
Figure~\ref{fig:ensemble_rmse} shows that MASF already reaches a near-saturated accuracy regime with $N=10$ particles, and increasing the ensemble size yields only marginal RMSE improvement while increasing wall-clock time.
In contrast, SF does not improve meaningfully with larger ensembles, and SSLS improves only moderately at substantially higher wall-clock time.
LETKF gradually improves as the ensemble size increases, but its wall-clock time remains high even for small ensembles.
For EnKF, the RMSE improves sharply around $N=250$, so we use $N=250$ as the default EnKF setting in the main experiments.
For LETKF, we use $N=40$ as the default setting because the RMSE improvement beyond $N=40$ is marginal, whereas the wall-clock time continues to increase substantially.
These results show that MASF achieves a favorable runtime--accuracy trade-off by maintaining strong accuracy with a small ensemble.

\begin{figure}[htb!]
    \centering
    \includegraphics[width=1\linewidth]{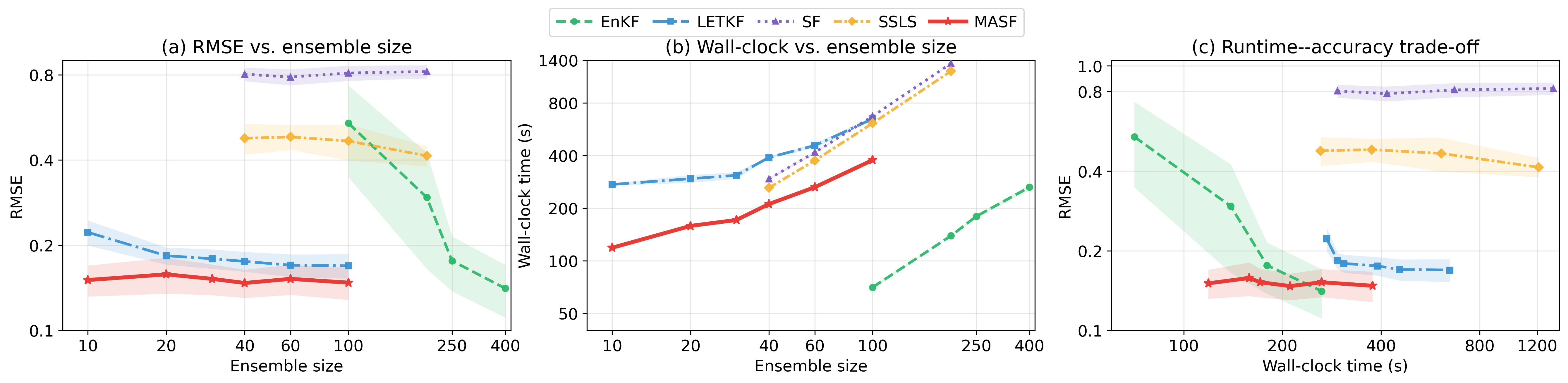}
    \caption{
    \textbf{Ensemble-size sensitivity.}
    Comparison of RMSE, wall-clock time, and runtime--accuracy trade-off as functions of ensemble size for the considered methods.}
    \label{fig:ensemble_rmse}
\end{figure}

\subsection{Extended Temporal Sensitivity Analysis}
\label{app:extended_sensitivity}
We provide additional sensitivity results under grid-mask measurements.
Figure~\ref{fig:app_sensitivity} evaluates sensitivity to temporal length and temporal gap.
MASF maintains low RMSE across these settings.
As the assimilation window becomes longer, MASF gradually improves or remains stable, suggesting that repeated measurement updates refine the state estimate.
When measurements become sparser, all methods degrade, but MASF is less sensitive to the temporal gap than the baselines.
Overall, MASF remains robust under longer horizons and sparse temporal measurements.

\begin{figure}[htb!]
\centering
\includegraphics[width=1\linewidth]{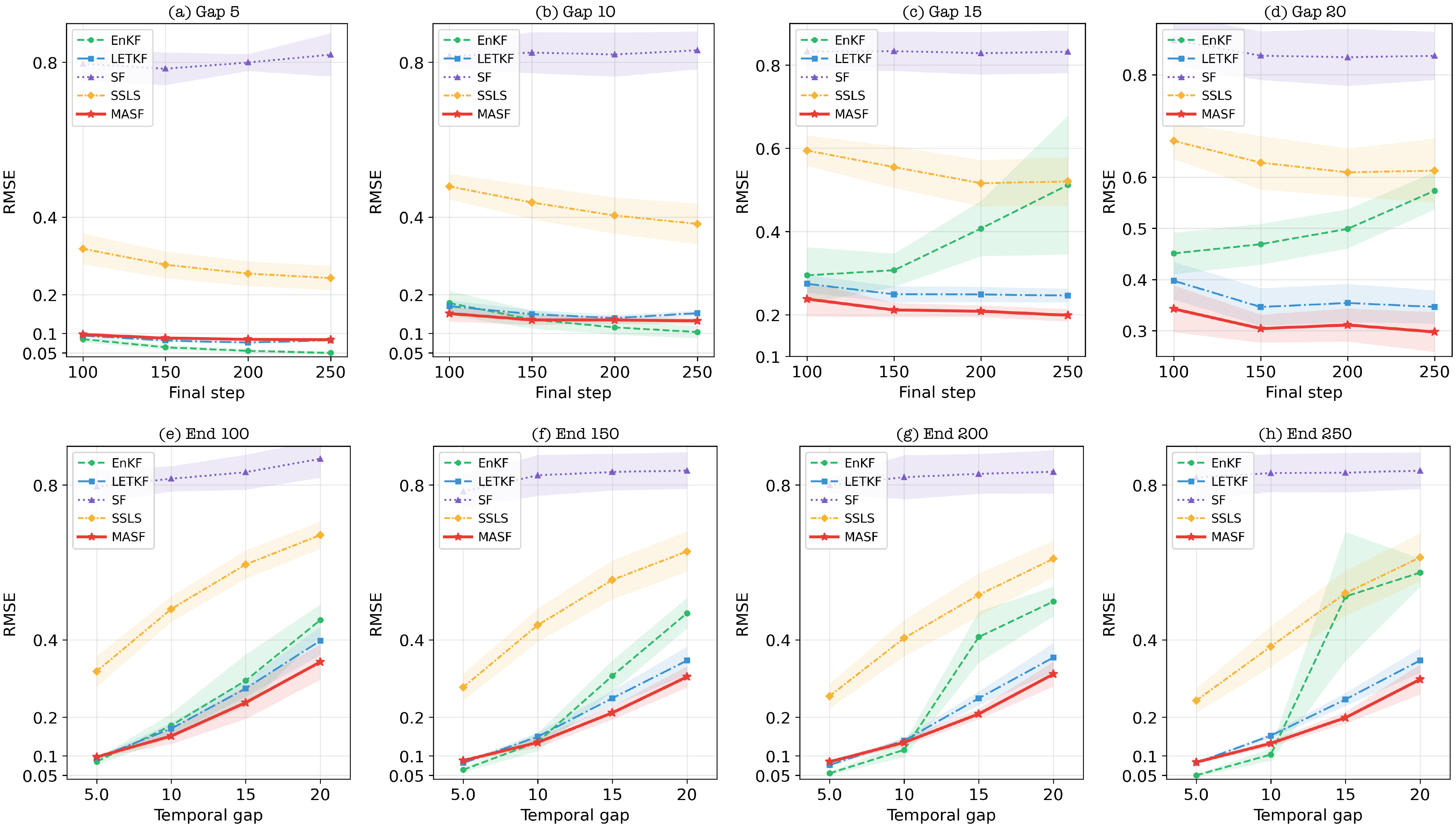}
        \caption{
    \textbf{Extended sensitivity analysis.}
    Sensitivity to temporal length and temporal gap under grid-mask measurements.
    }
    \label{fig:app_sensitivity}
\end{figure}


\end{document}